\newcommand{\minitab}[2][l]{\begin{tabular}{@{}#1}#2\end{tabular}}
\newcommand{\dom}[1]{\mathrm{dom}\,{#1}} 
\newcommand{\prox}{\mathrm{prox}} 
\newcommand{\T}[1]{\boldsymbol{\mathcal{\MakeUppercase{#1}}}}
\newcommand{\bA}{\mathbf{A}}
\newcommand{\bC}{\mathbf{C}}
\newcommand{\bD}{\mathbf{D}}
\newcommand{\bG}{\mathbf{G}}
\newcommand{\bH}{\mathbf{H}}
\newcommand{\bM}{\mathbf{M}}
\newcommand{\bS}{\mathbf{S}}
\newcommand{\bU}{\mathbf{U}}
\newcommand{\bV}{\mathbf{V}}
\newcommand{\bX}{\mathbf{X}}
\newcommand{\bZ}{\mathbf{Z}}
\newcommand{\ba}{\mathbf{a}}
\newcommand{\bg}{\mathbf{g}}
\newcommand{\bh}{\mathbf{h}}
\newcommand{\br}{\mathbf{r}}
\newcommand{\bx}{\mathbf{x}}
\newcommand{\by}{\mathbf{y}}
\newcommand{\bu}{\mathbf{u}}
\newcommand{\N}{\mathbb{N}}
\newcommand{\R}{\mathbb{R}}
\newcommand{\Lc}{\mathcal{L}}
\newcommand{\rmvec}{\textrm{vec}}
\DeclareMathOperator*{\argmin}{argmin}
\newtheorem{assumption}{Assumption}
\begin{document}

\title{Regularized and Smooth Double Core Tensor Factorization for Heterogeneous Data}

\author{\name Davoud Ataee Tarzanagh$^{\star}$  \email tarzanagh@ufl.edu \\
       \name George Michailidis$^{\dagger}$ \email gmichail@ufl.edu \\
       \addr
      $^{\star}$Department of Electrical Engineering and Computer Science, University of Michigan, Ann Arbor, MI 48109, USA\\
      $^{\dagger}$Department of Statistics, University of Florida, Gainesville, FL 32603, USA\\
      $^{\star,\dagger}$University of Florida Informatics Institute, Gainesville, FL 32611, USA}

\editor{Jie Peng}

\maketitle

\begin{abstract}
We introduce a general tensor model suitable for data analytic tasks for {\em heterogeneous} datasets, wherein there are joint low-rank structures within groups of observations, but also discriminative structures across different groups. To capture such complex structures, a double core tensor (DCOT) factorization model is introduced together with a family of smoothing loss functions. By leveraging the proposed smoothing function, the model accurately estimates the model factors, even in the presence of missing entries. A linearized ADMM method is employed to solve regularized versions of DCOT factorizations, that avoid large tensor operations and large memory storage requirements. Further, we establish theoretically its global convergence, together with consistency of the estimates of the model parameters. The effectiveness of the DCOT model is illustrated on several real-world examples including image completion, recommender systems, subspace clustering, and detecting modules in heterogeneous Omics multi-modal data, since it provides more insightful decompositions than conventional tensor methods.
\end{abstract} 
\vspace{.5cm}
\begin{keywords}Double core tensor factorization, heterogeneity, smoothing loss functions, regularization, ADMM 
\end{keywords}
\section{Introduction}
Tensor factorizations have received increasing attention over the last decade, due to new technical developments, as well as novel applications \citep{kolda2009tensor,cichocki2015tensor,bi2020tensors}. Popular decompositions include Tucker \citep{tucker64extension}, Canonical Polyadic (CP) \citep{carroll1970analysis}, higher-order SVD (HOSVD) \citep{de2000multilinear}, tensor train (TT) \citep{oseledets2011tensor}, and tensor SVD (t-SVD) \citep{kilmer2013third}. Nevertheless, there is limited knowledge about the properties of the Tucker and CP ranks; further,
computing such ranks has been shown to be NP-complete \citep{haastad1990tensor}. The ill-posedness of the best low-rank approximation of a tensor was investigated in \citet{de2008tensor}, while upper and lower bounds for tensor ranks have been studied in \citet{alexeev2011tensor}. In fact, determining or even bounding the rank of an arbitrary tensor is quite difficult in contrast to the matrix rank \citep{allman2013tensor}.

In many data analysis applications where tensor decompositions are extensively used, the multi-dimensional data exhibit (i) heterogeneity, (ii) missing values, and (iii) sparse representations. The literature to date has addressed the 2nd and 3rd issues, as briefly discussed next. Missing values are ubiquitous in link prediction, recommender systems, chemometrics, image and video analytics applications. To that end, tensor completion methods were developed to address this issue \citep{liu2012tensor,kressner2014low,zhao2015bayesian,zhang2014novel,song2017tensor,tarzanagh2018fast}. The standard assumption underpinning such methods is that entries are missing at random and that the data admit low-rank decompositions. However, on many occasions additional \textit{regularity} information is available, which may aid in improving the accuracy of tensor completion methods, especially in the presence of the large number of missing entries. For example, \citep{narita2012tensor} proposed two regularization methods called ``within-mode regularization" and ``cross-mode regularization", to incorporate auxiliary regularity information in the tensor completion problems. The key idea is to construct within-mode or cross-mode regularity matrices and incorporate them as smooth regularizers and then combine them with a Tucker decomposition to solve the tensor completion problem. A similar idea was also explored in \citet{bahadori2014fast,chen2014multilinear,ge2016taper}. 

Multi-way data often admit sparse representations. Due to the equivalence of the constrained Tucker model and the Kronecker representation of a tensor, the latter can be represented by separable sparse Kronecker dictionaries. A number of Kronecker-based dictionary learning methods have been proposed in literature  \citep{hawe2013separable,qi2018multi,shakeri2018minimax,bahri2018robust} and associated algorithms. Further, recent work \citep{shakeri2018minimax} shows that the sample complexity of dictionary learning for tensor data can be significantly lower than that for unstructured data, also supported by empirical evidence \citep{qi2018multi}.

However, in a number of applications, \textit{heterogeneity} is also present in the data. For example, in context-aware recommender systems that predict users' preferences, the user base exhibits heterogeneity due to different background and other characteristics. Note that such information can be a priori extracted from the available data. Similarly, image data exhibit heterogeneity due to differences in lighting and posing, which again can be extracted a priori from available metadata and utilized at analysis time. Analogous issues also are present in time varying data, wherein strong correlations can be seen across subsets of time points. A number of motivating examples are discussed in detail in Section~\ref{DCOT:examples}, and how this paper addresses heterogeneity by adding an additional core in the tensor decomposition and applying a new tensor smoothing function. Note that a standard low-rank factorization of the tensor data would not suffice, since the extracted factors would not accurately reflect the joint structure across modes. Hence, to address heterogeneity in tensor factorizations, we introduce a novel decomposition of the core tensor into  global homogeneous and local (subject--specific) heterogeneous cores. The latter encodes a priori available information on the \textit{presence} and \textit{structure} of heterogeneity on the datasets under consideration.

Hence, the \textit{key contributions} of this work are:
\begin{itemize}
\item[I.] The development of a novel \textit{supervised} tensor decomposition, coined \textit{Double Core Tensor Decomposition} (DCOT), wherein the core tensor comprises of the superposition of \textit{homogeneous} and \textit{heterogeneous} (subject specific) cores. This decomposition captures local structure present due to variations in similarities across subjects/objects in the data.
\item[II.] The DCOT model is enhanced with a new tensor smoothing loss function. Specifically, a similarity function is introduced to capture neighborhood information from the data tensor to improve the accuracy of the decompositions, as well as the convergence rate of the algorithm employed for obtaining the decomposition. We show both theoretical and computational advantages of the proposed smoothing technique in comparison to the generalized CP (GCP) models \citep{bi2018multilayer,hong2018generalized}. 
\item[III.] A new linearized ADMM method for the DCOT model is developed that can handle both non-convex constraints and objectives and its global convergence under the posited tensor smoothing loss function is established. To the best of our knowledge, despite the wide use and effectiveness of ADMM for tensor factorization tasks \citep{liu2012tensor,zhao2015bayesian,chen2013simultaneous,wang2015rubik,bahri2018robust} its global convergence does not seem to be available for tensor problems.
\end{itemize}
Finally, we illustrate the implementation of DCOT and the speed and robustness of the proposed linearized ADMM algorithm on a number of synthetic datasets, as well as analytic tasks involving large scale heterogeneous tensor data.

\subsection{Related Literature}
This work is related to a broad range of literature on tensor analysis. For example, tensor factorization approaches focus on the extraction of low-rank structures from noisy tensor observations \citep{zhang2001rank,richard2014statistical,anandkumar2014tensor}. Correspondingly, a number of methods have been proposed and analyzed under either deterministic or random Gaussian noise designs, such as maximum likelihood estimation \citep{richard2014statistical}, HOSVD \citep{de2000multilinear}, and higher-order orthogonal iteration (HOOI) \citep{de2000best}. Since non-Gaussian-valued tensor data also commonly appear in practice, \citet{chi2012tensors,hong2018generalized} considered the generalized tensor decomposition and introduced computational efficient algorithms. However, theoretical guarantees for many of these procedures and the statistical limits of the smooth tensor decomposition still remain open.

Our proposed framework includes the topic of tensor compression and dictionary learning. Various methods, such as convex regularization \citep{tomioka2013convex,raskutti2019convex}, alternating minimization \citep{zhou2013tensor,hawe2013separable,qi2018multi,han2020optimal,liu2012tensor,tarzanagh2018fast}, and  (adaptive) gradient methods \citep{han2020optimal,kolda2020stochastic,nazari2019dadam,nazari2020adaptive} were introduced and studied. In addition, tensor block models \citep{smilde2000multiway}, supervised tensor learning \citep{tao2005supervised,wu2013supervised,lock2018supervised}, multi-layer tensor factorization \citep{bi2018multilayer,tang2020individualized}, coupled matrix and tensor factorizations \citep{banerjee2007multi,yilmaz2011generalised,acar2011all} are important topics in tensor analysis and have attracted significant attention in recent years. Departing from the existing results, this paper, to the best of our knowledge, is the first to give a unified treatment for a broad range of smooth and heterogeneous tensor estimation problems with both statistical optimality and computational efficiency.

This work is also related to a substantial body of literature on structured matrix factorization, wherein the goal is to estimate a low-rank matrix based on a limited number of observations. Specific examples on this topic include group-specific matrix factorization \citep{lock2013joint,bi2017group}, local matrix factorization \citep{lee2013local}, and smooth matrix decomposition \citep{dai2019smooth}. Despite similarities of our consistency analysis to \cite{dai2019smooth}, their results cannot be directly generalized to tensor problems for many reasons. First, many basic matrix concepts or methods cannot be directly generalized to high-order ones \citep{hillar2013most}. Naive generalization of matrix concepts such as kernels, operator norm, and singular values are possible, but most often computationally NP-hard. Second, tensors have more complicated algebraic structure than matrices. As what we will illustrate later, one has to simultaneously handle all factors matrices and the core tensors with distinct dimensions in the consistency and global convergence analyses. To this end, we develop new technical tools for tensor algebra and Kronecker smoothing functions; see, e.g.,  Definition~\ref{def:kronsim}. Additional technical issues related to generalized tensor estimation and the connections of our consistency bounds with prior results in the literature are addressed in Section~\ref{sec:consist}.

The remainder of the paper is organized as follows: the DCOT formulation is presented in Section~\ref{DCOT:formulation} together with illustrative motivating examples. The linearized ADMM algorithm and its convergence properties are presented in Section~\ref{sec:optimization}. The numerical performance of the DCOT model together with applications are discussed in Section~\ref{sec:result}. Section~\ref{sec:conc} concludes the paper. Proofs and other technical results are delegated to the Appendix. 
 
\textbf{Notation.}  Any notation is defined when it is used, but for reference the reader may also find it
summarized in Table~\ref{table_notation2}. 
\section{A Double Core Tensor Factorization (DCOT)}\label{DCOT:formulation}
We start by introducing the DCOT model.  
\begin{definition}[DCOT]\label{def:dcot}
Given an $N$-way tensor $\T{Z}\in\R^{I_1\times I_2 \times \cdots \times I_N}$ with $M$ subjects (units) each containing $m_{\pi}$ subgroups for $m=1, \cdots, M$, its \textnormal{DCOT} decomposition is given by
\begin{align}\label{eq:dcot}
\nonumber
\T{Z} &= \sum\limits_{r_1 = 1}^{R_1} 
 \cdots \sum\limits_{r_N = 1}^{R_N}{ \left(g_{r_1 \cdots r_N}+h_{r_1  \cdots r_N}\right)\bu^{(1)}_{r_1} \circ \cdots \circ \bu^{(N)}_{r_N} } \\
  &= \left(\T{G}+  \T{H}\right) \times_1 \bU^{(1)}\times_2 \bU^{(2)} \cdots \times_N \bU^{(N)},
\end{align}
where $\bU^{(n)} =[\bu_1^{(n)},\bu_2^{(n)},\cdots,\bu_{R_n}^{(n)}] \in \R^{I_n\times R_n}$, is the $n$-th factor matrix consisting of latent components $\bu_r^{(n)}$; $\T{G} \in \R^{R_1 \times R_2 \times \cdots \times R_N}$ is a global core tensor reflecting the connections (or links) between the latent components and factor matrices; and $\T{H}$ is another core tensor reflecting the joint connections between the latent components in each subject. Specifically, for each subject $m \in [M]$, we have 
$$\T{H}_{m_1} =  \T{H}_{m_2}= \cdots = \T{H}_{m_\pi}.$$
\end{definition}

In Definition~\ref{def:dcot}, the $N$-tuple $(R_1, R_2, \dots , R_N)$ with $R_n = \text{rank}(\bZ_{(n)})$ is called the multi-linear rank of $\T{Z}$. For a core tensor of minimal size, $R_1$ is the column rank (the dimension of the subspace spanned by mode-1 fibers), $R_2$ is the row rank (the dimension of the subspace spanned by mode-2 fibers), and so on. An important difference from the matrix case is that the values of $R_1,R_2,\cdots,R_N$ can be different for $N\geq3$. Note that similar to the Tucker decomposition \citep{tucker64extension}, \textnormal{DCOT} factorization is said to be independent, if each of the factor matrices has full column rank; a \textnormal{DCOT} decomposition is said to be orthonormal if each of the factor matrices has orthonormal columns. We also note that decomposition \eqref{eq:dcot} can be expressed in a matrix form as:
\begin{eqnarray}\label{eq:dcotmode}
 \bZ_{(n)} & =&  \bU^{(n)} (\bG_{(n)}+\bH_{(n)}) (\bigotimes_{k \neq n} \bU^{(k)})^\top.
\end{eqnarray}

The DCOT model formulation provides a generic tensor decomposition that encompasses many other popular tensor decomposition models. Indeed, when $\T{H}=0$ and $\bU^{(n)}$ for $ n=1, 2, \cdots, N$ are orthogonal, \eqref{eq:dcot} corresponds to HOSVD. The CP decomposition can also be considered as a special case of the DCOT model with super-diagonal core tensors. 

In the DCOT model, we assume that subjects can be categorized into subgroups, where tensor components within the same subgroup share similar characteristics and are dependent on each other. For subgrouping, we can incorporate prior information. For example, in recommender system we may use users' demographic information, item categories and functionality, and contextual similarity. If this kind of information is not available, one can use the missing pattern of the tensor data, or the number of records
from each user and on each item \citep{salakhutdinov2007restricted}. In more general situations, clustering methods such as the $k$-means may be used to determine the subgroups \citep{wang2010consistent,fang2012selection}.

Next, we propose an estimation method associated with the DCOT model. Let $\T{X}$ be a data tensor that admits a DCOT decomposition and $F(s^h, \T{X};\T{Z})$ be a tensor smoothing loss function (introduced in Section~\ref{sec:smooth}) that depends on an unknown parameter $\T{Z}$ and regulated by a smoothing function $s^h$ (details discussed in Section~\ref{sec:smooth}). To estimate $\T{Z}$ from data, we propose the ``DCOT"
estimator given by 
\begin{equation}\label{eq:dcotestimation}
\hat{\T{Z}} := \big({\hat{\T{G}} + \hat{\T{H}}}\big) \times_1 \hat{\bU}^{(1)} \times_2 \hat{\bU}^{(2)}\cdots \times_N \hat{\bU}^{(N)}, 
\end{equation}
where $\hat{\T{Z}}$ is determined by solving the following penalized optimization problem: 
\begin{eqnarray}\label{eq:loss:general}
\nonumber
\min_{\T{T}} &&~ F(s^h,\T{X}; \T{Z}) + \lambda_1 J_1(\T{G}) +\lambda_2 J_2(\T{H})+  \sum_{n=1}^N \lambda_{3,n} J_{3,n}(\bU^{(n)}), \\
 \text{s.t.}&& ~ \T{Z} = \big({{\T{G}} + {\T{H}}}\big) \times_1 {\bU}^{(1)} \times_2 {\bU}^{(2)} \cdots \times_N {\bU}^{(N)}, \qquad \T{T}  \in   \Gamma(\T{T}).
\end{eqnarray}
In the formulation of the problem, $\T{T}=(\T{Z},\T{G},\T{H}, \bU^{(1)}, \bU^{(2)}, \cdots, \bU^{(N)})$ denotes the collection of optimization variables; $\{J_1(.), J_2(.),  J_{3,1}(.),   J_{3,2}(.), \cdots,  J_{3,N}(.)\}$ are penalty functions; $\{\lambda_1, \lambda_2, \lambda_{3,1},\lambda_{3,2}, \cdots, \lambda_{3,N} \}$ are penalty tuning parameters; and $\Gamma(\T{T})$ is the parameter space for $\T{T}$. 

Throughout, we impose the following set of assumptions for Problem~\eqref{eq:loss:general}.
\begin{assumption} \label{AssumptionsA} 
\rm{(i)} $J_{1} : \R^{R_{1} \times R_{2} \cdots  \times R_{N} } \rightarrow \left(-\infty , \infty\right]$, $J_{2} : \R^{R_{1} \times R_{2} \cdots  \times R_{N} } \rightarrow \left(-\infty , \infty\right]$, and
$J_{3,n} : \R^{N_{n} \times R_{n}} \rightarrow \left(-\infty , \infty\right]$ 
 are proper and lower semi-continuous such that $\inf_{\R^{R_{1} \times R_{2} \cdots  \times R_{N} }}J_{1} > -\infty$, $\inf_{\R^{R_{1} \times R_{2} \cdots  \times R_{N} }}J_{2} > -\infty$, and 
$\inf_{\R^{I_{n} \times R_{n}}} J_{3,n} > -\infty$ for $n=1,2 , \cdots, N$. 
 \\
\qquad $\rm{(ii)}$ $F(s^h,\T{X}; \T{Z}): \R^{I_{1} \times I_{2} \times \cdots \times I_{N}} \rightarrow \R$ is differentiable 
            		and $\inf_{\R^{I_{1} \times I_{2} \times, \cdots, \times I_{N}}} F > -\infty$.
\\
\qquad $\rm{(iii)}$ The gradients $\nabla F(s^h,\T{X}; \T{Z})$ is Lipschitz continuous with moduli $L_F$, i.e.,
                \begin{equation*}
\|\nabla F(s^h,\T{X}; \T{Z}^1) - \nabla F(s^h,\T{X}; \T{Z}^2)\|_F^2 \leq L_F\| \T{Z}^1 - \T{Z}^2 \|_F^2, \quad \textnormal{for all} \quad \T{Z}^{1}, \T{Z}^{2}.
  \end{equation*} 
\end{assumption}

We need to recall the fundamental proximal map which is at the heart of the DCOT algorithm. 
Given a proper and lower semicontinuous function $J: \R^{d} \rightarrow \left(-\infty , \infty\right]$, the proximal mapping associated 
	with $J$ is defined by
	\begin{equation} \label{D:ProximalMap}
        \prox_{t}^{J}\left(p\right) := \argmin \left\{ J\left(q\right) + \frac{t}{2} \|q - p\|^{2} : 
        \; q \in \R^{d} \right\}, \quad \left(t > 0\right).
    \end{equation}
The following result can be found  in \citet{rockafellar2009variational,bolte2014proximal}.
\begin{proposition} \label{P:WellProximal} Let Assumption~\ref{AssumptionsA}(i) hold. Then, for every $t \in \left(0 ,\infty\right)$, the set $\text{prox}_{t}^{J}\left(u\right)$ is nonempty and compact.
\end{proposition}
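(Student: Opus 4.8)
The plan is to apply the Weierstrass extreme value theorem to the objective defining the proximal map, after verifying that it is proper, lower semicontinuous, and coercive. Write $\varphi(q) := J(q) + \frac{t}{2}\|q-p\|^2$, so that $\prox_t^J(p) = \argmin_{q \in \R^d} \varphi(q)$ for an arbitrary fixed point $p$. First I would check that $\varphi$ is proper: since $J$ is proper by Assumption~\ref{AssumptionsA}(i), there is some $q_0$ with $J(q_0) < \infty$, and the quadratic term is finite everywhere, so $\dom{\varphi} = \dom{J} \neq \emptyset$, while $\varphi$ never takes the value $-\infty$ because $\inf J > -\infty$. Lower semicontinuity of $\varphi$ then follows since it is the sum of the lsc function $J$ and the continuous function $q \mapsto \frac{t}{2}\|q-p\|^2$.

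The key step is coercivity. Let $c := \inf_{\R^d} J > -\infty$, which exists by Assumption~\ref{AssumptionsA}(i). Then for every $q$,
\begin{equation*}
\varphi(q) \geq c + \frac{t}{2}\|q - p\|^2,
\end{equation*}
and since $t > 0$ the right-hand side tends to $+\infty$ as $\|q\| \to \infty$; hence $\varphi(q) \to +\infty$ as $\|q\| \to \infty$, i.e. $\varphi$ is coercive (level-bounded). A proper, lsc, coercive function on $\R^d$ attains its infimum by the Weierstrass theorem, so $\mu := \inf_{\R^d} \varphi$ is finite and $\argmin \varphi$ is nonempty; this yields nonemptiness of $\prox_t^J(p)$.

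For compactness, observe that the argmin set coincides with the sublevel set $S := \{q \in \R^d : \varphi(q) \leq \mu\}$, because $\mu$ is the minimum value of $\varphi$. Since $\varphi$ is lower semicontinuous, $S$ is closed; since $\varphi$ is coercive, $S$ is bounded (every sublevel set of a coercive function is bounded). By the Heine--Borel theorem a closed, bounded subset of $\R^d$ is compact, so $\prox_t^J(p)$ is compact.

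I would not expect a genuine obstacle here: the argument is the standard Weierstrass/coercivity recipe, and the only substantive input is the boundedness-below hypothesis $\inf J > -\infty$, which is precisely what turns the quadratic penalty into a coercivity bound. The one point meriting care is keeping distinct the two roles of Assumption~\ref{AssumptionsA}(i)---properness, which guarantees the minimization is over a nonempty effective domain, and boundedness below, which drives coercivity---so that the invocation of the Weierstrass theorem is fully justified.
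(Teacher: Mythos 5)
Your proof is correct and is essentially the standard argument that the paper defers to by citation (Rockafellar--Wets; Bolte et al.): the paper gives no proof of its own, and the cited sources establish well-posedness of the proximal map exactly as you do, via coercivity of $q \mapsto J(q) + \tfrac{t}{2}\|q-p\|^{2}$ (driven by $\inf J > -\infty$), lower semicontinuity, and the Weierstrass theorem, with compactness following from the closedness and boundedness of the minimizing sublevel set. No gaps; your separation of the two roles of Assumption~\ref{AssumptionsA}(i) (properness for nonemptiness of the domain, boundedness below for coercivity) is exactly the right bookkeeping.
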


We note that $\text{prox}_{t}^{J}$ is a set-valued map. When $J$ is the indicator function of a nonempty and closed set $\Omega$, the proximal map reduces to the projection operator onto $\Omega$. It is also worth to mention that Assumptions (i)-(iii) make \eqref{eq:loss:general} have a solution and make the proposed linearized ADMM being well defined. Besides these assumptions, many practical tensor factorization functions including ones provided in Subsection~\ref{DCOT:examples} satisfy the Kurdyka--{\L}ojasiewicz property (see, Definition~\ref{eq:loss:general}) which is required to obtain a globally convergent linearized ADMM.

\begin{remark} Note that separate identification of $\T{G}$ and $\T{H}$ is not required; the DCOT estimator is designed to automatically recover the combination $\hat{\T{G}} + \hat{\T{H}}$ that leads to optimal prediction of $\T{G}+\T{H}$. Nevertheless, such identification can be beneficial from both a convergence and interpretation viewpoint. To that end, we show in Section~\ref{sec:optimization} that Assumption \ref{AssumptionsA} provides sufficient conditions to achieve identifiable cores based on a linearized multi-block ADMM approach.
\end{remark}

\subsection{Smoothing Loss Functions for DCOT Factorization}\label{sec:smooth}

In this section, we introduce a new class of loss functions for DCOT factorization that in addition to the unit/subject information reflected in the core $\T{H}$, it incorporates more nuanced information in the form of similarities between tensor fibers for each unit/subject under consideration. 
To that end, following common approaches in non-parametric statistics \citep{wand1994kernel,tibshirani1987local,fan2018local,lee2013local,dai2019smooth}, we define the smoothing function used in this work. Our proposed smoothing approach is different from these studies, since we rely on joint label information and multi-dimensional kernels.

\subsection{A Smoothing Function Based on Tensor Similarity and Tensor Labels}\label{sec:sm:loss}

The proposed loss function incorporates both tensor similarity information, as well as subgroup or label information. For example, in dictionary learning problems, in addition to using similarity information between data points, we also associate label information (0-1 label) with each dictionary item  to enforce discriminability in sparse codes during the dictionary learning process \citep{jiang2013label}. In recommender systems, the proposed loss function is constructed based on the closeness between continuous covariates in addition to a user-item specific label tensor \citep{frolov2017tensor,dai2019smooth}. In many imaging applications, additional variables of interest are available for multiway data objects. For instance, \citet{kumar2009attribute} provide several attributes for the images in the faces in the Wild database, which describe the individual (e.g., gender and race) or their expression (e.g., smiling/not smiling). It is shown in \citet{lock2018supervised} that incorporating such additional variables can improve both the accuracy and interpretation of the results.  

\begin{definition}[Kronecker Similarity]\label{def:kronsim}
Given an $N$-way data tensor $\T{X}$, assume there is additional information on each subject in the data, encoded by an $N$-way tensor $\T{Y}$. Let $s_{i_n,j_n}^h$ denote pairwise similarities between fibers $i_n$ and $j_n$ of $\T{Y}$. Each $s_{i_n,j_n}^h$ indicates how well fibers of $\T{Y}$ represent fibers of $\T{X}$, i.e., the smaller the value of $s_{i_n,j_n}^h$ is, the better $\T{Y}$ represents $\T{X}$. Under this setting, we define the Kronecker-product similarity as
\begin{eqnarray}\label{eq:hypergraph-miss}
s_{i_1 \cdots i_N,j_1\cdots j_N}^h= s_{i_1,j_1}^h c_{i_1,j_1} \cdots s_{i_N,j_N}^h c_{i_N,j_N}.
\end{eqnarray}
Here, $h > 0$ is the window size; each $s_{i_n,j_n}^h$ measures the distance between fibers $(i_n,j_n)$ for $n=1, \cdots, N$; and $c_{i_n,j_n}$ is a label consistent which is set to a value close to $1$ if fibers $i_n$ and $j_n$ share the same labels or belong to the same subgroups and close to $0$, otherwise. We note that a large value of $h$ implies that $s_{i_1 \cdots i_N,j_1\cdots j_N}^h$ has a wide range, while a small $h$ corresponds to a narrow range for $s_{i_1 \cdots i_N,j_1\cdots j_N}^h$. 
\end{definition}

When appropriate vector-space representations of fibers of $\T{Y}$ are given, we can compute similarities using a predefined function. Such functions are the encoding error -$s_{i_n,j_n}^h = K(h^{-1} \|\bA \by_{i_n} -\by_{j_n}\|_2)$ for an appropriate $\bA$-, the Euclidean distance -$s_{i_n,j_n}^h = K(h^{-1} \|\by_{i_n} - \by_{j_n} \|_2)$-, or a truncated quadratic -$s_{i_n,j_n}^h = \min \{ \xi , K(h^{-1} \|\by_{i_n} - \by_{j_n} \|_2)\}$-, where $\xi$ is some constant and $K$ denotes a kernel function \citep{tibshirani1987local}. However, we may be given or can compute similarities without having access to vector-space representations; such instances include edges in a social network graph, subjective pairwise comparisons between images, or similarities between sentences computed via a string kernel. Finally, we may learn similarities by using metric learning methods \citep{xing2003distance,davis2007information,elhamifar2015dissimilarity}.

\subsubsection{Generalized smoothing tensor loss functions}

Next, we define smoothing tensor loss functions by looking at the statistical likelihood of a model for a given data tensor. Assume that we have a parameterized probability density function or probability mass function that gives the likelihood of each entry, i.e.,
\begin{displaymath}
  x_{i_1 \cdots i_N} \sim p(x_{i_1 \cdots i_N}|\theta_{i_1 \cdots i_N}), ~~\text{where} ~~ \ell(\theta_{i_1 \cdots i_N}) = z_{i_1 \cdots i_N}.
\end{displaymath}
Here, $x_{i_1 \cdots i_N}$ is an observation of a random variable, and $\ell(\cdot)$ is an invertible \emph{link function} that connects the model parameters $z_{i_1 \cdots i_N}$ and the corresponding \emph{natural parameters} of the distribution, $\theta_{i_1 \cdots i_N}$. 

Our goal is to obtain the maximum likelihood estimate $\T{Z}$.  Let  $\Omega$ be an index set of observed tensor components. Assuming that the samples are independent and identically distributed, we can obtain $\T{Z}$ by solving
\begin{equation}
  \label{eq:joint-prob}
  \max_{\T{Z}} \;
  L(\T{X};\T{Z}) \equiv
\prod_{(i_1, \cdots, i_N) \in \Omega} p(x_{i_1 \cdots i_N}|\theta_{i_1 \cdots i_N}), ~~\text{where} ~~ \ell(\theta_{i_1 \cdots i_N}) = z_{i_1 \cdots i_N}.
\end{equation}
Working with the log-likelihood, one can easily obtain the following minimization problem
\begin{equation*}
\min_{\T{Z}} \Big\{ \widehat{F}(\T{X};\T{Z}) =-\frac{1}{\Omega}\sum_{(i_1, \cdots, i_N) \in \Omega}  \widehat{f}(x_{i_1 \cdots i_N}; z_{i_1 \cdots i_N})  \Big\},
\end{equation*}
where 
\begin{equation}\label{eq:loss:non}
 \widehat{f}(x_{i_1 \cdots i_N}; z_{i_1 \cdots i_N}) = \log \big(p(x_{i_1 \cdots i_N}|\ell^{-1}(z_{i_1 \cdots i_N})\big).
\end{equation}

In this paper, we propose a novel approach based on the idea of a tensor similarity and tensor labels to improve the prediction performance. Specifically, using the similarity function $s^h$, we consider the following cost function  
\begin{equation}\label{eq:loss:smooth}
\min_{\T{Z}} \Big\{ F(s^h,\T{X};\T{Z}) = - \frac{1}{\prod_{n \in [N]} I_n}\sum_{i_1=1}^{I_1}\cdots\sum_{i_N=1}^{I_N}  f(s^h , \T{X}; z_{i_1 \cdots i_N})  \Big\},
\end{equation}   
where 
\begin{equation}\label{eq:loss}
f(s^h , \T{X}; z_{i_1 \cdots i_N})  = \sum_{(j_1,\dots, j_N)\in \Omega}   s_{i_1\cdots i_N, j_1 \cdots j_N}^h   \widehat{f} (x_{j_1 \cdots j_N}; z_{i_1 \cdots i_N})
\end{equation}
is a smoothing probability density function. 

One key strategy of this smoothing function is to pool information across each $ x_{i_1\cdots i_N}$ through the weights $ s_{i_1\cdots i_N, j_1 \cdots j_N}^h$ to increase effective sample size and improve prediction accuracy. Next, we present various loss functions corresponding to different types of data; e.g., numerical, binary, and count.

\subsubsection{Numerical data}
We are concerned with the situation where we have the data tensor $\T{X}$ corrupted by \textit{white noise}. Specifically, we assume that
\begin{equation}\label{eq:white_noise}
x_{i_1\cdots i_N} = z_{i_1 \cdots i_N}+ \epsilon_{i_1\cdots i_N} \quad \text{with} \quad \epsilon_{i_1\cdots i_N} \sim \mathcal{N}(0,\sigma^2)
 ~~ \text{for all} \quad {(i_1, \cdots ,i_N)} \in \Omega.
\end{equation}
Here, $\mathcal{N}(\mu,\sigma^2)$ denotes the normal or Gaussian distribution with mean $\mu$ and variance $\sigma^2$ \footnote{We assume $\sigma$ is \emph{constant across all entries}.}. It follows from  \eqref{eq:white_noise} that
\begin{displaymath}
x_{i_1\cdots i_N}  \sim \mathcal{N}(\mu_{i_1\cdots i_N} , \sigma^2) ~~\text{with} ~~ \mu_{i_1\cdots i_N} = z_{i_1 \cdots i_N}  ~~ \text{for all} \quad {(i_1, \cdots ,i_N)} \in \Omega.
\end{displaymath}
In this case, the link function between $ \mu_{i_1 \cdots i_N} $ and $z_{i_1\cdots i_N}$ is the identity, i.e.,  $\ell(\mu_{i_1\cdots i_N})=\mu_{i_1\cdots i_N}$. Plugging this link function into \eqref{eq:loss:non} yields $  \widehat{f} (x_{i_1 \cdots i_N}; z_{i_1 \cdots i_N}) = (z_{i_1 \cdots i_N}- x_{i_1 \cdots i_N})^2$. Now, using \eqref{eq:loss}, we obtain
\begin{equation}\label{eq:quad:loss}
f(s^h , \T{X}; z_{i_1 \cdots i_N})  = \sum_{(j_1,\dots, j_N)\in \Omega}   s_{i_1 \cdots i_N, j_1\cdots j_N}^h (z_{i_1 \cdots i_N}- x_{j_1 \cdots j_N})^2.
\end{equation}

\subsubsection{Binary data}\label{sec:bernoulli-odds-link}
The standard assumption of a data generating mechanism for such data is the Bernoulli distribution; specifically,
a binary random variable $x \in \{0,1\}$ is Bernoulli distributed with parameter $\rho \in [0,1]$ if $\rho$ is the probability of obtaining a value of $1$ and $(1-\rho)$ is the probability for obtaining a value of $0$. The probability mass function is given by
\begin{equation}\label{eq:bernoulli-pdf}
  p( x |\rho) = \rho^{ x} (1-\rho)^{(1- x)}, \quad  x \in \{0,1\}, \qquad    x  \sim \text{Bernoulli}(\rho).
\end{equation}
A reasonable model for a binary data tensor $\T{X}$ is
\begin{equation}\label{eq:bernoulli}
 x_{i_1\cdots i_N}  \sim \text{Bernoulli}(\rho_{i_1\cdots i_N}), ~~\text{where}~~ \ell(\rho_{i_1\cdots i_N}) =z_{i_1 \cdots i_N}.
\end{equation}
A common option for the link function $\ell(\rho)$ is to work with the log-odds, i.e.,
\begin{equation}\label{eq:log-odds-link}
  \ell(\rho) = \log (\frac{\rho}{1-\rho}).
\end{equation} 
Substituting the link function \eqref{eq:log-odds-link} into \eqref{eq:loss:non}, gives 
$ \widehat{f} (x_{i_1 \cdots i_N}; z_{i_1 \cdots i_N}) = \log(1+e^{z_{i_1 \cdots i_N}}) - x_{i_1 \cdots i_N}  z_{i_1 \cdots i_N}$. Now, using  \eqref{eq:loss}, we get the following smoothing tenor function
\begin{displaymath}
f(s^h , \T{X}; z_{i_1 \cdots i_N})   = \sum_{(j_1,\dots, j_N)\in \Omega}   s_{i_1 \cdots i_N, j_1\cdots j_N}^h \big(\log(1+e^{z_{i_1 \cdots i_N}}) - x_{j_1 \cdots j_N}  z_{i_1 \cdots i_N}\big),
\end{displaymath}
where  $x_{j_1 \cdots j_N}\in \{0,1\}$ and the associated probability is $\rho = e^{z_{j_1 \cdots j_N}}/(1+e^{z_{j_1 \cdots j_N}})$. 

%

\subsubsection{Count data}
\label{sec:poiss-ident-link}

For count data, it is common to model them as a Poisson distribution. The probability mass function for a Poisson distribution with mean $\lambda$ is given by
\begin{equation}\label{eq:poisson-pdf}
  p(x|\lambda) = \frac{e^{-\lambda} \lambda^x}{x!} ~~ \text{for} ~~ x \in \N.
\end{equation}
If we use the identity link function, i.e., $\ell(\lambda) = \lambda$  and substitute \eqref{eq:poisson-pdf} into \eqref{eq:loss:non}, we obtain 
$ \widehat{f} (x_{i_1 \cdots i_N}; z_{i_1 \cdots i_N})  = z_{i_1 \cdots i_N} - x_{i_1 \cdots i_N} \log z_{i_1 \cdots i_N}$. Now, it follows from \eqref{eq:loss} 
that 
\begin{equation}\label{eq:poisson}
f(s^h , \T{X}; z_{i_1 \cdots i_N})  = \sum_{(j_1,\dots, j_N)\in \Omega}  
  s_{i_1 \cdots i_N, j_1\cdots j_N}^h \big(z_{i_1 \cdots i_N} - x_{j_1 \cdots j_N} \log z_{i_1 \cdots i_N}\big),
\end{equation}
where $ x_{j_1 \cdots j_N} \in \N$ and $z_{i_1 \cdots i_N} \geq 0$.
%

\subsubsection{Positive continuous data}
\label{sec:rayleigh-mean-link}

There are several distributions for handling nonnegative continuous data: Gamma, Rayleigh, and even Gaussian with nonnegativity constraints. Next, we consider the Gamma distribution which is appropriate for strictly positive data. For $x>0$, the probability density function is given by
\begin{equation}\label{eq:gamma-pdf}
p(x|t,\theta) =  \frac{ x^{t-1} e^{-x/\theta} }{ \Gamma(t)\, \theta^t } ,
\end{equation}
where the parameters $t$ and $\theta$ are positive real quantities as is the variable $x$ and $\Gamma(\cdot)$ is the Gamma function.

A common choice for the link function is $\ell(t,\theta) = \theta/t$ which induces a positivity constraint on $z_{j_1 \cdots j_N}$. Assume $t$ is constant across all entries. Plugging the functions $p$ and $\ell$ into \eqref{eq:loss:non} and removing the constant terms yields $ \widehat{f} (x_{i_1 \cdots i_N}; z_{i_1 \cdots i_N}) = \log(z_{i_1 \cdots i_N}) + x_{i_1 \cdots i_N}/z_{i_1 \cdots i_N}$.  Hence,  the smoothing loss function is defined by
\begin{equation}\label{eq:gamma-loss}
f(s^h , \T{X}; z_{i_1 \cdots i_N})  = \sum_{(j_1,\dots, j_N)\in \Omega}   s_{i_1 \cdots i_N, j_1\cdots j_N}^h \big(\log(z_{i_1 \cdots i_N}) + x_{j_1 \cdots j_N}/z_{i_1 \cdots i_N}\big),
\end{equation}
where $x_{j_1 \cdots j_N}$ and $z_{i_1 \cdots i_N}$ are both positive. In practice, we use  $z_{i_1 \cdots i_N} \geq 0$ and replace $z_{i_1 \cdots i_N}$ with $z_{i_1 \cdots i_N}+\epsilon$ (with small $\epsilon$) in the loss function \eqref{eq:gamma-loss}.

\subsection{Motivating Examples and Applications of the DCOT Model}\label{DCOT:examples}

Next, we discuss a number of motivating examples for DCOT and the associated smoothing loss function.

\subsubsection{Context-Aware Recommender Systems}\label{sec:dcot:recomm} Recommender systems predict users' preferences across a set of items based on large past usage data, while also leveraging information from similar users. In multilayer recommender systems, a tensor based analysis is beneficial due to its flexibility to accommodate contextual information from data, and is also regarded as effective in developing context-aware recommender systems (CARS) \citep{adomavicius2011context,frolov2017tensor,bi2018multilayer,bi2020tensors,zhang2020dynamic}.   
Besides user and item information available in traditional recommender systems \citep{lang1995newsweeder,verbert2012context,bi2017group}, multilinear recommender systems also use additional contextual variables, including geolocation data, time stamps, store information, etc. Although CARS are capable of utilizing such additional information and thus furnishing more accurate recommendations, they are also hampered by the so-called ``cold-start" problem, wherein not sufficient information is available on new users, items or contexts.
To address these issues, we propose a new tensor model which incorporates smoothing loss functions and can accommodate heterogeneity across observation groups. More specifically, we consider the objective function
\begin{equation}\label{dcot:recomm}
  F(s^h,\T{X};\T{Z}) +\lambda_{1} \|\T{G}\|_F^2 + \lambda_{2}\|\T{H}\|_F^2+ \lambda_{3} \|\bU\|_F^2,
\end{equation} 
where $\lambda_{i}$ for $i=1,2, 3 $ denote regularization parameters and  $\bU =\bigotimes_{n \in [N]} \bU^{(n)}$. Other regularization methods include, but are not limited to, the $\ell_0$- and $\ell_1$-penalty for sparse low-rank pursuit.

The function \eqref{dcot:recomm} enables pooling information from neighboring $(i_1,\cdots,i_N)$ points, through similarity function $s^h$. Further, it addresses satisfactorily the ``cold start" problem, by leveraging information from similar users in similar contextual settings. Finally, the issue of missing data in a non-ignorable fashion can be easily addressed through appropriately constructed neighborhoods and similarities~\eqref{eq:hypergraph-miss}. 

\subsubsection{Discriminative and Separable Dictionary Learning}\label{sec:tendict} 

We aim to leverage the supervised information (i.e. subjects) of input signals to learn a discriminative and separable dictionary. Assume the available data are organized in a $K$-order tensor $\T{X}_n \in \mathbb{R}^{I_1\times I_2 \times \dots \times I_K}$. According to the separable dictionary models \citep{hawe2013separable}, given \textit{coordinate dictionaries} $\bU^{(k)} \in \mathbb{R}^{I_k \times R_k}$, \textit{coefficient tensors} $\T{G}, \T{H}  \in \mathbb{R}^{R_1\times R_2 \times \dots \times R_K}$, and a \textit{noise tensor} $\T{E}_n$, we can express $\bx_n = \rmvec(\T{X}_n)$ as
\begin{equation}\label{eq:dictionmodel}
\bx_n= \Big( \bigotimes_{n \in [N]} \bU^{(n)} \Big) (\bg_n+\bh_n) + {\varepsilon}_n,
\end{equation}
where $\bg_n = \rmvec(\T{G}_n), ~\bh_n = \rmvec(\T{H}_n)$ and ${\varepsilon}_n = \rmvec(\T{E}_n)$.

Let
\begin{equation*}
	I = \prod_{k \in [K]} I_k \quad \text{and} \quad
		R = \prod_{k \in [K]} R_k.
	\end{equation*}		
By concatenating $N$ noisy observations $\{\bx_n\}_{n=1}^N$ that are realizations from the data generating process posited in \eqref{eq:dictionmodel} into $\bX\in \mathbb{R}^{I\times N}$, we obtain the following discriminative dictionary learning model
\begin{equation}\label{eq:dcot:dict}
 F(s^h, \bX; \bZ)+ \lambda_1 \|\bG\|_1+ \lambda_2 \|\bH\|_1 +\lambda_3 \sum_{m=1}^{M} \sum_{\kappa=1}^{\pi} \|\bH_{m_\kappa}\|_{F},
\end{equation}
where $\bZ=\bU(\bG+\bH)$; $\bU =\bigotimes_{k \in [K]} \bU^{(k)}$ is the basis matrix; $\bG $ and $\bH$ are coefficient matrices; and $\lambda_{1}$--$\lambda_{3}$
are regularization parameters.

Note that in \eqref{eq:dcot:dict}, we consider a \textit{sparse group lasso} penalty for the structured core $\T{H}$. This penalty yields solutions that are sparse at both the group and individual feature levels for all subjects $m=1,2, \cdots, M$.

\subsubsection{Image Analytics}\label{sec:dcot:image} 

On many occasions, the available image dataset contains multiple shots of the same subject, as is the case in the CMU faces database \citep{sim2002cmu}. As an illustration, using 30 subjects from the data base and extracting 11 poses under 21 lighting conditions, we end up with a tensor comprising of  6930 = 30 $\times$ 11 $\times$ 21 images, of $32\times 32$ dimension each. Since each subject remains the same under different illuminations for the same pose, and there are also a number of other subjects with the same pose, we consider the following partition of the core tensor $\T{H}$ $$\T{H}_{i} =\T{H}(i, :,:,: ), \quad  \text{for} \quad i=1,2,\ldots,30.$$

Further, if the resulting tensor is missing certain illuminations or poses for selective subjects, a completion task needs to be undertaken. Existing methods use either factorization or completion schemes to recover the missing components. However, as the number of missing entries increases, factorization schemes may overfit the model because of incorrectly predefined ranks, while completion schemes may fail to obtain easy to interpret model factors \citep{chen2013simultaneous}. To this end, we propose a model that combines a rank minimization technique \citep{chen2013simultaneous} with the DCOT model decomposition. Moreover, as the model structure is implicitly included in the DCOT model, we use the similarity function $s^h$ to borrow neighborhood information from image data over an image-subject specific network.

The proposed method leverages the two schemes previously discussed and accurately estimates the model factors and missing entries via the following objective function
\begin{equation}\label{dcot:face}
 F(s^h,\T{X};\T{Z}) +\lambda_{1} \|\T{G}\|_F^2 + \lambda_{2}\|\T{H}\|_F^2+ \sum_{n=1}^{N} \lambda_{3,n} \|\bU^{(n)}\|_*,
\end{equation} 
where $\bU^{(n)} \in \mathbb{R}^{I_n\times I_n}$, $n=1,\ldots, N$ are factor matrices; $\T{G}$ and $\T{H} \in \mathbb{R}^{I_1 \times I_2 \times \cdots \times I_N}$ are core tensors; and $\|\cdot\|_{*}$ denotes the trace norm.  As an example, formulation \eqref{dcot:face} leverages similarity between members of $I_3$ in the CMU dataset, i.e, $\T{H}_{m_\kappa} =(1, :, \kappa,: )$ for $\kappa=1,2, \cdots, \pi$,  and accross subjects $I_1$. The results are briefly depicted in Figure~\ref{fig:example:completion:CMU}.
\begin{figure}[t]
	\begin{center}
		\vspace{1em}
		\includegraphics[scale=0.37]{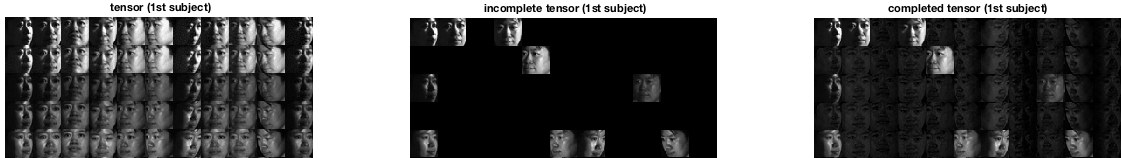}
		\vspace{.3em}
		\includegraphics[scale=0.37]{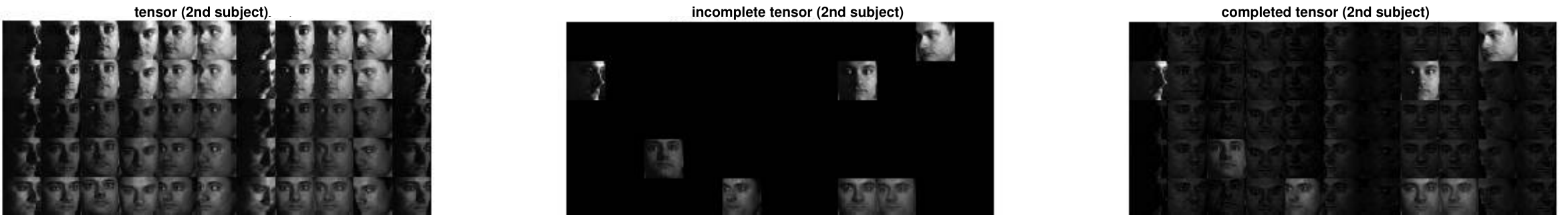}
\caption{Illustration of the tensor completion with DCOT (third iteration step) on CMU face dataset of size $30 \times 11 \times 21 \times 1024$. The last column shows that DCOT encourages joint structures within subgroups of each subject and discriminative structures across members of different subjects.}\label{fig:example:completion:CMU}			
	\end{center}
\end{figure}
%
%

\subsubsection{Integrative Tensor Factorization for Omics Multi-Modal Data} 

A major challenge for integrative analysis of multi-modal Omics data is the heterogeneity present across samples, as well as across different Omics data sources, which makes it difficult to identify the coordinated signal of interest from source-specific noise or extraneous effects. Tensor factorization methods are broadly used across multiple domains to analyze genomic datasets \citep{hore2016tensor,kim2017discriminative,lee2018gift,taguchi2017identification,wang2015rubik}. In contrast to these methods, DCOT provides an approach for jointly decomposing the data matrices as slices of the data tensor. Formally, for non-negative observationally-linked datasets $\bX_1 , \dots, \bX_{I_3}$, we form a 3-way tensor $\T{X}\in \mathbb{R}^{I_1\times I_2 \times I_3}$. Then, based on a non-negative DCOT factorization, the objective function becomes
\begin{eqnarray}\label{eq:dcotomic}
F(s^h,\T{X};\T{Z}) &+& \lambda_1 \mathbbm{1}_{\T{G} \geq 0} (\T{G}) + \lambda_2 \mathbbm{1}_{\T{H} \geq 0} (\T{H})+ \sum_{n=1}^{N} \lambda_{3,n} \mathbbm{1}_{\bU^{(n)} \geq 0} (\bU^{(n)}).
\end{eqnarray}
Here, $\mathbbm{1}_{A} (.)$ is the indicator function of set $A$; $\bU^{(n)} \in \mathbb{R}^{I_n\times R_n}$ is the $n$--th nonegative factor matrix for $n=1,2$; $\T{G} \in \mathbb{R}^{R_1 \times R_2 \times R_3}$ is a core tensor reflecting the connections (or links) between the latent components and is able to capture the homogeneous part across sources; and $\T{H}$ is defined as $ \T{H}(:, :,i)=\T{H}(:, :,j)$ for all $i,j\in [R_3]$ in order to detect coordinated activity (heterogeneous part) across multiple genomic variables in the form of multi-dimensional modules. 

\section{A Linearized ADMM Method for Penalized DCOT Decomposition}\label{sec:optimization}

We develop a {\em linearized} ADMM to solve the regularized DCOT decomposition problem posited in \eqref{eq:loss:general}.  Let  $\T{T}= (\bU^{(1)}, \dots, \bU^{(N)}, \T{G}, \T{H}, \T{Z}, \T{Y})$. To obtain the updates in the standard ADMM, we first formulate  \eqref{eq:loss:general} as follows:
\begin{eqnarray}\label{main:constprob}
\nonumber
 &\underset{\T{T} }{\text{minimize}} & ~  F(s^h,\T{X};\T{Z}) + \lambda_1 J_1(\T{G})+ \lambda_2 J_2(\T{H})+ \sum_{n=1}^N \lambda_{3,n} J_{3,n}(\bU^{(n)}), \\
 &\text{s.t.}&~  \big({{\T{G}} + {\T{H}}}\big) \times_1 {\bU}^{(1)} \cdots \times_N {\bU}^{(N)} -\T{Z}=0. 
\end{eqnarray}
By introducing the dual variable $ \T{Y}$ and parameter $\gamma >0$, the standard ADMM is constructed for an augmented Lagrangian function defined by
\begin{eqnarray}\label{eq:aug}
\nonumber
\Lc (\T{T})
   &=& F(s^h,\T{X};\T{Z}) + \lambda_1 J_1(\T{G})+ \lambda_2 J_2(\T{H})+ \sum_{n=1}^N \lambda_{3,n} J_{3,n}(\bU^{(n)})\\
   \nonumber
 & -& \langle \T{Y},  ({{\T{G}} + {\T{H}}}) \times_1 {\bU}^{(1)} \cdots \times_N {\bU}^{(N)} -\T{Z}\rangle \\  
&+& \frac{\gamma}{2} \| ({{\T{G}} + {\T{H}}}) \times_1 {\bU}^{(1)} \cdots \times_N {\bU}^{(N)} -\T{Z}\|^2_{F}. 
\end{eqnarray}
In a typical iteration of the ADMM for solving \eqref{main:constprob}, the following updates are implemented:
\begin{eqnarray} \label{eq:admm:up}
\nonumber
\bU^{(1)}_{k+1} &=& \argmin_{\bU^{(1)}} \quad  \Lc\big(\underbrace{\bU^{(1)}, \bU^{(2)}_k, \dots, \bU^{(N)}_k, \T{G}_k, \T{H}_k,  \T{Z}_k, \T{Y}_k }_{=:\T{T}^{\bU^{(1)}}_k}\big), \label{eq:admm:upu1}\\
\nonumber
\bU^{(n)}_{k+1} &=& \argmin_{\bU^{(n)}} \quad \Lc\big(\underbrace{\bU^{(1)}_{k+1}, \dots,\bU^{(n-1)}_{k+1}, \bU^{(n)}, \bU^{(n+1)}_{k}, \dots, \bU^{(N)}_k, \T{G}_k, \T{H}_k,  \T{Z}_k, \T{Y}_k)}_{=:\T{T}^{\bU^{(n)}}_k }\big),  \label{eq:admm:upun2N}\\
\nonumber
\bU^{(N)}_{k+1} &=& \argmin_{\bU^{(N)}} \quad  \Lc\big(\underbrace{\bU^{(1)}_{k+1}, \bU^{(2)}_{k+1}, \dots, \bU^{(N-1)}_{k+1},\bU^{(N)}, \T{G}_k, \T{H}_k,  \T{Z}_k, \T{Y}_k }_{=:\T{T}^{\bU^{(N)}}_k}\big), \label{eq:admm:upuN}\\
\nonumber
\T{G}_{k+1} &=& \argmin_{\T{G}}  \quad \Lc \big(\underbrace{\bU^{(1)}_{k+1}, \dots\bU^{(N)}_{k+1}, \T{G}, \T{H}_{k},  \T{Z}_k, \T{Y}_k }_{=:\T{T}^{\T{G}}_k}\big),  \label{eq:admm:upg}\\
\nonumber
\T{H}_{k+1} &=& \argmin_{\T{H}} \quad   \Lc \big(\underbrace{\bU^{(1)}_{k+1}, \dots\bU^{(N)}_{k+1}, \T{G}_{k+1}, \T{H},  \T{Z}_k, \T{Y}_k }_{=:\T{T}^{\T{H}}_k}\big),  \label{eq:admm:uph}\\
\nonumber
\T{Z}_{k+1} &=& \argmin_{\T{Z}} \quad  \Lc \big(\underbrace{\bU^{(1)}_{k+1}, \dots\bU^{(N)}_{k+1}, \T{G}_{k+1}, \T{H}_{k+1},  \T{Z}, \T{Y}_k }_{=:\T{T}^{\T{Z}}_k}\big),  \label{eq:admm:upz} \\
\T{Y}_{k+1} &=& {\T{Y}}_k - \gamma \big(({{\T{G}}_{k+1} + {\T{H}}_{k+1}}\big) \times_1 {\bU}^{(1)}_{k+1} \cdots \times_N {\bU}^{(N)}_{k+1} -\T{Z}_{k+1} \big),\label{eq:admm:upy} 
\end{eqnarray}
where $n \in \{2, 3,\cdots, N-1\}$.

Note that problem \eqref{main:constprob} is non-convex; hence, the global convergence of ADMM is a priori not guaranteed. Recent work \citep{hong2016convergence,wang2019global,lin2016iteration,tarzanagh2018estimation} studied the convergence of ADMM for non-convex and non-smooth problems under linear constraints. However, the constraints in the tensor factorization problem are nonlinear. To avoid introducing auxiliary variables and still solving \eqref{main:constprob} efficiently, we propose to approximate each sub-problem in \eqref{eq:admm:up} by linearizing the smooth terms with respect to the factor matrices and core tensors. With this linearization, the resulting approximation to \eqref{eq:admm:up} is then simple enough to have a closed form solution, and we are able to provide the global convergence under mild conditions. 

To do so, we regularize each subproblem in \eqref{eq:admm:up} and consider the following updates: 
\begin{subequations}
\begin{eqnarray} \label{eq:reg3.3}
\bU^{(n)}_{k+1} &=& \argmin_{\bU^{(n)}} \quad  \Lc(\T{T}^{\bU^{(n)}}_k)+ \frac{ \varrho^n}{2 }\| \bU^{(n)}-\bU^{(n)}_k\|_F^2,  \quad  \quad n=1, \cdots N, \label{eq:reg3.3a01}\\
\T{G}_{k+1} &=& \argmin_{\T{G}} \quad   \Lc(\T{T}^{\T{G}}_k)+ \frac{ \varrho^g}{2 }\| \T{G}-\T{G}_k\|_F^2, \label{eq:reg3.3a02}\\
\T{H}_{k+1} &=& \argmin_{\T{H}}  \quad  \Lc(\T{T}^{\T{H}}_k)+ \frac{ \varrho^h}{2 }\| \T{H}-\T{H}_k\|_F^2,  \label{eq:reg3.3a03}\\
\T{Z}_{k+1} &=& \argmin_{\T{Z}} \quad   \Lc(\T{T}^{\T{Z}}_k), 
\label{eq:finalup4}\\
\T{Y}_{k+1} &=& {\T{Y}}_k - \gamma \big((\T{G}_{k+1} + \T{H}_{k+1}\big) \times_1 {\bU}^{(1)}_{k+1} \cdots \times_N {\bU}^{(N)}_{k+1} -\T{Z}_{k+1}\big),  \label{eq:finalup5}
\end{eqnarray}
\end{subequations}
where positive constants $ \varrho^g$, $\varrho^h$, and $\{\varrho^n\}_{n=1}^{N}$ correspond to the  
regularization parameters. 

It follows from \eqref{eq:aug} that 
\begin{eqnarray}\label{eq:aug:equi}
\Lc (\T{T}) = F(s^h,\T{X};\T{Z}) + \lambda_1 J_1(\T{G})+ \lambda_2 J_2(\T{H})+ \sum_{n=1}^N \lambda_{3,n} J_{3,n}(\bU^{(n)})+ \bar{\Lc} (\T{T}),
\end{eqnarray}
where
\begin{equation}\label{eq:quadprob}
\bar{\Lc} (\T{T}) :=  \frac{\gamma}{2} \|({{\T{G}} + {\T{H}}}) \times_1 {\bU}^{(1)} \cdots \times_N {\bU}^{(N)} -\T{Z} -\frac{1}{\gamma} \T{Y} \|_F^2.
\end{equation}

Now, using \eqref{eq:aug:equi}, we approximate \eqref{eq:reg3.3a01}-\eqref{eq:reg3.3a03} by linearizing the function $\bar{\Lc} (\T{T})$ with respect to $\bU^{(1)} , \dots, \bU^{(N)},\T{G}$,  and $\T{H}$ as follows:
\begin{subequations}
\begin{align} 
\bU^{(n)}_{k+1} =& \argmin_{\bU^{(n)}}  \langle \nabla_{{\bU^{(n)}}} \bar{\Lc}(\T{T}^{\bU^{(n)}_k}_k), \bU^{(n)} -\bU^{(n)}_k \rangle + \lambda_{3,n} J_3(\bU^{(n)})
+ \frac{ \varrho^n}{2 }\| \bU^{(n)}-\bU^{(n)}_k\|_F^2,   \label{eq:finalup1}\\
\T{G}_{k+1}  = & \argmin_{\T{G}}    \langle \nabla_{{\T{G}}} \bar{\Lc} (\T{T}^{\T{G}_k}_k), \T{G}-\T{G}_k \rangle + 
\lambda_1 J_1(\T{G})+ \frac{ \varrho^g}{2 }\| \T{G}-\T{G}_k\|_F^2,    \label{eq:finalup2}\\
\T{H}_{k+1} =& \argmin_{\T{H}}   \langle \nabla_{\T{H}} \bar{\Lc} (\T{T}^{\T{H}_k}_k), \T{H}-\T{H}_k \rangle+ \lambda_2 J_2(\T{H}) + \frac{ \varrho^h}{2 }\| \T{H}-\T{H}_k\|_F^2.  \label{eq:finalup3}
\end{align}
\end{subequations}
Here, $\nabla_{{\bU^{(n)}}} \bar{\Lc}$, $\nabla_{\T{G}} \bar{\Lc}$, and $\nabla_{\T{H}} \bar{\Lc}$ denote the gradients of \eqref{eq:quadprob} w.r.t. ${\bU^{(n)}}$, $\T{G}$ and $\T{H}$, respectively. 

The following lemma gives the partial gradients of $\bar{\Lc}(\T{T})$ w.r.t. $\bU^{(n)}$,  $\T{G}$, and $\T{H}$.
\begin{lemma} \label{lem:pgrad} The partial gradients of $\bar{\Lc}(\T{T})$ are 
\begin{align*}
	& \nabla_{\bU^{(n)}} \bar{\Lc}(\T{T}) =\bM_{(t)}(\bigotimes_{t \neq n} \bU^{(t)}) (\bH_{(t)}+\bG_{(t)})^\top, \qquad  n=1, \ldots, N,\\
	& \nabla_{\T{H}} \bar{\Lc}(\T{T}) = \nabla_{\T{G}} \bar{\Lc}(\T{T})=  \T{M} \times_1 \bU^{(1)} \cdots \times_N \bU^{(N)},
\end{align*}
where  $\bM_{(t)}$ denotes mode-``t" matricization of $\T{M}$, and 
\begin{equation}\label{eqn:m}
\T{M} = \gamma \Big(({{\T{G}} + {\T{H}}}) \times_1 {\bU}^{(1)} \cdots \times_N {\bU}^{(N)} -\T{Z} -\frac{1}{\gamma} \T{Y} \Big).
\end{equation}
\end{lemma}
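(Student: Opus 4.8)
The plan is to reduce both gradient computations to elementary matrix and tensor calculus by exploiting the mode-$n$ matricization identity already recorded in \eqref{eq:dcotmode}. Throughout, I write $\T{R} := \T{Z} + \tfrac{1}{\gamma}\T{Y}$ and $\T{P} := (\T{G}+\T{H})\times_1 \bU^{(1)}\cdots\times_N\bU^{(N)}$, so that $\bar{\Lc}(\T{T}) = \tfrac{\gamma}{2}\|\T{P}-\T{R}\|_F^2$ by \eqref{eq:quadprob}, and $\T{M} = \gamma(\T{P}-\T{R})$ by \eqref{eqn:m}. Since the Frobenius norm is invariant under any matricization, I am free to differentiate $\bar{\Lc}$ in whichever unfolding is most convenient for the block at hand.

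For the factor gradient $\nabla_{\bU^{(n)}}\bar{\Lc}$, I would hold all blocks except $\bU^{(n)}$ fixed and matricize in mode $n$. Applying the unfolding that yields \eqref{eq:dcotmode}, but to $\T{P}$, gives $\bP_{(n)} = \bU^{(n)}(\bG_{(n)}+\bH_{(n)})(\bigotimes_{k\neq n}\bU^{(k)})^\top$. Abbreviating $\bW := (\bG_{(n)}+\bH_{(n)})(\bigotimes_{k\neq n}\bU^{(k)})^\top$, which is independent of $\bU^{(n)}$, we obtain $\bar{\Lc} = \tfrac{\gamma}{2}\|\bU^{(n)}\bW - \bR_{(n)}\|_F^2$, a standard least-squares expression in $\bU^{(n)}$ whose gradient is $\gamma(\bU^{(n)}\bW - \bR_{(n)})\bW^\top = \gamma(\bP_{(n)}-\bR_{(n)})\bW^\top$. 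Recognizing that $\gamma(\bP_{(n)}-\bR_{(n)}) = \bM_{(n)}$ and that $\bW^\top = (\bigotimes_{k\neq n}\bU^{(k)})(\bG_{(n)}+\bH_{(n)})^\top$ yields exactly the claimed formula.

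For the core gradients, the key observation is that $\T{G}$ and $\T{H}$ enter $\bar{\Lc}$ only through the sum $\T{G}+\T{H}$; by the chain rule their Jacobians into $\T{P}$ coincide, so $\nabla_{\T{G}}\bar{\Lc} = \nabla_{\T{H}}\bar{\Lc}$ automatically. To evaluate this common gradient I would regard $\T{P}$ as the image of $\T{G}+\T{H}$ under the linear multilinear operator $\mathcal{A}(\T{C}) := \T{C}\times_1\bU^{(1)}\cdots\times_N\bU^{(N)}$ and differentiate the quadratic $\tfrac{\gamma}{2}\|\mathcal{A}(\cdot)-\T{R}\|_F^2$, obtaining $\mathcal{A}^\ast(\gamma(\T{P}-\T{R})) = \mathcal{A}^\ast(\T{M})$. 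The adjoint $\mathcal{A}^\ast$ follows from the defining relation $\langle \T{X}\times_n\bA,\T{Y}\rangle = \langle\T{X},\T{Y}\times_n\bA^\top\rangle$ applied mode by mode, giving $\mathcal{A}^\ast(\T{M}) = \T{M}\times_1(\bU^{(1)})^\top\cdots\times_N(\bU^{(N)})^\top$, i.e. $\T{M}$ contracted against each factor matrix along its corresponding mode; this is the core-gradient expression asserted in the lemma, with the mode-$n$ contractions read through the transposed factors that dimensional consistency forces.

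The computations are routine, so the only real care required is notational bookkeeping. I must use the correct (reverse) ordering in $\bigotimes_{k\neq n}\bU^{(k)}$ so that \eqref{eq:dcotmode} applies verbatim, track the transposes produced both by the matrix derivative and by the adjoint relation, and invoke Frobenius invariance under matricization when passing between the tensor form of $\bar{\Lc}$ and its unfoldings. These are precisely the places where a transpose or an index-ordering slip could occur, so the main obstacle is organizational rather than conceptual.
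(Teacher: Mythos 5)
Your proposal is correct, and it arrives at the same formulas as the paper by a genuinely different route. The paper's proof (given for $N=3$ with $\T{S}=\T{G}+\T{H}$) is a coordinate computation: it expands $\T{T}_{abc}=\sum_{r_1,r_2,r_3}\T{S}_{r_1r_2r_3}\bU^{(1)}_{ar_1}\bU^{(2)}_{br_2}\bU^{(3)}_{cr_3}$, applies the entrywise chain rule to get $\partial\bar{\Lc}/\partial\bU^{(1)}_{ij}$ and $\partial\bar{\Lc}/\partial\T{S}_{ijk}$, and then verifies by direct index comparison that these entries coincide with those of $\bM_{(1)}(\bU^{(3)}\otimes\bU^{(2)})\bS_{(1)}^\top$ and of the multilinear product of $\T{M}$ with the factors. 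You instead work coordinate-free: you matricize via \eqref{eq:dcotmode} to reduce the factor update to a standard least-squares gradient $\gamma(\bP_{(n)}-\bR_{(n)})\bW^\top$, and you obtain the core gradient as $\mathcal{A}^\ast(\T{M})$ using the adjoint relation $\langle\T{X}\times_n\bA,\T{Y}\rangle=\langle\T{X},\T{Y}\times_n\bA^\top\rangle$. What your approach buys is a derivation that generalizes to arbitrary $N$ without extra notation and makes the structural reason for $\nabla_{\T{G}}\bar{\Lc}=\nabla_{\T{H}}\bar{\Lc}$ immediate (both cores enter only through their sum); what the paper's entrywise computation buys is an explicit verification of the Kronecker ordering in the unfolding, which is exactly the bookkeeping hazard you flag. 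You also correctly note that dimensional consistency forces the core gradient to be read as $\T{M}\times_1(\bU^{(1)})^\top\cdots\times_N(\bU^{(N)})^\top$; the lemma's displayed formula omits the transposes, and the paper's own entrywise derivation in fact produces the transposed contractions, so your reading is the right one.
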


A schematic description of the proposed ADMM is given in Algorithm~\ref{alg:lin:admm}. 

\begin{algorithm}[t]
\begin{small}
\caption{Regularized and Smooth DCOT Factorization via Linearized ADMM}\label{alg:lin:admm}
\begin{algorithmic}
   \State \textbf{Input:} $\T{X} \in \mathbb{R}^{I_1 \times I_2 \times \cdots \times I_N}$, positive constants $\lambda_1, \lambda_2, \lambda_{3,i}, i=1, \ldots, N$, factor matrices $\bU^{(n)}_0 \in \mathbb{R}^{I_n\times R_n}, n=1, \ldots,N$, dual variable $\T{Y}_0 \in \mathbb{R}^{I_1\times I_2 \times \cdots \times I_N}$, smoothing function $s^h$, and a dual step size $ \gamma > 2L_F$ where $L_F$ is a Lipschitz constant for the gradient of $F$.
\State \textbf{Initialize:}  $k=0$,  $\bU^{(n)}_k =\bU^{(n)}_0$ for $n=1, \ldots, N $, $\T{Z}_k = \T{X}$, $\T{G}_k=\T{H}_k =  \T{X} \times_1 \bU^{(1)}_k\cdots \times_N \bU^{(N)}_k$, and $\T{Y}_k= \T{Y}_0$.
\\
\textbf{For} $k=1,2, \ldots $
\begin{itemize}
\item For $n=1, \ldots, N$, update the factor matrix $\bU^{(n)}$: 
\begin{eqnarray*}
\bU^{(n)}_{k+1} &=& \prox^{J_{3,n}}_{\varrho^n} \left(\bU^{(n)}_{k} -\frac{1}{\varrho^n}  \nabla_{\bU^{(n)}} \bar{\Lc}(\T{T}^{\bU^{(n)}_k}), \frac{\lambda_{3,n}}{\varrho^n} \right).
\end{eqnarray*}  
\item Update the homogeneous core $\T{G}$: 
\begin{eqnarray*}   
\T{G}_{k+1} &=&\prox^{J_1}_{\varrho^g}\left(\T{G}_k- \frac{1}{\varrho^g}  \nabla_{\T{G}} \bar{\Lc} (\T{T}^{\T{G}_k}_k), \frac{\lambda_1}{\varrho^g}\right).
\end{eqnarray*}
\item For $m=1, \ldots, M$, update the heterogeneous core $\T{H}_{m_\pi}$: 
\begin{eqnarray*}  
\T{H}_{m_\pi, k+1} & = &\prox^{J_2}_{\varrho^h}\left(\T{H}_{m_\pi,k}- \frac{1}{\varrho^h} \nabla_{\T{H}_{m_\pi}} \bar{\Lc} (\T{T}^{\T{H}_{m_\pi,k}}_{k}), \frac{\lambda_2}{\varrho^h} \right), 
\end{eqnarray*}
and set $ \T{H}_{m_1, k+1} =  \T{H}_{m_2, k+1}= \cdots = \T{H}_{m_\pi, k+1} $. 
 \item Update the model parameter $\T{Z}$: 
\begin{eqnarray*}
\T{Z}_{k+1} &=& \argmin_{\T{Z}} \left\{F(s^h,\T{X};\T{Z}) +\bar{\Lc}(\T{T}^{\T{Z}}_k) \right\}.
\end{eqnarray*}
 \item Update the dual variable $\T{Y}$: 
 \begin{eqnarray*}
 \T{Y}_{k+1} &=& \T{Y}_{k} - \gamma\left( (\T{G}_{k+1} + \T{H}_{k+1}) \times_1 \bU^{(1)}_{k+1} \cdots \times_N \bU^{(N)}_{k+1}-\T{Z}_{k+1} \right). 
\end{eqnarray*}    
\end{itemize}
\textbf{End}
\end{algorithmic}
\end{small}
\end{algorithm}

\subsection{Global Convergence}\label{sec:globconv}

Before establishing the global convergence result of our algorithm for DCOT, we provide the necessary definitions used in the proofs. Most of the concepts that we use in this paper can be found in~\cite{rockafellar2009variational,bauschke2011convex}. 

For any proper, lower semi-continuous function $g : H \rightarrow (-\infty, \infty]$, we let $\partial_L g : H \rightarrow 2^H$ denote the \textit{limiting subdifferential} of $g$; see~\cite[Definition 8.3]{rockafellar2009variational}.

For any $\eta \in (0, \infty)$, we let $F_\eta$ denote the class of concave continuous functions $\varphi : [0, \eta) \rightarrow \mathbb{R}_+$ for which $\varphi(0) = 0$; $\varphi$ is $C^1$ on $(0, \eta)$ and continuous at $0$; and for all $s \in (0, \eta)$, we have $\varphi'(s) > 0$.

\begin{definition}[Kurdyka--{\L}ojasiewicz Property]\label{defn:kl} A function $g : H \rightarrow (- \infty, \infty]$ has the \textit{Kurdyka-{\L}ojasiewicz} (KL) property at $\overline{u} \in \dom(\partial_L g)$ provided that there exists $\eta \in (0, \infty)$, a neighborhood $U$ of $\overline{u}$, and a function $\varphi \in F_\eta$ such that 
\begin{align*}
\left(\forall u \in U \cap \{ u' \mid g(\overline{u}) < g(u') < g(\overline{u}) + \eta\}\right), \qquad \varphi'(g(u) - g(\overline{u})) \text{dist}(0, \partial_L g(u)) \geq 1.  
\end{align*}
The function $g$ is said to be a \textit{KL function} provided it has the KL property at each point $u \in \dom(g)$.
\end{definition}

In the following Theorem~\ref{thm:main}, we establish the global convergence of the standard multi-block ADMM for solving the DCOT decomposition problem, by using the KL property of the objective function in \eqref{eq:aug}. 
 
\begin{theorem}[Global Convergence]\label{thm:main}
Suppose Assumption~\ref{AssumptionsA} holds and  the augmented Lagrangian $\Lc (\T{T})$ is a KL function. Then, the sequence $\T{T}_k= (\bU^{(1)}_k, \dots, \bU^{(N)}_k, \T{G}_k, \T{H}_k, \T{Z}_k, \T{Y}_k)$ generated by Algorithm~\ref{alg:lin:admm} from any starting point converges to a stationary point of Problem~\eqref{eq:aug}.
\end{theorem}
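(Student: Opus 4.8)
The plan is to follow the by-now-standard three-ingredient recipe for convergence of descent-type schemes on nonconvex, nonsmooth problems with the KL property (in the spirit of Attouch--Bolte--Svaiter and Bolte--Sabach--Teboulle): (i) a \emph{sufficient decrease} property for the augmented Lagrangian $\Lc$ along the iterates; (ii) a \emph{relative-error} bound showing that some element of $\partial_L\Lc(\T{T}_{k+1})$ is controlled by the successive gap $\|\T{T}_{k+1}-\T{T}_k\|_F$; and (iii) the KL inequality to upgrade these two estimates into summability of the gaps, hence convergence of $\{\T{T}_k\}$ to a single stationary point. As a preliminary step I would establish (or invoke, via coercivity of the regularized objective as in the motivating examples of Section~\ref{DCOT:examples}) that $\{\T{T}_k\}$ stays bounded; because the constraint map $(\T{G}+\T{H})\times_1\bU^{(1)}\cdots\times_N\bU^{(N)}$ is multilinear and hence $\bar\Lc$ has only \emph{locally} Lipschitz gradient in the factors, boundedness is exactly what lets me replace global moduli by uniform local ones along the trajectory.

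For the sufficient decrease, the cleanest device is a dual-control identity. The optimality condition of the exact $\T{Z}$-update \eqref{eq:finalup4} reads $\nabla_{\T{Z}}F(s^h,\T{X};\T{Z}_{k+1}) + \T{Y}_k + \gamma(\T{Z}_{k+1}-\Phi_{k+1}) = 0$, where $\Phi_{k+1} := (\T{G}_{k+1}+\T{H}_{k+1})\times_1\bU^{(1)}_{k+1}\cdots\times_N\bU^{(N)}_{k+1}$; combined with the dual update \eqref{eq:finalup5}, i.e.\ $\T{Y}_{k+1}=\T{Y}_k+\gamma(\T{Z}_{k+1}-\Phi_{k+1})$, this forces $\T{Y}_{k+1} = -\nabla_{\T{Z}}F(s^h,\T{X};\T{Z}_{k+1})$. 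Assumption~\ref{AssumptionsA}(iii) then yields the essential bound $\|\T{Y}_{k+1}-\T{Y}_k\|_F^2 \le L_F\,\|\T{Z}_{k+1}-\T{Z}_k\|_F^2$, converting the Lagrangian increase from the dual ascent into a quantity dominated by the primal progress in $\T{Z}$. For the remaining blocks I would exploit that the prox-linearized subproblems \eqref{eq:finalup1}--\eqref{eq:finalup3} minimize strongly convex models: the proximal terms $\tfrac{\varrho^n}{2}\|\cdot\|_F^2$, $\tfrac{\varrho^g}{2}\|\cdot\|_F^2$, $\tfrac{\varrho^h}{2}\|\cdot\|_F^2$ each furnish a per-block descent once the linearization error of the smooth quadratic $\bar\Lc$ is absorbed (this is where the local Lipschitz constants, hence boundedness, enter). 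Summing the block decreases and subtracting the dual increase, the input choice $\gamma > 2L_F$ guarantees $\Lc(\T{T}_{k+1}) \le \Lc(\T{T}_k) - c\,\|\T{T}_{k+1}-\T{T}_k\|_F^2$ for some $c>0$; telescoping together with $\inf\Lc>-\infty$ (from Assumption~\ref{AssumptionsA}) gives $\sum_k \|\T{T}_{k+1}-\T{T}_k\|_F^2 < \infty$.

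The relative-error bound is assembled from the first-order conditions of the five primal subproblems, the definition \eqref{eqn:m} of $\T{M}$, and the partial-gradient formulas of Lemma~\ref{lem:pgrad}. Each prox step contributes an inclusion such as $0\in \lambda_{3,n}\partial_L J_{3,n}(\bU^{(n)}_{k+1}) + \nabla_{\bU^{(n)}}\bar\Lc(\T{T}^{\bU^{(n)}_k}_k) + \varrho^n(\bU^{(n)}_{k+1}-\bU^{(n)}_k)$; rewriting $\nabla_{\bU^{(n)}}\bar\Lc$ at the mixed Gauss--Seidel iterate in terms of its value at $\T{T}_{k+1}$ produces an error that, by local Lipschitzness of $\nabla\bar\Lc$ on the bounded trajectory, is bounded by a multiple of $\|\T{T}_{k+1}-\T{T}_k\|_F$. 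Collecting the contributions for all blocks with the $\T{Z}$- and $\T{Y}$-relations above yields $d_{k+1}\in\partial_L\Lc(\T{T}_{k+1})$ with $\|d_{k+1}\|_F \le \tau\,\|\T{T}_{k+1}-\T{T}_k\|_F$ for some $\tau>0$. With both estimates in hand, the conclusion is the abstract KL argument: boundedness gives a nonempty compact limit set on which $\Lc$ is constant, the KL hypothesis supplies the desingularizing $\varphi\in F_\eta$, and the standard chain-rule/telescoping estimate forces the gaps to be summable (finite length), so $\{\T{T}_k\}$ is Cauchy and converges; the relative-error bound then sends $d_{k+1}\to 0$, whence $0\in\partial_L\Lc(\T{T}_\star)$.

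The \textbf{main obstacle} is the sufficient-decrease step. Unlike linearly constrained ADMM theory, here the constraint map is multilinear, so $\bar\Lc$ is neither jointly convex nor globally Lipschitz-smooth in the factors. The argument therefore hinges on first pinning down boundedness of $\{\T{T}_k\}$ (to obtain uniform local Lipschitz moduli along the trajectory) and on the dual-control identity $\T{Y}_{k+1}=-\nabla_{\T{Z}}F$, which is precisely what makes the $\T{Y}$-ascent dominated by the $\T{Z}$-descent under $\gamma>2L_F$. Making the descent constants line up simultaneously across all $N+3$ primal blocks—so that the $\varrho^n,\varrho^g,\varrho^h$ beat the respective local linearization constants while the net $\gamma$-term stays positive—is the delicate bookkeeping at the heart of the proof.
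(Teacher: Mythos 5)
Your proposal is correct and follows essentially the same route as the paper: the same dual-control identity from the exact $\T{Z}$-update and the dual step (the paper's sign convention gives $\T{Y}_{k+1}=\nabla_{\T{Z}}F(s^h,\T{X};\T{Z}_{k+1})$, but the resulting bound $\|\T{Y}_{k+1}-\T{Y}_k\|_F\le L_F\|\T{Z}_{k+1}-\T{Z}_k\|_F$ is identical), the same block-wise sufficient decrease with $\gamma>2L_F$ (Lemmas~\ref{lemdec} and \ref{lem1}), the same relative-error subgradient bound from the prox optimality conditions and Lipschitzness of $\nabla\bar\Lc$ (Lemma~\ref{lem2}), and the same final appeal to the abstract KL finite-length argument of Bolte et al. If anything, your explicit insistence on first securing boundedness of the iterates so that the multilinear constraint map yields uniform local Lipschitz moduli along the trajectory is more careful than the paper, which treats $L^n,L^g,L^h$ as given global constants and deduces boundedness only implicitly.
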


Semi-Algebraic functions are an important class of objectives for which Algorithm~\ref{alg:lin:admm} converges:
\begin{definition}[Semi-Algebraic Functions]\label{def:semialge}
A function $\Psi : H \rightarrow (0, \infty]$ is \emph{semi-algebraic} provided that the graph $ \mathbb{G}(\Psi) = \{(x, \Psi(x)) \mid  x \in H\}$ is a semi-algebraic set, which in turn means that there exists a finite number of real polynomials $g_{ij}, h_{ij} : H \times \mathbb{R}\rightarrow \mathbb{R}$ such that 
\begin{align*}
      \mathbb{G}(\Psi) := \bigcup_{j=1}^p \bigcap_{i=1}^q \{ u \in H \mid g_{ij}(u) = 0 \text{ and } h_{ij}(u) < 0\}.
\end{align*}
\end{definition}

\begin{definition}[Sub-Analytic Functions]\label{def:analytic}
A function $\Psi : H \rightarrow (0, \infty]$ is \emph{sub-analytic} provided that the graph $ \mathbb{G}(\Psi) = \{(x, \Psi(x)) \mid  x \in H\}$ is a sub-analytic set, which in turn means that there exists a finite number of real analytic functions $g_{ij}, h_{ij} : H \times \mathbb{R}\rightarrow \mathbb{R}$ such that 
\begin{align*}
      \mathbb{G}(\Psi) := \bigcup_{j=1}^p \bigcap_{i=1}^q \{ u \in H \mid g_{ij}(u) = 0 \text{ and } h_{ij}(u) < 0\}.
\end{align*}
\end{definition}
It can be easily seen that both real analytic and semi-algebraic functions are sub-analytic. In general, the sum of two sub-analytic functions is not necessarily sub-analytic. However, it is easy to show that for two sub-analytic functions, if at least one function maps bounded sets to bounded sets, then their sum is also sub-analytic \citep{bolte2014proximal}.

The KL property has been shown to hold for a large class of functions including sub-analytic and semi-algebraic functions such as indicator functions of semi-algebraic sets, vector (semi)-norms $\|\cdot \|_p$ with $p \geq 0$ be any rational number, and matrix (semi)-norms (e.g., operator, trace, and Frobenious norm). These function classes cover most of smooth and nonconvex objective functions encountered in practical applications; see \citet{bolte2014proximal} for a comprehensive list.

\begin{remark}\label{rem22}
Each penalty function $J_i$ in \eqref{eq:aug} is a semi-algebraic function, while the loss function $F$ is sub-analytic. Hence, the augmented Lagrangian function
\begin{eqnarray*}
\Lc (\T{T})
   &=& F(s^h,\T{X};\T{Z}) + \lambda_1 J_1( \T{G})+ \lambda_2 J_2(\T{H})+ \sum_{n=1}^N \lambda_{3,n} J_{3,n}(\bU^{(n)})\\
   \nonumber
 & -& \langle \T{Y},  ({{\T{G}} + {\T{H}}}) \times_1 {\bU}^{(1)} \cdots \times_N {\bU}^{(N)} -\T{Z}\rangle \\  
&+& \frac{\gamma}{2} \| ({{\T{G}} + {\T{H}}}) \times_1 {\bU}^{(1)} \cdots \times_N {\bU}^{(N)} -\T{Z}\|^2_{F}, 
\end{eqnarray*}
which is the summation of semi-algebraic functions is itself semi-algebraic. Thus, the augmented Lagrangian function $\Lc (\T{T})$ satisfies the KL property.
\end{remark}

\section{Consistency of the DCOT Factorization}\label{sec:consist}
In this section, we derive asymptotic properties for the proposed DCOT factorization using the $\ell_2$-smoothing loss function defined in \eqref{eq:quad:loss}. In particular, we focus on the Gaussian case where $x_{i_1, \cdots, i_N}$ satisfies \eqref{eq:white_noise}. Under this setting, we provide the estimation error rate as a function of the sample size $\Omega$, the maximum rank $R_{\max}=\max\{R_1,R_2, \cdots, R_N \}$, and the tuning parameter $\lambda$ and show the necessity of the smoothing function $s^h$ for providing a faster convergence rate and a small prediction error.

Let $\widehat{\T{Z}} \in \Gamma(\T{Z})$ denote an estimator of $\T{Z}^*$. The prediction accuracy of $\hat{\T{Z}}$ is defined by the root mean square error (RMSE): 
\begin{equation}\label{K}
\rho ( \widehat{\T{Z}}, \T{Z}^*) = \left( \frac{1}{\Omega} \sum_{(i_1,\cdots, i_N )\in \Omega} (\hat{z}_{i_1 \cdots i_N} - z_{i_1 \cdots i_N}^*)^2 \right)^\frac{1}{2}.
 \end{equation}
In order to provide the asymptotic behavior of the penalized DCOT, we require the following technical assumptions:

\begin{assumption} \label{assu:smooth} Let $\{c_{i_n,j_n}\}_{n=1}^N$ be the label constraints defined in \eqref{eq:hypergraph-miss}. Then, there exist constants $a_1 \ge 0$ and $\alpha >0$, such that for any $N$-tuples $(i_1, i_2, \cdots, i_N)$ and $(j_1,j_2, \cdots,j_N)$ 
\begin{equation*}
\big|z^*_{i_1\cdots i_N}- z^*_{j_1  \cdots j_N} \big|\leq a_1 R_{\max} \max\big\{\sum_{n=1}^{N} \big(d(\by_{i_n},\by_{j_n})\big)^\alpha , \mathbbm{1}_{\prod_{n=1}^{N} c_{i_n,j_n}=0} \big\},
\end{equation*}
where $d (\by_{i_n},\by_{j_n})$ denotes the distance between $\by_{i_n}$ and $\by_{j_n}$, and $R_{\max}=\max\{R_1,R_2, \cdots, R_N \}$. 
\end{assumption}

Assumption~\ref{assu:smooth} describes the smoothness of $z^*_{i_1\cdots i_N}$ in terms of the side information $\T{Y}$. We that if $d(\by_{i_n},\by_{j_n})=0$, and $c_{i_n,j_n}=1$ for all $n \in [N]$, Assumption~\ref{assu:smooth} degenerates to $z^*_{i_1\cdots i_N}= z^*_{j_1 j_2  \cdots j_N}$. This assumption is mild when for example all fibers $\by_{i_n}$ of $\T{Y}$ are available, and is relatively more restrictive when they are absent. In the case when $N=2$, Assumption~\ref{assu:smooth} reduces to a variant of the regularity condition used in \citet{vieu1991nonparametric,wasserman2006all,stone1984asymptotically,marron1987asymptotically,dai2019smooth}. 

\begin{assumption} \label{assu:subgaus}
The tensor $\T{Y}$ has bounded support $\Psi \subset \mathbb{R}^{I_1\times I_2 \times \cdots \times I_N}$ and the error term $\epsilon_{i_1\cdots i_N}$ defined in \eqref{eq:white_noise} has a sub-Gaussian distribution with variance $\sigma^2$.
\end{assumption}
This assumption is the \textit{regularity condition} for the underlying probability distribution, and similar assumptions are widely used in literature to provide the asymptotic behavior of the matrix factorization methods~\citep{bi2017group,dai2019smooth}. 

The next result provides a general upper bound of the root mean square error $\rho(\widehat{\T{Z}},\T{Z}^*)$, which may vary by the window size $h$, the maximum rank $R_{\max}=\max\{R_1,R_2, \cdots, R_N \}$, and the number of observed variables $\Omega$.

\begin{theorem} \label{l2convergence}
Suppose Assumptions~\ref{assu:smooth} and \ref{assu:subgaus} hold. Let 
\begin{align*}
\phi_1 &:= \max_{(i_1,\cdots i_N)}  \sum_{(j_1,\cdots ,j_N) \in \Omega} s^h_{i_1\cdots i_N, j_1\cdots j_N} \sum_{n=1}^{N} \left(d(\by_{i_n},\by_{j_n})\right)^\alpha, \quad \textnormal{and} \\
\phi_2&:=\max_{(i_1,\cdots i_N)} \sum_{(j_1,\cdots ,j_N ) \in \Omega} \left( s^h_{i_1\cdots i_N, j_1\cdots j_N} \right)^2.
\end{align*}
Then, for some positive constant $a_2$, we have   
$$ P\left(\rho(\widehat{\T{Z}},\T{Z}^*) \ge \eta\right) \leq \exp\left(- \frac{a_2 \eta^2}{\phi_2} +  \sum_{n \in [N]} \log I_n\right),
$$
provided that
\begin{align*}
& \eta \geq \max \left\{\sqrt{R_{\max}} \phi_1, \sqrt{\phi_2}\right\} \sum_{n \in [N]} \log I_n, \quad  \textnormal{and} \\
& \lambda_1 J_1(\T{G}^*) +\lambda_2 J_2(\T{H}^*)+  \sum_{n=1}^N \lambda_{3,n} J_{3,n}(\bU^{(n)*}) \leq \eta^2.
\end{align*}
\end{theorem}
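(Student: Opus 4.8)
The plan is to run a penalized $M$-estimation argument driven by a basic inequality, then isolate a single stochastic term whose concentration yields the stated tail bound. Write $\Delta := \widehat{\T{Z}} - \T{Z}^*$ and recall from \eqref{eq:white_noise} that $x_{i_1\cdots i_N} = z^*_{i_1\cdots i_N} + \epsilon_{i_1\cdots i_N}$. First I would use optimality of $\widehat{\T{Z}}$ for the penalized objective \eqref{eq:loss:general}, the non-negativity of the penalties, and the second hypothesis $\lambda_1 J_1(\T{G}^*) + \lambda_2 J_2(\T{H}^*) + \sum_{n=1}^N \lambda_{3,n} J_{3,n}(\bU^{(n)*}) \le \eta^2$ to obtain the basic inequality $F(s^h,\T{X};\widehat{\T{Z}}) - F(s^h,\T{X};\T{Z}^*) \le \eta^2$, where the penalties on the estimator are dropped by non-negativity.

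The next step is to expand the left-hand side for the quadratic smoothing loss \eqref{eq:quad:loss}. Using the pointwise identity $(z_i - x_j)^2 - (z^*_i - x_j)^2 = \Delta_i^2 + 2\Delta_i(z^*_i - z^*_j) - 2\Delta_i\epsilon_j$ and summing against the weights $s^h_{i_1\cdots i_N, j_1\cdots j_N}$, I would split the difference into a nonnegative quadratic form $Q(\Delta)$, a deterministic bias term $B(\Delta)$ governed by the pairwise gaps $z^*_i - z^*_j$, and a stochastic term $S(\Delta)$ linear in the noise. The quadratic form is bounded below by a constant multiple of $\rho(\widehat{\T{Z}},\T{Z}^*)^2$: for each observed index the diagonal weight $\prod_{n} s^h_{i_n,i_n} c_{i_n,i_n}$ is bounded away from zero, since $d(\by_{i_n},\by_{i_n})=0$ and $c_{i_n,i_n}=1$, so the self-terms alone control $\rho^2$. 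For the bias term I would invoke Assumption~\ref{assu:smooth}; because the label-consistency factor $c_{i_n,j_n}$ forces the weight to be essentially zero whenever $\prod_{n} c_{i_n,j_n}=0$, the indicator branch of the smoothness bound contributes negligibly and one is left with $|z^*_i - z^*_j| \le a_1 R_{\min}\sum_{n=1}^N (d(\by_{i_n},\by_{j_n}))^\alpha$ on the support of the weights. Cauchy--Schwarz together with the definition of $\phi_1$ then yields $|B(\Delta)| \le C_1 \sqrt{R_{\min}}\,\phi_1\,\rho(\widehat{\T{Z}},\T{Z}^*)$, which is exactly why $\sqrt{R_{\min}}\,\phi_1$ appears in the lower threshold on $\eta$.

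The crux is the stochastic term, which takes the form $S(\Delta) = c_0 \sum_i \Delta_i e_i$ with $e_i = \sum_{(j_1,\cdots,j_N)\in\Omega} s^h_{i_1\cdots i_N,j_1\cdots j_N}\,\epsilon_{j_1\cdots j_N}$. Under Assumption~\ref{assu:subgaus} each $e_i$ is sub-Gaussian with variance proxy at most $\sigma^2\phi_2$, by the very definition of $\phi_2$. I would then bound $\sup_{\Delta} |S(\Delta)|/\rho(\widehat{\T{Z}},\T{Z}^*)$ over the class of admissible error tensors---those of the form $\widehat{\T{Z}} - \T{Z}^*$ whose multilinear ranks are controlled by $(R_1,\cdots,R_N)$---by combining a sub-Gaussian concentration inequality with a union bound over a finite net of the relevant low-dimensional set. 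The log-cardinality of such a net scales like $\sum_{n\in[N]}\log I_n$, producing the additive term in the exponent, while the sub-Gaussian tail at scale $\eta$ with variance proxy $\phi_2$ produces the factor $-a_2\eta^2/\phi_2$. Assembling the three estimates into a quadratic inequality of the form $c\,\rho^2 \le \eta^2 + (C_1\sqrt{R_{\min}}\,\phi_1 + C_2\sqrt{\phi_2})\big(\sum_{n\in[N]}\log I_n\big)\rho$ and invoking the stated lower bound on $\eta$ shows that the event $\{\rho(\widehat{\T{Z}},\T{Z}^*)\ge\eta\}$ is contained in the low-probability event controlled above.

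The main obstacle I anticipate is precisely the uniform control of $S(\Delta)$ over the non-convex set of double-core tensors: unlike the matrix case, one must bound the metric entropy while simultaneously carrying the two cores $\T{G},\T{H}$ and all $N$ factor matrices, and one must verify that the smoothing weights convert the i.i.d. sub-Gaussian noise into coordinates whose variance proxy is exactly $\phi_2$ rather than something larger. Tracking the normalization between the RMSE \eqref{K}, which averages over $\Omega$, and the loss \eqref{eq:loss:smooth}, which averages over all $\prod_{n\in[N]} I_n$ indices, is the remaining bookkeeping hazard that must be handled so that the deterministic lower bound on $Q(\Delta)$ and the two cross-term bounds share a common scale.
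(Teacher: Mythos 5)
Your overall architecture matches the paper's: a basic inequality from optimality of $\widehat{\T{Z}}$ plus the penalty hypothesis, a three-way decomposition of the quadratic smoothing loss difference into a negative quadratic, a deterministic bias term controlled by Assumption~\ref{assu:smooth} and $\phi_1$ via Cauchy--Schwarz, and a stochastic term built from the weighted noise sums $e_{i_1\cdots i_N}=\sum_{(j_1,\dots,j_N)\in\Omega}s^h_{i_1\cdots i_N,j_1\cdots j_N}\epsilon_{j_1\cdots j_N}$, each sub-Gaussian with variance proxy $\sigma^2\phi_2$. The paper additionally runs this over dyadic shells $A(2^{t-1}\eta)$ in the style of Wong--Shen and sums the resulting geometric series, but your single quadratic inequality $c\,\rho^2\le \eta^2+(\cdots)\rho$ is the same idea in compressed form and is not where the problem lies.

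The genuine gap is in your treatment of the stochastic term. You propose to control $\sup_{\Delta}|S(\Delta)|/\rho$ by a union bound over a finite net of the set of admissible low-multilinear-rank error tensors and assert that the log-cardinality of such a net scales like $\sum_{n\in[N]}\log I_n$. That is not correct: the metric entropy of the class of tensors with multilinear rank bounded by $(R_1,\dots,R_N)$ (two cores plus $N$ factor matrices) scales like $\prod_n R_n+\sum_n I_nR_n$ up to logarithms, which is far larger than $\sum_n\log I_n$, so a net argument over that class would inject a much bigger additive term into the exponent than the theorem states. The paper sidesteps the entropy of the tensor class entirely: after Cauchy--Schwarz the noise contribution is bounded by $\rho\cdot\bigl(\frac{1}{\prod_n I_n}\sum_{i_1,\dots,i_N}e_{i_1\cdots i_N}^2\bigr)^{1/2}\le\rho\cdot\max_{i_1,\dots,i_N}|e_{i_1\cdots i_N}|$, which completely decouples the error tensor from the noise, and the maximum is then controlled by a union bound over the $\prod_n I_n$ coordinate variables; this is exactly where $\log\prod_n I_n=\sum_{n\in[N]}\log I_n$ comes from. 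You should replace your net argument with this decoupling step. Two smaller points: your lower bound on the quadratic form via ``diagonal self-terms'' is unnecessary (and fragile, since the inner sum runs only over observed $(j_1,\dots,j_N)\in\Omega$ and the self-index need not be observed); the negative term $-(z^*_{i_1\cdots i_N}-z_{i_1\cdots i_N})^2$ falls out directly once one uses the normalization $\sum_{j_1\cdots j_N}s^h_{i_1\cdots i_N,j_1\cdots j_N}=1$. And the normalization mismatch you flag between the RMSE (averaged over $\Omega$) and the loss (averaged over $\prod_n I_n$) is real; the paper itself glosses over it in its analogue of your display for $Q(\Delta)$.
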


Theorem~\ref{l2convergence} is quite general in terms of the rates of  $I_1, \cdots, I_N$. If $\phi_1$ and $\phi_2$ tend to zero and can be computed for some specific smoothing parameters, the convergence rate then becomes  
$$ \rho(\widehat{\T{Z}},\T{Z}^*) \leq  \max \left\{\sqrt{R_{\max}} \phi_1, \sqrt{\phi_2}\right\} \sum_{n \in [N]} \log I_n. 
$$ 
The result of Theorem~\ref{l2convergence}, i.e., the upper bound of $ \rho(\widehat{\T{Z}},\T{Z}^*)$ may vary by the choice of parameters $\phi_1$ and $\phi_2$. Next, we provide an explicit convergence rate under some additional assumptions.  

\begin{assumption} \label{assu:kernels}
Let $s_{i_n,j_n}^h = K(h^{-1} \|\by_{i_n} - \by_{j_n} \|_2)$ and assume that the kernel function $K(.)$ satisfies 
\begin{equation}
\max \left\{\int_{0}^\infty K^2(u) du, \int_{0}^\infty K(u) u^\alpha du\right\} \leq a_3
\end{equation}
for $\alpha$ defined in Assumption~\ref{assu:smooth} and some finite $a_3>0$.
\end{assumption}
Assumption~\ref{assu:kernels} is widely used in literature for smoothing kernels~\citep{bi2017group,dai2019smooth}. Kernels with an exponential decay rate, such as the  RBF and Gaussian kernels always satisfy Assumption~\ref{assu:kernels}.
  
For any $(i_1, \cdots, i_N)$ and $(j_1, \cdots, j_N)$, let  
\begin{equation}\label{eq:uu}
U_{i_1 \cdots i_N,j_1\cdots j_N}:= \sum_{n=1}^{N}  \|\by_{i_n}-\by_{j_n}\|_2,
\end{equation}
and 
  \begin{equation*}
    \Delta_{i_1 \cdots i_n} :=
    \begin{cases*}
      1 & if $x_{i_1 \cdots i_N} \in \Omega$ \\
      0       & otherwise
    \end{cases*}
  \end{equation*}
Assume that $(y_{i_1 \cdots i_N} ,   \Delta_{i_1 \cdots i_N})$ are independent and identically distributed, but the distribution of $\Delta_{i_1 \cdots i_N}$ may depend on $y_{i_1 \cdots i_N}$.
\begin{assumption}\label{assum:prob}
For any $(i_1, \cdots, i_N)$ and $(j_1, \cdots, j_N)$, $P(c_{i_1 \cdots i_N,j_1\cdots j_N} = 1|\Delta_{j_1	 \cdots j_N}= 1)$ is bounded away from zero, and the conditional density $$f_{U_{i_1 \cdots i_N,j_1\cdots j_N}|c_{i_1 \cdots i_N,j_1\cdots j_N}=1, \Delta=1}$$ is continuous and bounded away from zero, where $c_{i_1 \cdots i_N,j_1\cdots j_N}=  \prod_{n=1}^{N} c_{i_n,j_n}$. 
\end{assumption}
Assumption~\ref{assum:prob} ensures that for any pair $(i_1,  \cdots, i_N)$, the probability of $\Delta_{i_1 \cdots i_N}$ may depend on $y_{i_1 \cdots i_N} $ and $s_{i_1 \cdots i_N,j_1\cdots j_N}^h$ and that the corresponding neighboring pairs are observed with positive probability. 

The following corollary provides an explicit value of $\phi_1$ and $\phi_2$, and the convergence rate for DCOT factorization using the smoothing loss function defined in \eqref{eq:loss}.

\begin{corollary}[Convergence Rate]\label{rate} Suppose Assumptions~\ref{assu:smooth}--\ref{assum:prob} hold. Then, we have $\phi_1= h^\alpha$, $\phi_2 = (|\Omega| h)^{-1}$, and
\begin{eqnarray}\label{rate:l2simple}
\rho ( \widehat{\T{Z}}, \T{Z}^*)  = O\left(\frac{\log \left( \prod_{n \in [N]} I_n\right)}{{|\Omega|}^\frac{\alpha}{2\alpha+1}}\right)
\end{eqnarray}
provided that $R_{\max}=O(1)$.
\end{corollary}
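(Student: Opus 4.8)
The plan is to obtain Corollary~\ref{rate} as a direct specialization of the general bound in Theorem~\ref{l2convergence}: once the two quantities $\phi_1$ and $\phi_2$ are evaluated for the kernel-induced similarity $s^h_{i_n,j_n}=K(h^{-1}\|\by_{i_n}-\by_{j_n}\|_2)$, the stated rate follows by substituting these values into the upper bound $\rho(\widehat{\T{Z}},\T{Z}^*)\le \max\{\sqrt{R_{\min}}\,\phi_1,\sqrt{\phi_2}\}\sum_{n\in[N]}\log I_n$ and choosing the window $h$ to balance the two competing terms. Thus the entire content of the corollary reduces to two kernel-smoothing computations plus a one-line optimization in $h$.

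First I would reduce both weighted sums to the scalar aggregated distance $U_{i_1\cdots i_N,j_1\cdots j_N}=\sum_{n=1}^N\|\by_{i_n}-\by_{j_n}\|_2$ from~\eqref{eq:uu}, since the product form of the Kronecker similarity together with the bounded support of $\T{Y}$ (Assumption~\ref{assu:subgaus}) lets one write $s^h_{i_1\cdots i_N,j_1\cdots j_N}$ as a kernel in $U$ restricted to the label-consistent pairs $\{\prod_n c_{i_n,j_n}=1\}$. To evaluate $\phi_2=\max_{i}\sum_{j\in\Omega}(s^h_{ij})^2$, I would treat the $s^h$ as normalized smoothing weights and pass from the sum over the observed index set to an integral against the conditional density of $U$ given $\{c=1,\Delta=1\}$. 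Because that density is continuous and bounded away from zero (Assumption~\ref{assum:prob}) and $\int_0^\infty K^2(u)\,du\le a_3$ (Assumption~\ref{assu:kernels}), the change of variables $u=h^{-1}U$ contributes a factor $h$, while the number of observed label-consistent neighbors grows like $|\Omega|h$ by the positive-probability part of Assumption~\ref{assum:prob}. Dividing the $O(|\Omega|h)$ numerator by the squared normalizer $O((|\Omega|h)^2)$ yields $\phi_2=O((|\Omega|h)^{-1})$.

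The computation of $\phi_1=\max_i\sum_{j\in\Omega}s^h_{ij}\sum_n (d(\by_{i_n},\by_{j_n}))^\alpha$ is analogous: on the support of $K(h^{-1}U_{ij})$ each $\|\by_{i_n}-\by_{j_n}\|_2=O(h)$, so $\sum_n d(\by_{i_n},\by_{j_n})^\alpha=O(U_{ij}^\alpha)$, and the bound $\int_0^\infty K(u)u^\alpha\,du\le a_3$ from Assumption~\ref{assu:kernels} gives, after the same change of variables, $\phi_1=O(h^\alpha)$. Substituting $\phi_1=h^\alpha$ and $\phi_2=(|\Omega|h)^{-1}$ into the bound of Theorem~\ref{l2convergence} leaves
\begin{equation*}
\rho(\widehat{\T{Z}},\T{Z}^*)\;=\;O\!\left(\max\Big\{\sqrt{R_{\min}}\,h^\alpha,\;(|\Omega|h)^{-1/2}\Big\}\,\sum_{n\in[N]}\log I_n\right),
\end{equation*}
and equating the two arguments of the maximum (treating $R_{\min}$ as fixed) gives the optimal window $h\asymp|\Omega|^{-1/(2\alpha+1)}$, for which both terms equal $|\Omega|^{-\alpha/(2\alpha+1)}$; this produces exactly the advertised rate $O\big(\log(\prod_{n}I_n)\,|\Omega|^{-\alpha/(2\alpha+1)}\big)$.

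The main obstacle I anticipate is making the sum-to-integral passage rigorous, since the observed index set $\Omega$ is random and its intersection with the label-consistent kernel neighborhood of a given fiber-tuple must be controlled. The clean identities $\phi_1=h^\alpha$ and $\phi_2=(|\Omega|h)^{-1}$ are population-level statements, so one must show that the empirical normalizer $\sum_{j}K(h^{-1}U_{ij})\mathbbm{1}_{\prod_n c_{i_n,j_n}=1}$ concentrates around its $\Theta(|\Omega|h)$ mean uniformly over the tuples $(i_1,\dots,i_N)$. This is where the i.i.d.\ assumption on $(y_{i_1\cdots i_N},\Delta_{i_1\cdots i_N})$ and the lower bounds in Assumption~\ref{assum:prob} are essential, and a uniform concentration (or a Nadaraya--Watson-type bias-variance) argument is needed to rule out sparse neighborhoods as $h\to 0$.
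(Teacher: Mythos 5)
Your proposal matches the paper's own argument essentially step for step: the paper likewise reduces everything to the aggregated distance $U_{i_1\cdots i_N,j_1\cdots j_N}$, invokes the law of large numbers together with Assumptions~\ref{assu:kernels} and~\ref{assum:prob} to show the label-consistent normalizer is of order $|\Omega|h$ while the numerators $\sum_j K_h(U)U^\alpha c$ and $\sum_j K_h^2(U)c$ are of order $|\Omega|h^{\alpha+1}$ and $|\Omega|h$ respectively, and then divides to obtain $\phi_1=h^\alpha$, $\phi_2=(|\Omega|h)^{-1}$ before plugging into Theorem~\ref{l2convergence} and balancing $h\asymp|\Omega|^{-1/(2\alpha+1)}$. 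The sum-to-integral and uniformity issue you flag is indeed glossed over in the paper as well (it simply writes ``it follows from the law of large numbers''), so your identification of that step as the delicate point is accurate rather than a deviation.
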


\begin{remark}\label{rem:rate}
Next we discuss the connections of our bound \eqref{rate:l2simple} with prior results in the literature. For $\alpha> 1/2$, since $|\Omega| \leq \prod_{n \in [N]} I_n < (\sum_{n \in [N]} I_n)^2,$ smoothing DCOT can achieve significantly better rate than 
$O \big( (\frac{\sum_{n \in [N]} I_n}{|\Omega|} \log (\frac{\sqrt{\prod_{n \in [N]} I_n}}{|\Omega|}))^{\frac{1}{2}} \big) $ established in \citet{bi2017group,bi2018multilayer} for matrix and tensors, respectively. In addition, Corollary~\ref{rate} reveals an interesting theoretical property of the smooth matrix factorization proposed by \cite{dai2019smooth} and suggests the convergence rate of the smooth tensor factorization for tensor (structured) data can be significantly better
than that for matrix (unstructured) data \citep{dai2019smooth}. Indeed, when $N=2$, it shows that the estimation error is bounded by $O \big(\frac{(I_1+I_2)^{1/(2\alpha +1)}}{|\Omega|^{1/2}} \log(I_1 I_2)\big)$ which is similar to the one provided in \citet[Corollary 1]{dai2019smooth}. This indicates a disadvantage of matricizing (unfolding) a data tensor for completion tasks such as recommender systems. More specifically, for unstructured data the bound scales linearly as $\alpha \rightarrow 0^{+}$ with the product of the factor matrices dimensions, whereas for tensor-structured data the bound scales linearly with the sum of the factor matrices dimensions. 
\end{remark}

\section{Experimental Results} \label{sec:result}

We test the performance of DCOT and its smoothing version (called S-DCOT) on a number of data analytics tasks, including subspace clustering, imaging tensor completion and denoising, recommender systems, dictionary learning, and multi-platform cancer analysis in terms of accuracy and scalability.

Algorithm \ref{alg:lin:admm} requires a good initializer to achieve good performance, which is also the case for the Tucker decomposition. To that end, we use DCOT with HOSVD \citep{de2000multilinear} and random initialization, called DCOT(H) and DCOT(R), respectively. In the first setting, given a tensor $\T{X}_0$, we construct the mode-``$n$" matricization $\T{X}_{0,(n)}$. Then, we compute the singular value decomposition $\T{X}_{0,(n)}= \bU^{(n)}_r \bD^{(n)}_r \bV^{(n)}_r$, and store the left singular vectors $\bU^{(n)}$. In both cases, the core tensor $\T{G}$ is the projection of $\T{X}$ onto the tensor basis formed by the factor matrices $\{\bU^{(n)}\}_{n=1}^N$, i.e., $\T{G} =\T{X} \times_{n=1}^N {\bU^{(n)}}^\top$. The initial heterogeneous core $\T{H}$ is set equal to $\T{G}$.

To select tuning parameters $\{\lambda_1, \lambda_2, \lambda_{3,1},\lambda_{3,2}, \cdots, \lambda_{3,N} \}$,    $\{R_i\}_{i=1}^N$, we search over a set of grid points aiming to minimize the RMSE defined in \eqref{K} or the average detection accuracy of clustering on the validation set.  Specifically, we used the following grids of values for the parameter search:
\begin{itemize}
    \item Regualrization paramaters  $\{\lambda_1, \lambda_2, \lambda_{3,1},\lambda_{3,2}, \cdots, \lambda_{3,N} \}$ are selected in
  \begin{subequations}
 \begin{align}\label{eqn:grid:lambda} 
 \nonumber
\lambda_1, \lambda_2 &\in \left\{\frac{1}{\|\T{X}\|_F}\cdot 10^{0.1(\nu-21)};~ \nu=1, \cdots, 41\right\},\\
\lambda_{3,1}, \ldots, \lambda_{3,N}  &\in \left\{ 10^{0.1(\nu-21)};~ \nu=1, \cdots, 41\right\},
\end{align}
\item Tensor ranks  $\{R_i\}_{i=1}^N$ ranging from
\begin{equation}\label{eqn:grid:rank}
\left\{5,10,15,20,25,30,50\right\}.
\end{equation}
\end{subequations}
\end{itemize}

 We note that scaling  $\{\lambda_1, \lambda_2, \lambda_{3,1},\lambda_{3,2}, \cdots, \lambda_{3,N} \}$  with tensor norms is motivated by the STDC model~\citep[3.5.2]{chen2013simultaneous}, and  provides an adaptive way to balance the impacts of the factor matrices and tensor cores for tensor factorization with smooth and gradient Lipschitz losses. 
In the implementation of S-DCOT, we use the average of $10$ multiple Gaussian kernels with $h$ selected in $\{0.75,1,1.25, \cdots, 3\}$ to define $s_{i_n,j_n}^h$. The choice of a Gaussian kernel is due to the better empirical performance obtained, compared to other possibilities.  
We set $c_{i_n, j_n}$ to $0.8$ if fibers belong to same clusters (subjects) and $0.2$ otherwise. The Kronecker-product similarity function is defined as in \eqref{eq:hypergraph-miss}. The smoothing functions are normalized such that $\sum_{j_1\cdots j_N} s^h_{i_1\cdots i_N,j_1\cdots j_N}=1$. We also set $\gamma=1/\sigma_1(\bX_{(1)}{\bX_{(1)}}^\top)$ as suggested by \citep{chen2013simultaneous}. 

Regarding the selection of the number of subjects $M$, we note that a rather small $M$ may not be adequately ``powered" to distinguish between the proposed method and the Tucker method. In practice, if subjects (clusters) are based on categorical variables, then we can use existing categories, and hence $M$ is known. However, if clustering is based on a continuous variable, we can apply the quantiles of the continuous variable to determine $M$ and ``quantize" the dataset accordingly; see, \citep{wang2010consistent}.

 
\subsection{DCOT for Tensor Completion Problems}\label{sec:completion}

Next, we examine the performance of DCOT~\footnote{https://github.com/Tarzanagh/DCOT} factorization for different tensor completion tasks.

\subsubsection{Image Completion and Denoising Problems}\label{sec:imgres}
We use S-DCOT for image completion and compare it with the following tensor factorization methods for image processing: fully Bayesian CP factorization using mixture prior (FBCP-MP) \citep{zhao2015bayesian}, simultaneous tensor decomposition and completion (STDC) using factor prior \citep{chen2013simultaneous}, high accuracy low rank tensor completion (HaLRTC) \citep{liu2013tensor}, exact tensor completion using TSVD \citep{zhang2016exact}, and Low-rank Tensor Completion by Parallel Matrix Factorization (TMAC) \citep{xu2013parallel} \footnote{The codes can be obtained from \url{https://github.com/qbzhao/BCPF}, \url{https://sites.google.com/site/fallcolor/projects/stdc}, \url{http://www.cs.rochester.edu/u/jliu/}, and \url{https://xu-yangyang.github.io/software.html}, respectively.}.
 
We applied our tensor completion method proposed in Subsection~\ref{sec:dcot:image} to the 4D CMU faces database \citep{sim2002cmu} and the Cine Cardiac dataset \citep{lingala2011accelerated}. The CMU dataset \citep{sim2002cmu} comprises of 65 subjects with 11 poses, and 21 types of illumination. All face images are aligned by their eye coordinates and then cropped and resized into $32\times 32$ images. Images are vectorized, and the dataset is arranged as a fourth-order tensor. Thus, the size of the CMU data is $65 \times 11 \times 21 \times 1024$. Since each facial image is similar under different illuminations, but for similar faces, poses are not necessarily similar, we consider the following partitions
\begin{equation*}
\T{H}_{m_{\kappa}}=\T{H}(m,:,\kappa,:),~~~\kappa=1, \dots, 21,~~~ m=1, \dots, 65,
\end{equation*}
which enforces the similarity across members of $I_3$.

Dynamic cardiac imaging is performed either in cine or real-time mode. Cine MRI, the clinical gold-standard for measuring cardiac function/volumetrics~\citep{bogaert2012clinical}, produces a movie of roughly 20 cardiac phases over a single cardiac cycle (heart beat). However, by exploiting the semi-periodic nature of cardiac motion, it is actually formed over many heart beats. Cine sampling is gated to a patient's heart beat, and as each data measurement is captured it is associated with a particular cardiac phase. This process continues until enough data has been collected such that all image frames are complete. Typically, an entire 2D cine cardiac MRI series is acquired within a single breath hold (less than 30 secs). In  our experiment, we consider a bSSFP long-axis cine sequence ($n=192$, $t=19$) acquired at 1.5 T (Tesla) magnets, using an phased-array cardiac receiver coil ($l=8$ channels) \citep{candes2013unbiased}. Hence, the size of the Cine Cardiac data is $192 \times 192 \times 8 \times 19$. We use the following partitions
\begin{equation*}
\T{H}_{m_{\kappa}}=\T{H}(:,:,m,\kappa),~~~\kappa=1,\dots,19,~~~~m=1, \cdots, 8,
\end{equation*}
which enforces the similarity across time dimension.
 
\begin{table*}
\rowcolors{2}{white}{black!05!white}
\renewcommand{\arraystretch}{1.3}
\centering
\resizebox{.9\textwidth}{!}
{
\begin{tabular}{l  c c c c c c c c c c c c c}
\hline
&& \multicolumn{2}{c}{S-DCOT(H)} & \multicolumn{2}{c}{FBCP-MP} & \multicolumn{2}{c}{STDC}  & \multicolumn{2}{c}{HaLRTC} & \multicolumn{2}{c}{TSVD} & \multicolumn{2}{c}{TMAC} \\
Data & $\rho$ &  RMSE & Time  & RMSE & Time  & RMSE & Time & RMSE & Time  & RMSE & time  & RMSE & Time\\
\hline
\hline
CMU 
& $0.8$ &  {15.04e-2} &  {47.22} &   30.28e-2& 125.19    & 21.77e-2 & 79.48  & 31.89e-2 & 225.15  & 40.11e-2 & 112.48    & 29.19e-2 & 40.69  \\
& $0.85$ &  {20.17e-2}&  {42.31}&     38.49e-15& 108.23   & 28.28e-2 & 57.19  & 35.14e-2 & 189.05 & 43.12e-2 & 99.82     & 24.15e-2 & 39.18  \\
& $0.9$ &   {26.13e-2} & {29.27} &   44.15e-2& 97.37      & 43.21e-2 & 35.72  & 49.17e-2 & 155.28  & 56.71e-2 & 104.33  & 44.26e-2 & 27.04  \\
& $0.95$ &  {54.84e-2} & {19.79} &     76.41e-2& 88.19   & 81.85e-2 & 13.11 & 67.77e-2 & 114.36  & 88.79e-2 & 111.49  & 85.14e-2 & 95.6  \\
\hline
\hline
Cine  
& $0.8$ &   {18.34e-2} & { 41.39} &     42.31e-2&112.39   & 33.71e-2 & 88.39 & 41.11e-2 & 178.38  & 49.71e-2 & 110.13   & 33.12e-2 & 55.01 \\
& $0.85$ &  {23.54e-2} & { 39.29} &   48.25e-2&110.18   & 44.15e-2 & 76.32 & 55.10e-2 & 144.82  & 55.19e-2 & 114.23   & 35.11e-2 & 47.33 \\
& $0.9$ &  {29.13e-2} & { 34.45}&  30.24e-2&108.34   & 55.38e-2 &88.72 & 55.19e-2 & 107.71  & 59.13e-2 & 101.41   & 46.13e-2 & 41.55 \\
& $0.95$ & { 51.33e-2} &  {27.88} &  77.25e-2&76.85   &78.31e-2 &23.05 & 78.08e-2 & 99.12  & 76.44e-2 & 78.45   & 57.14e-2 & 39.01 \\
\hline
\end{tabular}}
\caption{RMSE and runtime (seconds) of tensor completion methods on imaging datasets.}
\label{tab:compl}
\end{table*}

Table \ref{tab:compl} shows the completion results on the imaging datasets. The running time and RMSE corresponds to regularization parameters  $\lambda_1= 
\frac{10^{1.1}}{\|\T{X}\|_F},~\lambda_2=\frac{0.1}{\|\T{X}\|_F}, ~\lambda_{3,i}=10^{0.7}~\textnormal{for all}~i=1, \ldots,N$ and ranks $R_1=R_4=10$ and $R_2=R_3=5$ which are determined by an exhaustive grid search over \eqref{eqn:grid:lambda} and \eqref{eqn:grid:lambda}.  The reason to use an exhaustive grid search is that other approaches such as a train-test procedure may not be suitable, when the size of the data set available is small and the estimated performance could be overly optimistic or overly pessimistic \citep{brownlee2020trian}.

For each given ratio $\rho$, 5 test runs were conducted and RMSE is used to evaluate the performance. Table \ref{tab:compl} and Figure~\ref{fig:cardi} show that S-DCOT significantly outperforms its competitors.
\begin{figure}
	\centering \makebox[0in]{
		\begin{tabular}{c}
			\includegraphics[scale=0.35]{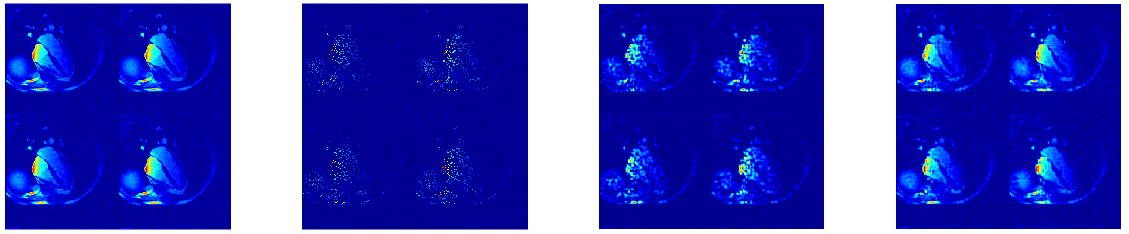}
			\\
			\includegraphics[scale=0.35]{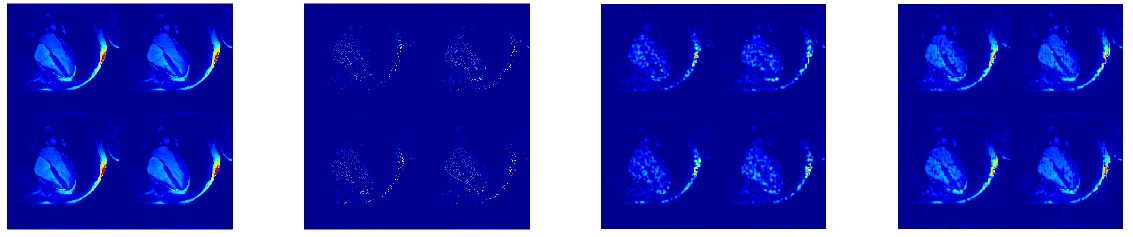}
		\end{tabular}}
\caption{ { Tucker (third column) and DCOT (fourth column) completion results for a cine cardiac MRI series ($192 \times 192 \times 8 \times 19$) with 85\% missing rate. The effect of using the supervised core is demonstrated (four images per channel). The first and second rows show some images from the channels one and two, respectively. The last column illustrates that S-DCOT outperforms the Tucker-based completion.}}\label{fig:cardi}
\end{figure}

\subsubsection{Rainfall in India}\label{sec:rainfall-india}

We consider monthly rainfall data for different regions in India for the period 1901--2015, available from Kaggle\footnote{\url{https://www.kaggle.com/rajanand/rainfall-in-india}}. For each of 36 regions, 12 months and 115 years, we have the total rainfall in millimeters. Since the monthly rainfall is similar within the time periods Jan-Mar, Apr-Jun, Jul-Sep and Oct-Dec, we consider the following partitions
\begin{eqnarray*}
\T{H}(m,1,:) &=&\T{H}(m,2,:), \\
\T{H}(m,3,:) &=&\T{H}(m,4,:)=\T{H}(m,5,:),\\
\T{H}(m,6,:) &=&\T{H}(m,7,:)=\T{H}(m,8,:), \\
\T{H}(m,9,:) &=&\T{H}(m,10,:)=\T{H}(m,11,:)=\T{H}(m,12,:),  \quad m=1,2, \cdots, 36,
\end{eqnarray*}
which enforces similarity across members of $I_2=12$. 

\begin{figure}[t]
	\centering \makebox[0in]{
		\begin{tabular}{cc}
			\includegraphics[scale=0.12]{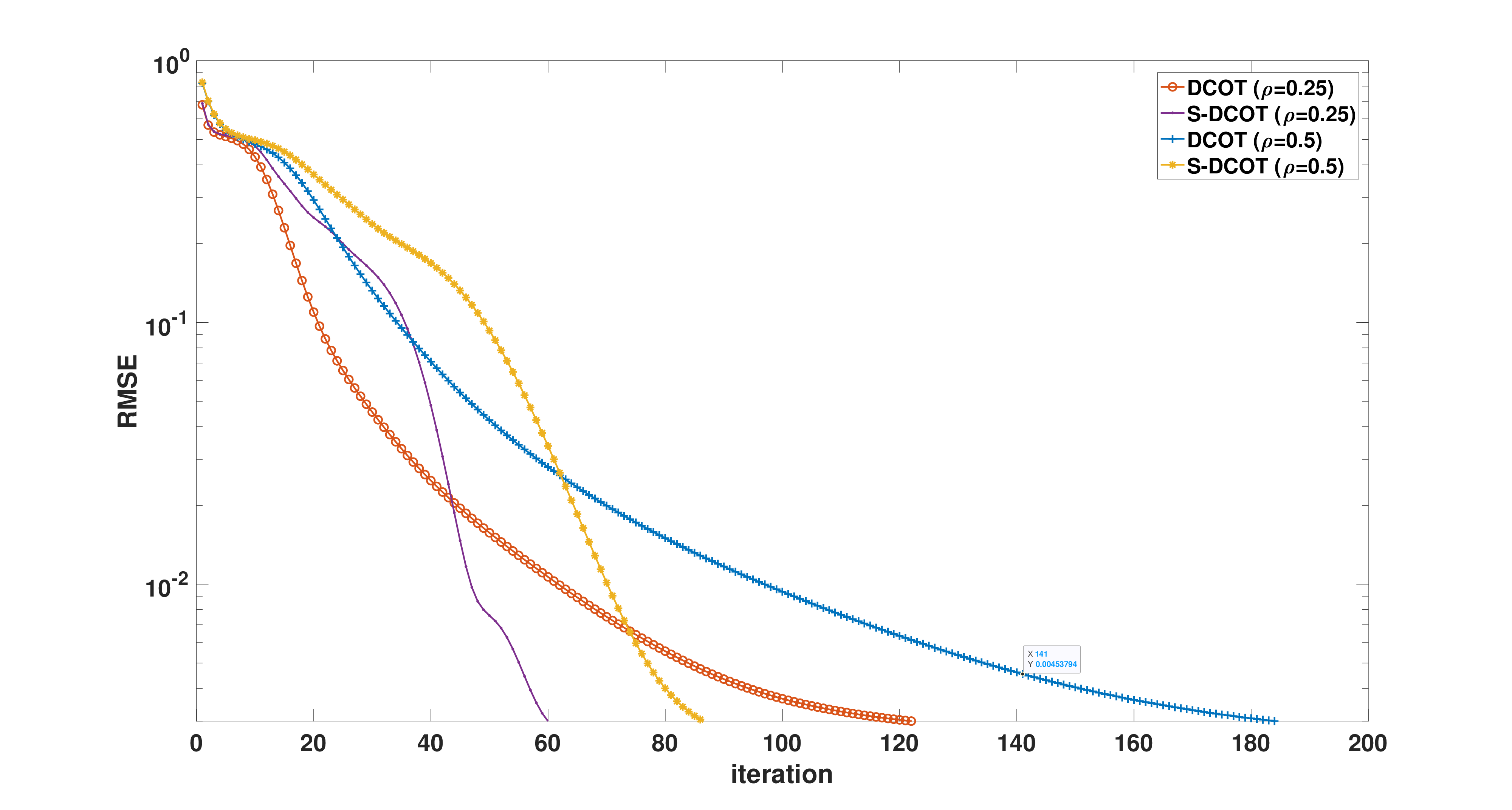}
			\includegraphics[scale=0.12]{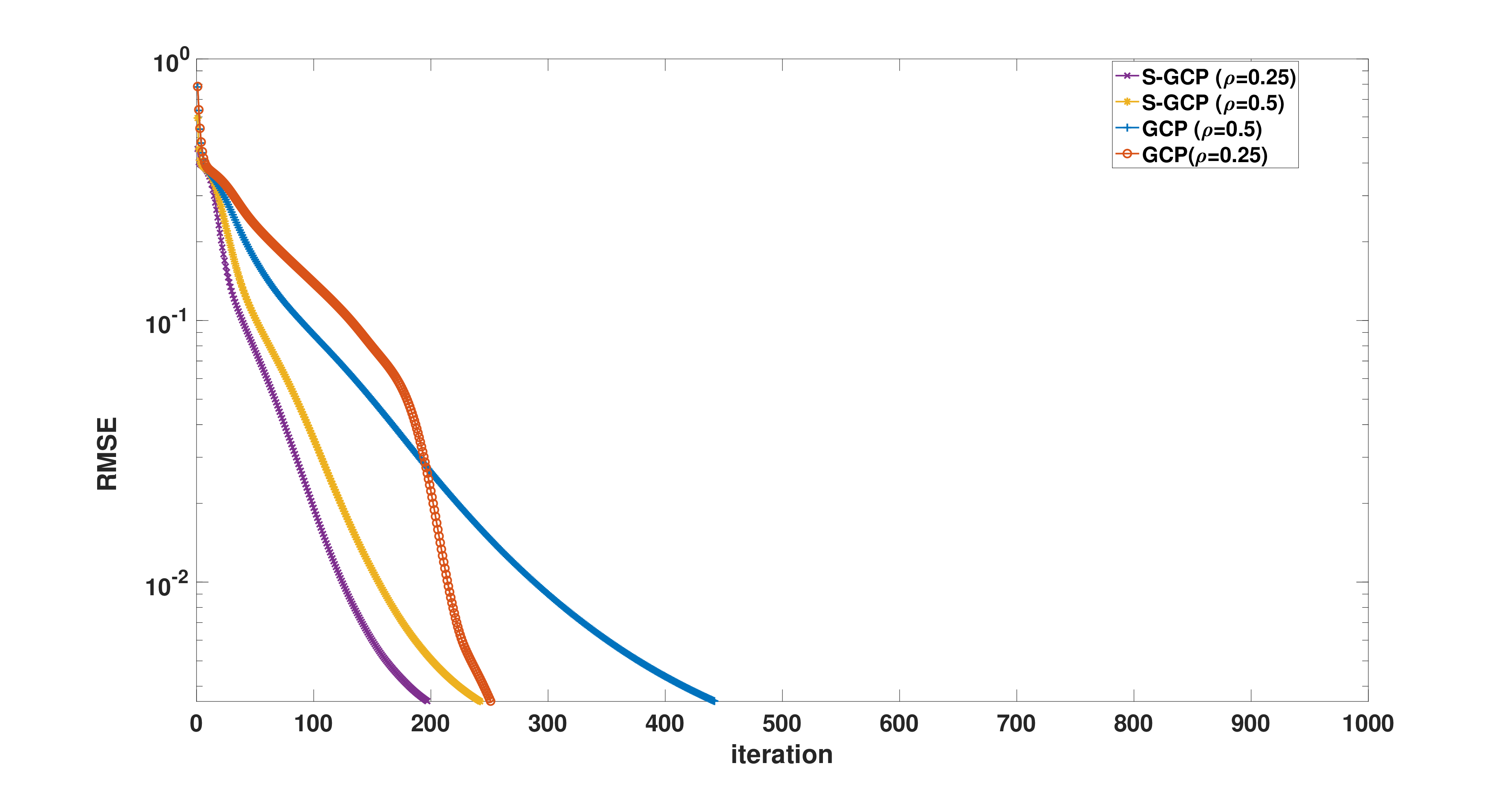}\\
		\end{tabular}}
\caption{RMSE and the number of iterations of generalized tensor methods on rainfall dataset. \textbf{Left}: DCOT vs S-DCOT \textbf{Right}: GCP vs S-GCP.}\label{fig:rain}
\end{figure}

There are several distributions for Rainfall data. As mentioned previously, one option is to assume a Gaussian distribution but impose a nonnegativity constraint. Recently, \citet{hong2018generalized} showed that the Gamma distribution is potentially a reasonable model for this dataset.  Hence, we investigate the performance of DCOT with smoothing loss \eqref{eq:gamma-loss} applied to Rainfall dataset and compare it with the  GCP \citep{hong2018generalized}. 
 For all completion algorithms,  the regularization parameters and tensor ranks are determined  by a grid search over \eqref{eqn:grid:lambda} and \eqref{eqn:grid:rank} aiming to minimize the RMSE.  We run each method with 5 different random starting points and report the average RMSE. Figure~\ref{fig:rain} indicates that the proposed S-DCOT has the best performance in terms of both RMSE and number of iterations. 

\subsubsection{DCOT Applied to Sparse Count Crime Data}\label{sec:count crime data}
Next, we examine the performance of smooth DCOT factorization for completion and factorization of count datasets. To do so, we consider a real-world crime statistics dataset containing more than 15 years of crime data from the city of Chicago. The data\footnote{\url{www.cityofchicago.org}} is organized as a 4-way tensor and obtained from FROSTT \footnote{\url{http://frostt.io/}}. The tensor modes correspond to 6,186 days from 2001 to 2017, 24 hours per day, 77 communities, and 32 types of crimes. Each $\T{X}(i_1, i_2, i_3, :)$ is the number of times that a crime occurred in neighborhood $i_3$ during hour $i_2$ on day $i_1$.  To enforce similarity within each community, we consider the following partitions
\begin{equation*}
\T{H}_{m_{\kappa}}=\T{H}(:,m,\kappa,:),~~~\kappa=1,\dots,R_3,~~~~m=1,\dots,R_1.
\end{equation*}
We use the DCOT model with the nonnegativity constraints and the proposed smoothing Poisson loss function defined in \eqref{eq:poisson}. 
We compare DCOT with GCP \citep{hong2018generalized}. For both  DCOT and GCP and their smoothing variants, regularization parameters and ranks are determined by a grid search over \eqref{eqn:grid:lambda} and \eqref{eqn:grid:rank}. We run each method with 5 different random starting points and report the average RMSE. 

The results are provided in Figure~\ref{fig:crime}. The GCP and S-GCP methods descend much more quickly, but do not reduce the loss quite as much, though this failure to achieve the same final minimum is likely an artifact of the function estimation. On the other hand, Figure~\ref{fig:crime} indicates that the proposed S-DCOT has the best performance in terms of RMSE. The RMSE of the proposed method is less than that of S-GCP, illustrating that S-DCOT has better performance among the competing tensor factorization methods.
\begin{figure}[t]
	\centering \makebox[0in]{
		\begin{tabular}{cc}
		\includegraphics[scale=0.12]{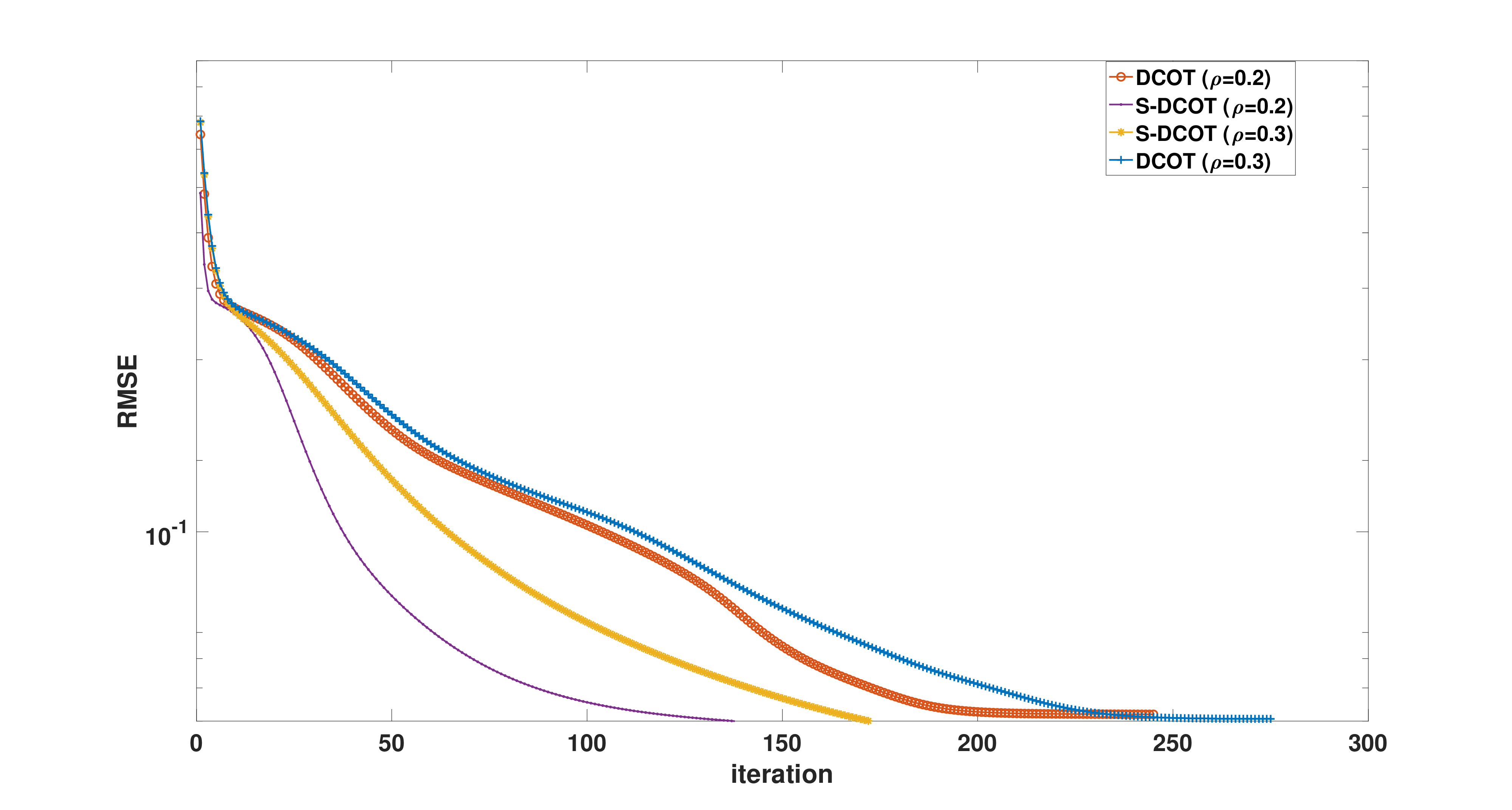}
		\includegraphics[scale=0.12]{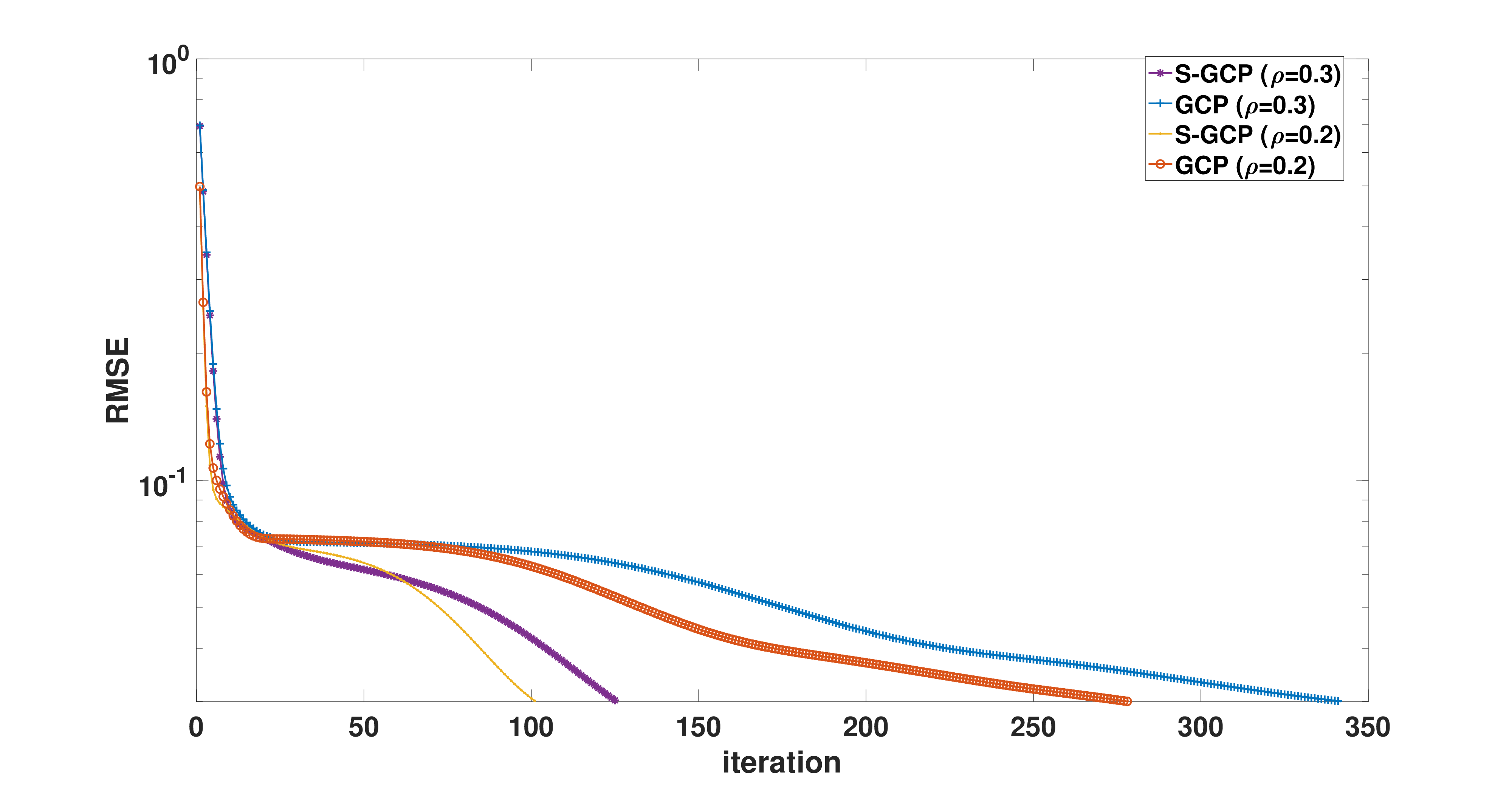}\\
		\end{tabular}}
\caption{ RMSE and the number of iterations of generalized tensor methods on count crime dataset. \textbf{Left}: DCOT vs S-DCOT \textbf{Right}: GCP vs S-GCP.}\label{fig:crime}
\end{figure}

\subsection{DCOT Applied to Multi-Platform Genomic Data}\label{sec:genom} 

DCOT and S-DCOT models have been applied to understand latent relationships between patients and genes for multi-platform genomic data.

We use the PanCan12 dataset \citep{weinstein2013cancer} and the Hallmark gene sets collections from MSigDB \citep{liberzon2015molecular} for obtaining the input tensor and label functions $c_{i_n, j_n}$, respectively. The PanCan12 contains multi-platform data with mapped clinical information of patient groups into cohorts of twelve cancer types including glioblastoma multiform, lymphoblastic acute myeloid leukemia, head and neck squamous carcinoma, lung adenocarcinoma, lung squamous carcinoma, breast carcinoma, kidney renal clear cell carcinoma, ovarian carcinoma, bladder carcinoma, colon adenocarcinoma, uterine cervical and endometrial carcinoma, and rectal adenocarcinoma. They are selected based on data maturity, adequate sample size, and publication or submission for publication of the primary analyses. The five Omics platforms used are miRNA-seq, methylation, somatic mutation, gene expression, and copy number variation.

The PanCan12 dataset was downloaded from the Sage Bionetworks repository by Synapse \citep{omberg2013enabling} and was transformed to a 3rd-order tensor ($4555 \times 15351 \times 5$), containing 4555 samples, 14351 genes, and 5 Omics platforms. The data for each platform was min-max normalized and was further normalized such that the Frobenius norm became one. In order to efficiently fuse the date into the interpretable latent factors, we consider the following partitions
\begin{equation*}
\T{H}(:,:,1)= \T{H}(:,:,1)=\dots= \T{H}(:,:,5),
\end{equation*}
which enforces the similarity across third dimension, i.e., platform.

\begin{table*}
\rowcolors{2}{white}{black!05!white}
\renewcommand{\arraystretch}{1.3}
\centering
\resizebox{.9\textwidth}{!}
{
\begin{tabular}{l  c  c c c c c c c c c c c c}
\hline
Dataset & & \multicolumn{2}{c}{\textsc{S-DCOT(R)}} & \multicolumn{2}{c}{DCOT(R)} & \multicolumn{2}{c}{S-Tucker(R)}  & \multicolumn{2}{c}{Tucker(R)}\\
\hline
PanCan12  
&   & 23.79e-2  (0.009) &&   25.04e-2(0.008)&  &  75.42e-2(0.007) &  & 89.2e-4(0.005) & \\
\hline
\end{tabular}}
\caption{RMSE of tensor methods applied to the PanCan12 dataset.} \label{tab:pancan}
\end{table*}  

For the gene subgroups, we chose the Hallmark gene sets collection from MSigDB \citep{liberzon2015molecular} and set $c_{i_n,j_n} =1 $ if genes belong to same subgroups. A gene smoothing function $s_{i_n,j_n}^h$ is generated in the form of gene-gene interaction within each subgroup. Test RMSE is used to measure the accuracy of tensor methods on this dataset. We split the data into a 50\% training set, a 25\%
validation set and a 25\% testing set, randomly.  The regularization parameters and tensor ranks are determined  by a grid search over \eqref{eqn:grid:lambda} and \eqref{eqn:grid:rank} aiming to minimize the RMSE on the validation set.
Table~\ref{tab:pancan} indicates that the proposed DCOT has the best performance in terms of RMSE. The test RMSE of Tucker is $5\times$ higher than that of DCOT. Furthermore, test RMSE of S-DCOT is slightly higher or even better than that of Tucker and S-Tucker. 

\subsection{DCOT Applied to Recommender Systems}\label{sec:dcotresult:recom}

Next, we consider S-DCOT for recommender systems and compare it with five competing factorization methods. Three methods correspond to existing ones, namely, Bayesian probabilistic tensor factorization (BPTF) \citep{xiong2010temporal}, the factorization machine (libFM) \citep{rendle2012factorization}, and the Gaussian process factorization machine (GPFM)~\citep{nguyen2014gaussian}.\footnote{The codes can be obtained from \url{https://www.cs.cmu.edu/~lxiong/bptf/bptf.html}, \url{http://www.libfm.org/}, and \url{http://trungngv.github.io/gpfm/}, respectively.}
In addition, we also investigate the performance of the structured matrix factorization (MF) \citep{bi2017group}, and  smooth neighborhood matrix factorization (S-MF)\citep{dai2019smooth} with the proposed linearized ADMM.

We apply the proposed method to MovieLens 1M data collected by GroupLens Research\footnote{\url{http://grouplens.org/datasets/movielens}}. This dataset contains 1,000,209 ratings of 3883 movies by 6040 users, and rating scores range from 1 to 5. Also, the MovieLens 1M dataset provides demographic information for the users (age, gender, occupation, zipcode), genres, and release dates of the movies. 

We define day as a context for the DCOT recommender model detailed in Subsection~\ref{sec:dcot:recomm}. Having the length of the context determined, we need to create \textit{time bands} for days. Time bands specify the time resolution of a day, which are also data dependent. We can create time bands with equal or different length. For this dataset, we used time bands of 1 hours. Events are assigned to time bands according to their time stamp. Thus, we can create the \texttt{[user, item, day, time bands]} tensor. We factorize this tensor using the DCOT model and we get feature vectors for each user, for each item, for each day, and for each time bands. 

Since we expect that at the same time offset in different days, the aggregated behavior of the users will be similar, we consider the following partitions
\begin{equation*}
\T{H}_{m_{\kappa}}=\T{H}(:,:,m,\kappa),~~~\kappa=1,\dots,R_4,~~~m=1,\dots, R_3,
\end{equation*}
which enforces the similarity across members of $R_4$. For this application, in addition to the heterogeneous core $\T{H}$, we focus on employing a user-item smoothing function to solve the cold-start issue. We classify users based on the quantiles of the number of their ratings and set $c_{i_n, j_n} =1$ if users belong to same clusters. On the other hand, the items are classified based on their release dates and $c_{i_n, j_n} =1$ if they belong to same clusters. 

\begin{table*}
\rowcolors{2}{white}{black!05!white}
\renewcommand{\arraystretch}{1.3}
\centering
\resizebox{.9\textwidth}{!}
{
\begin{tabular}{l  c  c c c c c c c c c c c c}
\hline
Dataset & & \multicolumn{2}{c}{\textsc{S-DCOT(R)}} & \multicolumn{2}{c}{libFM} & \multicolumn{2}{c}{GPFM}  & \multicolumn{2}{c}{BPTF} & \multicolumn{2}{c}{S-MF} & \multicolumn{2}{c}{MF} \\
MovieLens 1M 
&  &   {0.970(0.004)} &  & { 0.989(0.006)}& & {1.071(0.005)} &   & {1.027(0.007)} &  &{0.982(0.006)} &  &  {1.056 (0.007)}& \\
\hline
\end{tabular}}
\caption{ RMSE of completion methods on recommender systems dataset.}
\label{tab:recom}
\end{table*}  

We split the data into a 60\% training set, a 15\%
validation set and a 25\% testing set, randomly. The regularization parameters and tensor ranks are determined  by a grid search over \eqref{eqn:grid:lambda} and \eqref{eqn:grid:rank} aiming to minimize the RMSE on the validation set. Table~\ref{tab:recom} indicates that the proposed S-DCOT has the best performance in terms of RMSE. The RMSE of the proposed method is less than that of BPTF and libFM, illustrating that S-DCOT has better performance among the competing tensor factorization methods.

\begin{table*}
\rowcolors{2}{white}{black!05!white}
\renewcommand{\arraystretch}{1.3}
\centering
\resizebox{.9\textwidth}{!}
{
\begin{tabular}{l  c  c c c c c c c c c c c}
\hline 
&&&&&& Misspecification rate&&&&&&\\
Dataset && \multicolumn{2}{c}{\textsc{5 \%}} &  \multicolumn{2}{c}{\textsc{10 \%}} & \multicolumn{2}{c}{15 \%} & \multicolumn{2}{c}{20 \%}  & \multicolumn{2}{c}{30 \%} \\
MovieLens 1M 
& & { 0.976(0.003)} &  &   {0.981(0.005)}& & {0.989(0.004)} &   & {0.983(0.005)} &  & {1.137 (0.005)}& & \\
\hline
\end{tabular}}
\caption{ {RMSE of S-DCOT method on recommender systems dataset under $5 \%$, $10 \%$, $15 \%$, $20 \%$,  and $30\%$ subject (cluster) misspecification rate.}}
\label{tab:recom:missclas}
\end{table*}  

Next, we test the robustness of the proposed
method when the clusters are misspecified. Specifically, we misassign users and items to adjacent clusters with $5 \%$, $10 \%$, $15 \%$, $20 \%$,  and $30\%$ chance and then construct the smoothing loss function and DCOT factorization. The results are summarized in Table~\ref{tab:recom:missclas} which shows that S-DCOT is robust against the misspecification of clusters. Indeed, in comparison with Table~\ref{tab:recom}, S-DCOT method performs better than the other methods except when 30\% of the cluster members are misclassified.

\subsection{DCOT Applied to Subspace Clustering and Dictionary Learning}\label{sec:clus}
 
Previous studies show that HOSVD is very powerful for clustering, especially in multiway data clustering tasks \citep{lu2011survey}. It can achieve similar or better performance than most of the state-of-the-art clustering algorithms for mutliway data. Next, we evaluate the DCOT decomposition on a clustering problem. We compare DCOT with HOSVD and also four classical dimensionality reduction methods, including Principle Component Analysis (PCA) \citep{turk1991face}, Linear Discriminant Analysis (LDA) \citep{belhumeur1997eigenfaces}, Locality Preserving Projections (LPP) \citep{he2005face}, and Marginal Fisher Analysis (MFA) \citep{yan2007graph}.

The DCOT and competing methods are evaluated on the CMU and CASIA databases. The CASIA gait~B database \citep{yu2006framework} comprises of indoor walking sequences from 124 subjects with 11 camera views and 10 clothing styles. We represent each walking sequence by the Gait Energy Image \citep{man2006individual}, which is resized into size $128 \times 88$. All the images are vectorized, and the dataset is arranged as a fourth-order tensor, three for the latent factors and one for the feature dimension. Thus, the size of dataset is ($124 \times 11 \times 10 \times 11264$ ). 

To leverage the supervised information (i.e. subjects), we consider the following partitions
\begin{eqnarray*}
\T{H}_{m_{\kappa}} &=& \T{H}(m,:,\kappa,:),~~\kappa=1,\dots,21,~~m=1,\dots,65,~~\text{CMU database},\\
 \T{H}_{m_{\kappa}} &=& \T{H}(:,m,\kappa,:),~~\kappa=1,\dots,11,~~m=1,\cdots,10,~~\text{CASIA database}.
\end{eqnarray*}

We randomly select $\varpi$ subjects, where $\varpi= 10, 20,30$, with 5 selected poses or illuminations in the CMU-PIE dataset and with 4 selected views or clothing styles in the CASIA dataset, respectively. The remaining samples in each database are used for testing. We employ a nearest neighbor classifier and repeat the procedure 5 times and average the results. To avoid the singularity of this problem, we use the first $P$ principal coefficients determined by 95\% energy for all the methods. Note that the MGE, LPP and MFA are manifold-based methods and need to determine the $k$ nearest neighbors in their graphs.

The regularization parameters and tensor ranks are determined  by a grid search over \eqref{eqn:grid:lambda} and \eqref{eqn:grid:rank}. The overall performance is given in Table~\ref{tab:cluster}. We consider the following cases: ``untrained pose (UP)", ``untrained illumination (UI)" that refers to a subset of testing data whose corresponding factors (pose or illumination) are not available during training.
 
 DCOT achieves a high detection rate on the samples even when other complex factors are unobserved. DCOT significantly improves over multilinear-based methods and better interprets the cross-factor variation hidden in multi-factor data, even when the factor variation is not given in the training stage. We believe the benefits mainly
come from the discriminative core tensor and the similarity function which exploit all of the latent factors to embed factor-dependent data pairs in a unified way. 

\begin{table*}
\rowcolors{2}{white}{black!05!white}
\renewcommand{\arraystretch}{1.3}
\centering
\resizebox{.85\textwidth}{!}
{
\begin{tabular}{l | c | c c c c c c c c c c c ccc}
\hline
Data&& \multicolumn{2}{c}{S-DCOT(H)} & \multicolumn{2}{c}{DCOT(H)} & \multicolumn{2}{c}{HOSVD}  & \multicolumn{2}{c}{PCA} & \multicolumn{2}{c}{LPP} & \multicolumn{2}{c}{MFA}& \multicolumn{2}{c}{LDA} \\
\hline
\hline
CMU 
& $\text{UP}$ &  {36.79}&  &  {38.44}&  & 29.10 &  & 36.16 &   & 32.81&   &34.24& &33.19\\
& $\text{UI}$ &  {96.41}&  &  {93.57}&  & 90.39 &  & 75.39 &   & 83.90&   &92.14& &89.15\\
\hline
\hline
CASIA  
& $\text{UP}$ &  {70.24}&  &  {69.38}&  & 66.29 &  & 55.34 &   & 58.18&    &50.29& &62.17\\
& $\text{UI}$ &  {89.73}&  & {85.33}&  & 83.19 &  & 80.33 &   &84.34&    &82.29&  &83.21\\
\hline
\hline
\hline
\end{tabular}}
\caption{ {Average Detection Accuracy  of  Clustering (\%) methods  on the Face Databases.}}
\label{tab:cluster}
\end{table*} 

\section{Conclusion}\label{sec:conc}
A new tensor model was introduced for data analytic tasks for  heterogeneous datasets, wherein there are joint low-rank structures within groups of observations, but also discriminative structures across different groups. The proposed model uses a double core tensor (DCOT) factorization together with a family of smoothing loss functions. By leveraging the proposed smoothing function, the model accurately estimates the model factors, even in the presence of missing entries. A linearized ADMM method was developed to solve regularized versions of DCOT factorizations, that avoid large tensor operations and large memory storage requirements. Further, we established theoretically its global convergence, together with consistency of the estimates of the model parameters. The effectiveness of the DCOT model was illustrated on several heterogeneous tensor data.

\section*{Acknowledgements}
The authors would like to acknowledge constructive and useful comments by two anonymous referees and the Action Editor. The  work of Davoud Ataee Tarzanagh was supported by ARO YIP award W911NF1910027, AFOSR YIP award FA9550-19-1-0026, NSF BIGDATA award IIS-1838179 and a Fellowship from the University of Florida Informatics Institute. The work of George Michailidis was supported in part by NSF grants DMS 1854476 and 2210358 and NIH grant 1U01CA235487-01. 

\appendix
\section*{Appendices}

\begin{table*}
\caption{Basic notation and product}
\centering
    \begin{tabular}{ll}
\hline
{$\T{A}, \; \bA, \; \ba, \; a$}& tensor, matrix, vector, scalar \\
$\bA = [\ba_1,\ba_2,\cdots,\ba_R]$ & {matrix  $\bA$ with column vectors $\ba_r$}   \\
$a_{i_1\cdots i_N}$ & \minitab[p{.6\linewidth}]{entry of tensor $\T{A} \in \mathbb{R}^{I_1 \times \cdots \times I_N}$}\\
$\ba(:,i_2,\cdots,i_N)$ & \minitab[p{.6\linewidth}]{fiber of tensor $\T{A}$ obtained by fixing all but one index}\\
$\bA(:,:,i_3,\cdots,i_N)$ & \minitab[p{.6\linewidth}]{matrix slice of tensor $\T{A}$ obtained by fixing all but two indices}\\
$\T{A}(\mathcal{I}_1,\mathcal{I}_2,\cdots,\mathcal{I}_N)$ &  \minitab[p{.6\linewidth}] {subtensor of $\T{A}$ obtained by restricting indices to belong to subsets $\mathcal{I}_n \subseteq [I_n]\equiv \{ 1,2,\dots, I_n \}$}\\
$\bA_{(n)} \in \mathbb{R}^{I_n \times I_1 I_2 \cdots I_{n-1} I_{n+1} \cdots I_N}$ &  \minitab[p{.6\linewidth}]{mode-$n$ matricization of tensor $\T{A} \in \mathbb{R}^{I_1 \times I_2 \times \cdots \times I_N}$ whose entry at row $i_n$ and column $(i_1 -1) I_2 \cdots I_{n-1} I_{n+1} \cdots I_N +  \cdots + (i_{N-1} -1)I_N + i_N$ is equal to $a_{i_1\cdots i_N}$}
\\
$\rmvec{(\T{A})}\in \mathbb{R}^{I_N I_{N-1} \cdots I_1}$ &  \minitab[p{.6\linewidth}]{vectorization of tensor $\T{A} \in \mathbb{R}^{I_1 \times I_2 \times \cdots \times I_N}$ with the entry at position $i_1 +\sum_{k=2}^N[(i_k-1)I_1 I_2 \cdots I_{k-1}]$ equal to $a_{i_1\cdots i_N}$}\\
$\langle \T{X},\T{Y}\rangle := \rmvec(\T{X})^T\rmvec(\T{Y})$ & \minitab[p{.6\linewidth}] {inner product of two tensors $\T{X},\T{Y} \in \R^{I_1\times I_2 \times \dots \times I_N}$}\\
$\bD = \mbox {diag} (a_1,a_2,\cdots,a_R)$ & \minitab[p{.6\linewidth}] {diagonal matrix with $d_{rr}=a_r$}  \\
${\bA}^{\top}$,   ${\bA}^{-1}$, ${\bA}^{\dag}$ &  \minitab[p{.6\linewidth}]{transpose, inverse, and Moore-Penrose pseudo-inverse} \\
$\|\bX\|_F$,   $\|\bX\|_*$, $\|\bX\|_1$ &  \minitab[p{.6\linewidth}] {Frobenius norm,  the trace norm or trace norm as the sum of singular values of $\bX$,  and  the $\ell_1$ norm.} \\
  $\|\bX\|_{2}$, $\|\bX\|_{b,a}$,  & \minitab[p{.6\linewidth}] {The spectral norm as the largest singular value of matrix,  the $\ell_a/\ell_b$ norm as the $\ell_a$ norm of
the vector formed by the $\ell_b$ norm of every row.} \\
$\T{C} = \T{A} \times_n \bU$ &  \minitab[p{.6\linewidth}]{ mode-$n$ product of $\T{A} \in \mathbb{R}^{I_1 \times I_2 \times \cdots \times I_N}$ and $\bU \in \mathbb{R}^{j_N \times I_n}$ yields $\T{C} \in \mathbb{R}^{I_1 \times \cdots \times I_{n-1} \times j_N \times I_{n+1} \times \cdots \times I_N}$ with entries $c_{i_1 \cdots i_{n-1} \, j_N \, i_{n+1} \cdots i_N} = \sum_{i_n=1}^{I_n} a_{i_1 \cdots i_{n-1} \, i_n \, i_{n+1} \cdots i_N} b_{j_N \, i_n}$ and matrix representation $\bC_{(n)} = \bU \, \bA_{(n)}$} \\
{$\T{C} = [ \T{A}; \bU^{(1)}, \bU^{(2)}, \cdots, \bU^{(N)}]$ } & \minitab[p{.6\linewidth}] {full multilinear product, $\T{C} =   {\T{A}} \times_1 \bU^{(1)} \times_2 \bU^{(2)}
\cdots \times_N \bU^{(N)}$ } \\
$\T{C} = \T{A} \circ \T{B}$ & \minitab[p{.6\linewidth}] { tensor or outer product of $\T{A} \in \mathbb{R}^{I_1 \times I_2 \times \cdots \times I_N}$ and $\T{B} \in \mathbb{R}^{J_1 \times J_2 \times \cdots \times j_N}$ yields $\T{C} \in \mathbb{R}^{I_1 \times I_2 \times \cdots \times I_N \times J_1 \times J_2 \times \cdots \times j_N}$ with entries $c_{i_1\cdots i_N \,j_1\cdots j_N} = a_{i_1\cdots i_N} b_{j_1\cdots j_N}$} \\
$\T{X} = \ba^{(1)} \circ \ba^{(2)} \circ \cdots \circ \ba^{(N)}$ & \minitab[p{.6\linewidth}]{ tensor or outer product of vectors $\ba^{(n)} \in \mathbb{R}^{I_n}\;$ $(n=1,\cdots,N)$ yields a rank-1 tensor $\T{X} \in \mathbb{R}^{I_1 \times I_2 \times \cdots \times I_N}$ with entries
}
\\
{$\bC = \bA\otimes\bU$}& \minitab[p{.6\linewidth}]{Kronecker product of $\bA \in \mathbb{R}^{I_1 \times I_2}$ and $\bU \in \mathbb{R}^{J_1 \times J_2}$ yields $\bC \in \mathbb{R}^{I_1 J_1 \times I_2 J_2}$ with entries  $c_{(i_1-1) J_1 +j_1,(i_2-1) J_2 + j_2} = a_{i_1 i_2} \: b_{j_1 j_2}$
}
\\
\\
 \hline
    \end{tabular}
\label{table_notation2}
\end{table*}

\section{Updating Parameters in Algorithm \ref{alg:lin:admm}} \label{sec:appendixb}

This section provides detailed implementation of the linearized ADMM method for solving problem~\eqref{eq:loss:general}.

\subsection*{Proof of Lemma \ref{lem:pgrad}}\label{sec:proof-lemma1}
\begin{proof}
For simplicity, we  assume that $N=3$ and $\T{S}=\T{G}+\T{H}$. We calculate the partial gradient of $\bar{\Lc}( [\T{S}; \bU^{(1)}, \bU^{(2)}, \bU^{(3)} ] )$ with respect to $\bU^{(1)}$ by chain rule:
\begin{equation*}
    \begin{split}
        \frac{\partial \bar{\Lc}}{\partial \bU_{ij}^{(1)}} & = \sum_{a=1}^{I_1} \sum_{b=1}^{I_2} \sum_{c=1}^{I_3} \frac{\partial \bar{\Lc}}{\partial \T{T}_{abc}} \cdot \frac{\partial \T{T}_{abc}}{\partial \bU_{ij}^{(1)}} \\
        & = \sum_{a=1}^{I_1} \sum_{b=1}^{I_2} \sum_{c=1}^{I_3} \frac{\partial \bar{\Lc}}{\partial \T{T}_{abc}} \cdot \left( 1_{\{a=i\}} \sum_{r_2=1}^{R_2}\sum_{r_3=1}^{R_3} \T{S}_{jr_2r_3}\bU_{br_2}^{(2)}\bU_{cr_3}^{(3)} \right)\\
        & = \sum_{b=1}^{I_2} \sum_{c=1}^{I_3} \frac{\partial \bar{\Lc}}{\partial \T{T}_{ibc}} \left(\sum_{r_2=1}^{R_2}\sum_{r_3=1}^{R_3} \T{S}_{jr_2r_3}\bU_{br_2}^{(2)}\bU_{c r_3}^{(3)}\right),
    \end{split}
\end{equation*}
where the second identity comes from the following fact:
$$\T{T}_{abc} = \sum_{r_1=1}^{R_1}\sum_{r_2=1}^{R_2}\sum_{r_3=1}^{R_3} \T{S}_{r_1r_2r_3}\bU_{ar_1}^{(1)}\bU_{br_2}^{(2)}\bU_{cr_3}^{(3)}.$$ Let $\T{M}=\nabla \bar{\Lc}(\T{T})$. One can verify that
\begin{equation*}
    \left(\T{M}_{(1)}(\bU^{(3)}\otimes \bU^{(2)})\bS_{(1)}^\top\right)_{ij} = \sum_{k_1=1}^{I_2}\sum_{k_2=1}^{I_3}\sum_{k_3=1}^{R_2} \sum_{k_4=1}^{R_3} \frac{\partial \bar{\Lc}}{\partial \T{T}_{ik_1k_2}}\cdot \bU_{k_1k_3}^{(2)}\bU_{ k_2k_4}^{(3)} \T{S}_{jk_3k_4}.
\end{equation*}
Here, $\T{M}_{(1)}$ and $\bS_{(1)}$ are mode-$1$ matricization of $\T{M}$ and $\T{S}$, respectively.

%
The partial gradient for $\bU^{(2)}$ and $\bU^{(3)}$ can be similarly calculated. For core tensor $\T{S}$, we have
\begin{equation*}
    \begin{split}
        \frac{\partial \bar{\Lc}}{\partial \T{S}_{ijk}} & = \sum_{a=1}^{I_1} \sum_{b=1}^{I_2} \sum_{c=1}^{I_3} \frac{\partial\bar{\Lc}}{\partial \T{T}_{abc}} \cdot \frac{\partial \T{T}_{abc}}{\partial \T{S}_{ijk}} \\
        & = \sum_{a=1}^{I_1} \sum_{b=1}^{I_2} \sum_{c=1}^{I_3} \frac{\partial \bar{\Lc}}{\partial \T{T}_{abc}} \cdot \bU_{ai}^{(1)} \bU_{bj}^{(2)} \bU_{ck}^{(3)}\\
        & = \left( \nabla \bar{\Lc}(\T{T}); \bU^{(1)}, \bU^{(2)}, \bU^{(3)} \rrbracket\right)_{ijk}\\
        &  = \left( [ \T{M}; \bU^{(1)}, \bU^{(2)}, \bU^{(3)} ]\right)_{ijk},
    \end{split}
\end{equation*}
which has finished the proof of this lemma. 
\end{proof}

\subsection*{{Updating} $\bU^{(n)}$}\label{sec:up:u}
We need the gradient of the function $\bar{\Lc}(\T{T})$ in \eqref{eq:quadprob} with respect to the factor matrices. It follows from Lemma~\ref{lem:pgrad} that 
\begin{eqnarray}\label{eq:gradu}
\nabla_{\bU^{(n)}} \bar{\Lc}(\T{T}_k^{\bU^{(n)}_k})= \nabla \bar{\Lc}(\T{T}_k^{\bU^{(n)}_k}) (\bigotimes_{t \neq n} \bU^{(t)}_k)(\bH_{k,(t)}+\bG_{k,(t)})^\top.
\end{eqnarray}
Substituting \eqref{eq:gradu} into \eqref{eq:finalup1}, we obtain
\begin{eqnarray}\label{eq:quad:u}
 \nonumber
 \bU^{(n)}_{k+1} &=& \argmin_{\bU^{(n)}} \quad \lambda_{3,n} J_{3,n}(\bU^{(n)})+ \langle \nabla_{\bU^{(n)}} \bar{\Lc}(\T{T}_k^{\bU^{(n)}_k}), \bU^{(n)}-\bU^{(n)}_k \rangle\\
 & + & \frac{ \varrho^n}{2 }\| \bU^{(n)}-\bU^{(n)}_k\|_F^2,
\end{eqnarray}
where  $\varrho^n$ is a constant equal or greater than the Lipschitz of the gradient $\nabla \bar{\Lc}(\T{T}^{\bU^{(n)}})$.

It follows from \eqref{D:ProximalMap} that \eqref{eq:quad:u} has the following solution
\begin{equation}\label{eq:update:u}
   \bU^{(n)}_{k+1} = \prox^{J_{3,n}}_{\varrho^n} \Big(\bU^{(n)}_{k} -\frac{1}{\varrho^n}  \nabla_{\bU^{(n)}} \bar{\Lc}(\T{T}^{\bU^{(n)}_k}_k) , \frac{\lambda_{3,n}}{\varrho^n} \Big).
\end{equation}
It is worth mentioning that we can use \eqref{eq:update:u} for different choices of penalty functions. As discussed in \eqref{DCOT:examples}, typical examples of the function $J_{3,n}(\bU^{(n)})$ include $\|\bU^{(n)}\|_1$, $\|\bU^{(n)}\|_*$ or the indicator of a closed convex convex set. For example, if $ J_{3,n}(\bU^{(n)}) =\|\bU^{(n)}\|_1$, then we can apply the $\ell_1$ proxiaml operator \citep{parikh2014proximal}. 


\subsection*{\text{Updating $\T{G}$}}\label{sec:up:g}

From \eqref{eq:finalup2}, we have
\begin{eqnarray*}
   \T{G}_{k+1} &=& \argmin_{\T{G}} \quad   \langle \nabla \bar{\Lc} (\T{T}^{\T{G}}_k), \T{G}-\T{G}_k \rangle + 
 \lambda_1 J_1(\T{G}) + \frac{ \varrho^g}{2 }\| \T{G}-\T{G}_k\|_F^2 
\end{eqnarray*}
It follows from Lemma~\ref{lem:pgrad} that
\begin{align*}
\nabla_{\T{G}} \bar{\Lc} (\T{T}^{\T{G}}_k)= \T{M}(\T{T}^{\T{G}}_k) \times_1 \bU^{(1)}_{k+1} \cdots \times_N \bU^{(N)}_{k+1},
\end{align*}
where 
\begin{equation}
\T{M}(\T{T}^{\T{G}}_k) = \gamma \Big(({{\T{G}}_k + {\T{H}}_k}) \times_1 {\bU}^{(1)}_{k+1} \cdots \times_N {\bU}^{(N)}_{k+1} -\T{Z}_{k} -\frac{1}{\gamma} \T{Y}_k \Big).
\end{equation}
Now, from \eqref{D:ProximalMap}, we obtain 
\begin{equation}\label{eq:update:g}
\T{g}_{k+1}=\prox^{J_1}_{\varrho^g}\big(\T{g}_k- \frac{1}{\varrho^g} \nabla_{\T{G}} \bar{\Lc} (\T{T}^{\T{G}_k}_{k}), \frac{\lambda_1}{\varrho^g}\big).
\end{equation}

\subsection*{\text{Updating $\T{H}$}}\label{sec:up:h}

To minimize the sub-Lagrangian function w.r.t. $\T{H}$, using \eqref{eq:finalup3}, we have
\begin{eqnarray*}
   \T{H}_{k+1} &=& \argmin_{\T{H}} \quad   \langle \nabla \bar{\Lc} (\T{T}^{\T{H}_k}_k), \T{H}-\T{H}_k \rangle + 
 \lambda_2 J_2(\T{H}) + \frac{ \varrho^h}{2 }\| \T{H}-\T{H}_k\|_F^2.
\end{eqnarray*}
This problem is separable w.r.t. $\{\T{H}_{m_\pi}\}_{m=1}^M$ and we have that
\begin{eqnarray*}
\T{H}_{m,k+1} = \argmin_{\T{H}_{m_\pi}} && \langle \nabla \bar{\Lc} (\T{T}^{\T{H}_{m_\pi}}_k), \T{H}_{m_\pi}-\T{H}_{{m_\pi},k} \rangle \\
&+&  \lambda_2 J_2(\T{H}_{m_\pi}) + \frac{ \varrho^h}{2 }\| \T{H}_{m_\pi}-\T{H}_{{m_\pi},k}\|_F^2.
\end{eqnarray*}  
Let
\begin{eqnarray}\label{eq:resi}
\T{R}^{\T{G}}_k &=& \T{Z}_k-\gamma^{-1} \T{Y}_k - [\T{G}_{k+1}; \bU^{(1)}_{k+1}, \dots, \bU^{(N)}_{k+1}].
\end{eqnarray}
The gradient $ \nabla \bar{\Lc} (\T{T}^{\T{H}_{m_\pi}}_k)$ is equal to
$$ \nabla_{\T{H}_{m_\pi}} \Big(\frac{\gamma}{2} \|\T{H} \times_1 {\bU}^{(1)}_{k+1} \cdots \times_N {\bU}^{(N)}_{k+1} -\T{R}^{\T{G}}_{k}\|_F\Big).$$
Let $\br^g_{k}=\rmvec{(\T{R}^{\T{G}}_{k})}$ and  $\bh= \rmvec{(\T{H})}$. Then, we obtain 
\begin{eqnarray}
\|\bU_{k+1} \bh- \br^g_k\|_F^2 = \sum_{r_1\cdots r_N}  \sum_{i_1\cdots i_N} ((\bU_{k+1})_{i_1 i_2 \cdots i_N, r_1 \cdots r_N} \bh_{r_1 \cdots r_N}- (\br^g_k)_{i_1\cdots i_N})^2.
\end{eqnarray}
Hence
\begin{eqnarray}\label{eq:update:h000}
\nonumber
&& \big(\nabla \bar{\Lc} (\T{T}^{\T{H}_{m_\pi}})\big)_{r_1\cdots r_N}   \\
&=&  \gamma \sum_{r_1 \cdots r_N \in \T{H}_{m}}  \sum_{i_1 \cdots i_N} (\bU_{k+1})_{i_1 \cdots i_N, r_1 \cdots r_N} ((\bU_{k+1})_{i_1 \cdots i_N r_1 \cdots r_N} {\bh}_{r_1 \cdots r_N}- (\br^g_k)_{i_1  \cdots i_N}). 
\end{eqnarray}

Using \eqref{eq:update:h000} and \eqref{eq:finalup3}, we have
\begin{equation}\label{eq:update:h0}
\T{H}_{m_\pi, k+1}=\text{Prox}^{J_2}_{\varrho^h}\Big(\T{H}_{m_\pi,k}- \frac{1}{\varrho^h} \nabla \bar{\Lc} (\T{T}^{\T{H}_{m_\pi,k}}_{k}), \frac{\lambda_2}{\varrho^h} \Big), \qquad m=1,2, \ldots, M. 
\end{equation}
Finally, we set
\begin{equation}
\T{H}_{m_1, k+1} =  \T{H}_{m_2, k+1}= \cdots = \T{H}_{m_\pi, k+1}.
\end{equation}

\subsection*{Updating $\T{Z}$}\label{sec:up:z}
We derive an explicit formulation of the element-wise gradient w.r.t. $\T{Z}$ along with a straightforward way of handling missing data.  To do so, from \eqref{eq:finalup4}, we have
\begin{eqnarray}\label{eq:genz}
\nonumber
\T{Z}_{k+1} &=& \argmin_{\T{Z}} \Big\{F(s^h,\T{X};\T{Z}) \\
 & +& \frac{\gamma}{2} \|\T{Z}- \gamma^{-1} \T{Y}_{k}- (\T{G}_k + \T{H}_k) \times_1 \bU^{(1)}_{k+1} \cdots \times_N \bU^{(N)}_{k+1} \|^2_{F}
\Big\}.
\end{eqnarray}

Next, we provide a generalized framework for computing gradients and handling missing data that enables the use of standard optimization methods for solving \eqref{eq:genz}. Let 
\begin{align*}
\T{R} &= \T{Z}_k- \gamma^{-1} \T{Y}_{k}- (\T{G}_k + \T{H}_k) \times_1 \bU^{(1)}_{k+1} \cdots \times_N \bU^{(N)}_{k+1}, \quad \text{and} \\
\T{N} &= \nabla \bar{\Lc}(\T{T}^{\T{Z}}_k).  
\end{align*}
The gradient of the objective in \eqref{eq:genz} w.r.t. $z_{i_1 \cdots i_N} \in \Omega$ is given by 
\begin{eqnarray}
\T{N}_{i_1\cdots i_N} &=& \nabla_{z_{i_1\cdots i_n}} F(s^h,\T{X};\T{Z})+  \gamma \T{R}_{z_{i_1 \cdots i_N}},
\end{eqnarray}
where 
$$
F(s^h,\T{X};\T{Z}) =  - \frac{1}{\prod_{n \in [N]} I_n}\sum_{i_1=1}^{I_1}\cdots\sum_{i_N=1}^{I_N} f_{i_1\cdots i_N}(s^h , \T{X};\T{Z} ).
$$

Now, we can solve \eqref{eq:genz} via a gradient-based optimization method.  In our implementation, we use the limited-memory BFGS method  \citep{liu1989limited}.

\section{ Convergence Analysis of Linearized ADMM}
%
%

The next result shows that Algorithm \ref{alg:lin:admm} provides sufficient decrease of the augmented Lagrangian $\Lc(\T{T})$ in each iteration.
 
\begin{lemma} \label{lemdec}
Let Assumption~\ref{AssumptionsA} hold, and $\gamma > L$, $\varrho^{g}> L^g, \varrho^{h}>L^h$, and $\varrho^{n}> L^n$ for $n=1, 2, \dots, N$,  where $L^{n}, L^g, L^h$ and $L$ are Lipschitz constants of the gradients of Lagrangian function $\Lc$ w.r.t.  $\{\bU^{(n)}\}_{n=1}^N, \T{G}, \T{H}$ and $\T{Z}$, respectively. Then, for the sequence  $\{\T{T}_k\}_{k\geq 0}$  generated by Algorithm \ref{alg:lin:admm}, we have
\begin{eqnarray}\label{eq:suf}
\nonumber
\Lc(\T{T}^{\bU^{(n)}_k}_{k})-
\Lc(\T{T}^{\bU^{(n)}_{k+1}}_{k+1})&-& \frac{ \varrho^{(n)}-L^{(n)}}{2}\| \bU^{(n)}_{k}- \bU^{(n)}_{k+1}\|^2_{F} \geq 0, \qquad n=1, \dots, N,\\
\nonumber
\Lc(\T{T}^{\T{G}_{k}}_{k})-
\Lc(\T{T}^{\T{G}_{k+1}}_{k+1})&-& \frac{ \varrho^{g}-L^g}{2}\| \T{G}_{k}- \T{G}_{k+1}\|^2_{F} \geq 0,\\
\nonumber
\Lc(\T{T}^{\T{H}_{k}}_{k})-
\Lc(\T{H}^{\T{H}_{k+1}}_{k+1})&-& \frac{\varrho^{h}-L^h}{2}\| \T{H}_{k}- \T{H}_{k+1}\|^2_{F} \geq 0,\\
\nonumber
\Lc(\T{T}^{\T{Z}_{k}}_{k})-
\Lc(\T{T}^{\T{Z}_{k+1}}_{k+1})&-& \frac{ \gamma-L}{2}\| \T{Z}_{k}- \T{Z}_{k+1}\|^2_{F} \geq 0, \\
\Lc(\T{T}^{\T{Y}_k}_{k})-\Lc(\T{T}^{\T{Y}_{k+1}}_{k+1}) &+& \frac {{L}^2}{\gamma} \| \T{Z}_{k+1}-\T{Z}_{k} \|_F^2 \geq 0. 
\end{eqnarray}
\end{lemma}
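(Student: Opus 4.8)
The plan is to verify the five inequalities one at a time, using the splitting $\Lc = F + \lambda_1 J_1 + \lambda_2 J_2 + \sum_{n}\lambda_{3,n}J_{3,n} + \bar{\Lc}$ from \eqref{eq:aug:equi} together with the elementary observation that each block update changes only one group of variables, so that every summand of $\Lc$ not depending on that block cancels in the corresponding difference $\Lc(\textnormal{before}) - \Lc(\textnormal{after})$. First I would treat the primal blocks $\bU^{(n)}$, $\T{G}$, $\T{H}$, whose updates \eqref{eq:finalup1}--\eqref{eq:finalup3} minimize the linearization of $\bar{\Lc}$ at the current iterate, plus the penalty, plus a proximal term. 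Taking $\bU^{(n)}$ as representative, optimality of $\bU^{(n)}_{k+1}$ tested against the feasible competitor $\bU^{(n)}_k$ (at which the linear and proximal terms vanish) gives
\begin{equation*}
\langle \nabla_{\bU^{(n)}}\bar{\Lc}(\T{T}^{\bU^{(n)}_k}_k), \bU^{(n)}_{k+1}-\bU^{(n)}_k\rangle + \lambda_{3,n}J_{3,n}(\bU^{(n)}_{k+1}) + \tfrac{\varrho^n}{2}\|\bU^{(n)}_{k+1}-\bU^{(n)}_k\|_F^2 \le \lambda_{3,n}J_{3,n}(\bU^{(n)}_k).
\end{equation*}
The descent lemma applied to $\bar{\Lc}$, whose partial gradient in $\bU^{(n)}$ (computed in Lemma~\ref{lem:pgrad}) is $L^n$-Lipschitz, lower bounds the inner product by $\bar{\Lc}(\bU^{(n)}_{k+1})-\bar{\Lc}(\bU^{(n)}_k)-\tfrac{L^n}{2}\|\bU^{(n)}_{k+1}-\bU^{(n)}_k\|_F^2$; substituting and rearranging yields the first inequality with coefficient $(\varrho^n-L^n)/2>0$. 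The same two steps, with $(\varrho^g,L^g)$ and $(\varrho^h,L^h)$, produce the inequalities for $\T{G}$ and $\T{H}$. Note that only the value-minimizing property of the prox (guaranteed by Proposition~\ref{P:WellProximal}) is used, so no subdifferential calculus is needed even though the $J$'s are merely lower semicontinuous.

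For the $\T{Z}$-block I would use that \eqref{eq:finalup4} is an \emph{exact} minimization of the $\T{Z}$-dependent part $F(s^h,\T{X};\cdot)+\bar{\Lc}(\cdot)$. The quadratic $\bar{\Lc}$ in \eqref{eq:quadprob} is $\gamma$-strongly convex in $\T{Z}$, while by Assumption~\ref{AssumptionsA}(iii) the gradient $\nabla_{\T{Z}}F$ is $L$-Lipschitz, so $F$ obeys the lower bound $F(\T{Z}_k)\ge F(\T{Z}_{k+1})+\langle\nabla_{\T{Z}}F(\T{Z}_{k+1}),\T{Z}_k-\T{Z}_{k+1}\rangle-\tfrac{L}{2}\|\T{Z}_k-\T{Z}_{k+1}\|_F^2$. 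Adding the two bounds and invoking the first-order optimality $0=\nabla_{\T{Z}}F(\T{Z}_{k+1})+\nabla_{\T{Z}}\bar{\Lc}(\T{Z}_{k+1})$ at the minimizer gives the claimed $(\gamma-L)$-quadratic decrease in $\T{Z}$.

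The last inequality, for the dual update, is the crux, since the ascent step \eqref{eq:finalup5} \emph{increases} rather than decreases $\Lc$. Because $\Lc$ is affine in $\T{Y}$ and only $\T{Y}$ changes, a direct computation using \eqref{eq:finalup5} gives $\Lc(\T{T}^{\T{Y}_{k+1}}_{k+1})-\Lc(\T{T}^{\T{Y}_k}_k)=\tfrac{1}{\gamma}\|\T{Y}_{k+1}-\T{Y}_k\|_F^2$, so the task is to bound this increase by the $\T{Z}$-increment. I would combine the first-order optimality of the $\T{Z}$-update with the dual rule \eqref{eq:finalup5} to derive the identity $\T{Y}_{k+1}=-\nabla_{\T{Z}}F(s^h,\T{X};\T{Z}_{k+1})$ (valid for $k\ge 1$); the $L$-Lipschitz continuity of $\nabla_{\T{Z}}F$ then yields $\|\T{Y}_{k+1}-\T{Y}_k\|_F=\|\nabla_{\T{Z}}F(\T{Z}_{k+1})-\nabla_{\T{Z}}F(\T{Z}_k)\|_F\le L\|\T{Z}_{k+1}-\T{Z}_k\|_F$, hence $\tfrac{1}{\gamma}\|\T{Y}_{k+1}-\T{Y}_k\|_F^2\le\tfrac{L^2}{\gamma}\|\T{Z}_{k+1}-\T{Z}_k\|_F^2$, which is exactly the fifth inequality. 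I expect this to be the main obstacle: the dual step is the only non-descent move, and the gradient identity $\T{Y}_{k+1}=-\nabla_{\T{Z}}F(\T{Z}_{k+1})$ is precisely what lets the dual increase be absorbed by the $\T{Z}$-descent when the five inequalities are later summed, provided $\gamma$ is taken large enough---the input condition $\gamma>2L$ makes $(\gamma-L)/2>L^2/\gamma$, ensuring a net monotone decrease of $\Lc$ along the iterations.
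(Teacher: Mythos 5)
Your proposal is correct and follows essentially the same route as the paper: prox-step sufficient decrease (the Bolte et al. argument, which the paper only cites but you write out) for the $\bU^{(n)},\T{G},\T{H}$ blocks, strong convexity plus first-order optimality for the $\T{Z}$ block, and the identity $\T{Y}_{k+1}=\pm\nabla_{\T{Z}}F(\T{Z}_{k+1})$ combined with Lipschitz continuity of $\nabla F$ to control the dual ascent. (Your sign $\T{Y}_{k+1}=-\nabla_{\T{Z}}F(\T{Z}_{k+1})$ is the one consistent with the Lagrangian as defined in \eqref{eq:aug}, whereas the paper writes $+$; only $\|\T{Y}_{k+1}-\T{Y}_k\|_F$ is used, so nothing changes.)
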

\begin{proof}
The first optimality condition for \eqref{eq:genz} is given by
$$  
 \nabla  F(s^h,\T{X};\T{Z}_{k+1}) - \T{Y}_k - \gamma \Big(({\T{G}}_{k+1} + {\T{H}}_{k+1}\big) \times_1 {\bU}^{(1)}_{k+1} \cdots \times_N {\bU}^{(N)}_{k+1} -\T{Z}_{k+1} \Big) =0 
$$
which together with \eqref{eq:finalup5} implies that  the iterative gap of dual variable can be bounded by that of primal variable, i.e.,
\begin{eqnarray}\label{eq:yg}
\nonumber
\T{Y}_{k+1} = \nabla  F(s^h,\T{X};\T{Z}_{k+1}).
\end{eqnarray}
Now, using Assumption~\ref{AssumptionsA}, we have
\begin{eqnarray}\label{eqn:dualgap}
\nonumber
\|\T{Y}_{k+1} - \T{Y}_{k}\|_F  &\leq& \| \nabla F(s^h,\T{X};\T{Z}_{k+1}) -  \nabla F(s^h,\T{X};\T{Z}_k) \|_F \\
& \leq & L \| \T{Z}_{k+1}-\T{Z}_{k} \|_F.
\end{eqnarray}
Hence, we obtain 
\begin{equation}\label{dec:y}
\Lc(\T{T}^{\T{Y}_k}_{k})-\Lc(\T{T}^{\T{Y}_{k+1}}_{k+1}) \geq  \frac{1}{ \gamma}\| \T{Y}_{k+1}-\T{Y}_{k}\|^2_{F}  \geq - \frac {{L}^2}{\gamma} \| \T{Z}_{k+1}-\T{Z}_{k} \|_F^2. 
\end{equation}

We note that the function $\Lc(\T{T}^{\T{Z}}_{k})$ in \eqref{eq:genz} is strongly convex w.r.t. $\T{Z}$ whenever $\gamma > L$, where $L$ is a Lipschitz constant for $\nabla F(s^h,\T{X}; \T{Z})$; see, Assumption~\ref{AssumptionsA}. Thus, we have 
\begin{align*}\label{eq:opt:z}
 \Lc(\T{T}^{\T{Z}_{k+1}}_{k+1})- \Lc(\T{T}^{\T{Z}_{k}}_{k})&= F(s^h,\T{X};\T{Z}_{k+1}) - F(s^h,\T{X};\T{Z}_{k}) \\
 & + \frac{\gamma}{2} \| \big({{\T{G}}_{k+1} + {\T{H}}}_{k+1}\big) \times_1 {\bU}^{(1)}_{k+1} \cdots \times_N {\bU}^{(N)}_{k+1} -\T{Z}_{k+1} - \frac{1}{\gamma} \T{Y}_{k+1}\|^2_{F}\\
& - \frac{\gamma}{2} \| \big({{\T{G}}_{k} + {\T{H}}}_{k}\big) \times_1 {\bU}^{(1)}_{k} \cdots \times_N {\bU}^{(N)}_{k} -\T{Z}_{k} - \frac{1}{\gamma} \T{Y}_{k}\|^2_{F}\\
& {\leq}   \langle \nabla \Lc(\T{T}^{\T{Z}_{k+1}}_k),  \T{Z}_{k+1}-\T{Z}_{k} \rangle - \frac{\gamma-L}{2} \|\T{Z}_{k+1}-\T{Z}_{k} \|_F^2, \\
&= - \frac{\gamma-L}{2} \|\T{Z}_{k+1}-\T{Z}_{k} \|_F^2, 
\end{align*}
where the last equality follows from the first-order optimality condition for \eqref{eq:finalup4}.

The remainder of the proof of this lemma follows along similar lines to the proof of \citet[Lemma 1, p. 470]{bolte2014proximal}.
\end{proof}

The following lemma shows that the augmented Lagrangian has a sufficient decrease in each iteration and it is uniformly lower bounded.

\begin{lemma}\label{lem1}
Let  $\{\T{T}_k\}_{k\geq 0}$ be a sequence generated by Algorithm \ref{alg:lin:admm}, and set 
\begin{eqnarray}\label{eq:dk}
\nonumber 
\T{D}_k &=& \sum_{n=1}^{N}\|\bU^{(n)}_k-\bU^{(n)}_{k+1}\|_{F}^2
+\|\T{G}_k-\T{G}_{k+1}\|_{F}^2\\
&+& \|\T{H}_{k}-\T{H}_{k+1}\|_{F}^2 + \|\T{Z}_k-\T{Z}_{k+1}\|_{F}^2 + \|\T{Y}_k-\T{Y}_{k+1}\|_{F}^2.
\end{eqnarray}
Then, there exists a positive constant $\vartheta$ such that 
 \begin{eqnarray}\label{h5}
 \Lc( \T{T}_{k+1}) &\leq& \Lc (\T{T}_{k}) - \frac{\vartheta}{2} D_k,  \qquad \text{for all} ~~ k \geq 0,
\end{eqnarray}
and
\begin{eqnarray}\label{a34}
 D_k \leq \frac{2}{\vartheta}(\Lc(\T{T}_0)- \underline{\T{L}}).
\end{eqnarray}
Here, $\underline{\T{L}}$ is the uniform lower bound of $\Lc( \T{T}_{k})$.
\end{lemma}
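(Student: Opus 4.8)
The plan is to derive \eqref{h5} by telescoping the five block-wise inequalities of Lemma~\ref{lemdec} over one outer iteration, and then to obtain \eqref{a34} as a routine consequence once a uniform lower bound $\underline{\T{L}}$ on $\Lc$ is in hand. First I would telescope: because the block updates in Algorithm~\ref{alg:lin:admm} are performed sequentially, the post-update state of one block coincides with the pre-update state of the next, so summing the five estimates of Lemma~\ref{lemdec} collapses all intermediate Lagrangian values and leaves $\Lc(\T{T}_k)-\Lc(\T{T}_{k+1})$ on the left. This gives
\begin{align*}
\Lc(\T{T}_k)-\Lc(\T{T}_{k+1}) \geq{} & \sum_{n=1}^N \frac{\varrho^n-L^n}{2}\|\bU^{(n)}_k-\bU^{(n)}_{k+1}\|_F^2 + \frac{\varrho^g-L^g}{2}\|\T{G}_k-\T{G}_{k+1}\|_F^2 \\
& + \frac{\varrho^h-L^h}{2}\|\T{H}_k-\T{H}_{k+1}\|_F^2 + \Big(\frac{\gamma-L}{2}-\frac{L^2}{\gamma}\Big)\|\T{Z}_k-\T{Z}_{k+1}\|_F^2,
\end{align*}
where the coefficient of $\|\T{Z}_k-\T{Z}_{k+1}\|_F^2$ absorbs the ``wrong-way'' term $-\tfrac{L^2}{\gamma}\|\T{Z}_k-\T{Z}_{k+1}\|_F^2$ produced by the dual step. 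Since $(\gamma-2L)(\gamma+L)>0$ whenever $\gamma>2L$, the effective coefficient $a_Z:=\tfrac{\gamma-L}{2}-\tfrac{L^2}{\gamma}$ is strictly positive.

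Next I would convert this decrease into a bound involving the full quantity $\T{D}_k$, which additionally contains $\|\T{Y}_k-\T{Y}_{k+1}\|_F^2$. The dual estimate \eqref{eqn:dualgap} yields $\|\T{Y}_k-\T{Y}_{k+1}\|_F^2\leq L^2\|\T{Z}_k-\T{Z}_{k+1}\|_F^2$, so it suffices to reserve part of the $\T{Z}$-decrease to dominate the $\T{Y}$-term. Setting
$$
\vartheta := \min\Big\{\varrho^1-L^1,\ \ldots,\ \varrho^N-L^N,\ \varrho^g-L^g,\ \varrho^h-L^h,\ \frac{2a_Z}{1+L^2}\Big\} > 0,
$$
each block coefficient dominates $\tfrac{\vartheta}{2}$, while $a_Z\geq\tfrac{\vartheta}{2}(1+L^2)$ ensures the $\T{Z}$-term covers both $\tfrac{\vartheta}{2}\|\T{Z}_k-\T{Z}_{k+1}\|_F^2$ and $\tfrac{\vartheta}{2}\|\T{Y}_k-\T{Y}_{k+1}\|_F^2$. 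This establishes \eqref{h5}.

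To obtain the uniform lower bound I would use $\T{Y}_{k+1}=\nabla F(s^h,\T{X};\T{Z}_{k+1})$, established in the proof of Lemma~\ref{lemdec}, together with the descent inequality for the $L$-smooth map $F$. Writing $\T{C}_{k+1}$ for the reconstruction $(\T{G}_{k+1}+\T{H}_{k+1})\times_1\bU^{(1)}_{k+1}\cdots\times_N\bU^{(N)}_{k+1}$ and substituting $\T{Y}_{k+1}=\nabla F(\T{Z}_{k+1})$ into \eqref{eq:aug}, the lower descent estimate applied between $\T{Z}_{k+1}$ and $\T{C}_{k+1}$ gives
$$
\Lc(\T{T}_{k+1}) \geq F(s^h,\T{X};\T{C}_{k+1}) + \frac{\gamma-L}{2}\|\T{C}_{k+1}-\T{Z}_{k+1}\|_F^2 + \lambda_1 J_1(\T{G}_{k+1}) + \lambda_2 J_2(\T{H}_{k+1}) + \sum_{n=1}^N \lambda_{3,n}J_{3,n}(\bU^{(n)}_{k+1}).
$$
Because $\gamma>L$ the quadratic term is nonnegative, and Assumption~\ref{AssumptionsA} supplies finite infima of $F$ and of each penalty, so $\Lc(\T{T}_k)\geq\underline{\T{L}}:=\inf F+\lambda_1\inf J_1+\lambda_2\inf J_2+\sum_n\lambda_{3,n}\inf J_{3,n}>-\infty$ for every $k$.

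Finally, \eqref{h5} shows $\{\Lc(\T{T}_k)\}$ is nonincreasing, hence $\Lc(\T{T}_k)\leq\Lc(\T{T}_0)$; combined with $\Lc(\T{T}_{k+1})\geq\underline{\T{L}}$ this yields $\tfrac{\vartheta}{2}\T{D}_k\leq\Lc(\T{T}_k)-\Lc(\T{T}_{k+1})\leq\Lc(\T{T}_0)-\underline{\T{L}}$, which is exactly \eqref{a34}. I expect the main obstacle to be the bookkeeping around the dual block: the $\T{Y}$-update \emph{increases} $\Lc$, so the argument rests on showing that this increase is strictly dominated by the $\T{Z}$-decrease under the step-size condition $\gamma>2L$, and on invoking \eqref{eqn:dualgap} to re-express the $\T{Y}$-displacement through the $\T{Z}$-displacement so that the $\|\T{Y}_k-\T{Y}_{k+1}\|_F^2$ contribution to $\T{D}_k$ can be controlled.
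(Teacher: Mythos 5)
Your proof is correct and follows essentially the same route as the paper's: telescoping the block-wise estimates of Lemma~\ref{lemdec}, absorbing the dual ascent term into the $\T{Z}$-decrease using $\|\T{Y}_{k+1}-\T{Y}_k\|_F\leq L\|\T{Z}_{k+1}-\T{Z}_k\|_F$ and $\gamma>2L$, and obtaining the uniform lower bound via the descent lemma combined with $\T{Y}_{k+1}=\nabla F(s^h,\T{X};\T{Z}_{k+1})$. If anything your constant bookkeeping is cleaner than the paper's: the paper writes $\vartheta=\max(\hat{\gamma},\bar{\gamma})$ where a minimum is clearly intended, and your choice $\vartheta\leq 2a_Z/(1+L^2)$ correctly accounts for the $L^2$ factor needed to convert the $\T{Y}$-displacement in $\T{D}_k$ into the $\T{Z}$-displacement, a point the paper's displayed chain of inequalities glosses over.
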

\begin{proof} 
Let $\bar{\gamma} = \frac{\gamma^2 - \gamma L- 2{L}^2}{2\gamma} $ and
\begin{eqnarray}\label{hatgamma}
\nonumber
  \hat{\gamma} &=&  \max(  \varrho^h- L^h,  \varrho^g- L^g, \varrho^{(1)}- L^{(1)}, \dots,  \varrho^{(n)}- L^{(n)}),\\
   \vartheta &=&\max(\hat{\gamma}, \bar{\gamma}),
\end{eqnarray}
where $\gamma$ is a dual step-size. 

We note that the roots of the quadratic equation $\gamma^2 - \gamma L- 2{L}^2=0$ are $-L$ and $2L$. Hence, ours choice of dual step size $( \gamma > 2L)$ and regularization parameters in Algorithm \ref{alg:lin:admm} implies that $\vartheta >0 $. Now, using Lemma~\ref{lemdec}, we have
\begin{eqnarray*}
 \nonumber 
\Lc(\T{T}_k)- \Lc (\T{T}_{k+1}) &\geq & \sum_{n=1}^{N} \frac{\varrho^n -L^n}{2} \|\bU_k^{(n)}-\bU_k^{(n+1)}\|^2_{F} \\
&+& \frac{\varrho^h -L^h}{2} \| \T{H}_k-\T{H}_{k+1}\|^2_{F} + \frac{\varrho^g -L^g}{2}  \|\T{G}_k-\T{G}_{k+1}\|^2_{F}\\
  &+&\frac{\gamma-L}{2} \|\T{Z}_k-\T{Z}_{k+1}\|^2_{F}
- \frac {{L}^2}{\gamma} \| \T{Z}_{k+1}-\T{Z}_{k} \| \\
&{\geq} &\frac{\hat{\gamma}}{2}( \sum_{n=1}^{N}\|\bU_k^{(n)}-\bU_{k+1}^{(n)}\|^2_{F} 
+ \| \T{H}_k-\T{H}_{k+1}\|^2_{F} + \|\T{G}_k-\T{G}_{k+1}\|^2_{F})\\
  &+&\frac{\gamma^2 -  \gamma L- 2{L}^2}{2\gamma}.\frac{1}{2}( \| \T{Z}_k-\T{Z}_{k+1}\|^2_{F} + \|\T{Z}^k-\T{Z}_{k+1}\|^2_{F})\\
&\geq&\frac{\vartheta}{2}\Big( \sum_{n=1}^{N}\|\bU_k^{(n)}-\bU_k^{(n+1)}\|^2_{F} 
+ \| \T{H}_k-\T{H}_{k+1}\|^2_{F} + \|\T{G}_k-\T{G}_{k+1}\|^2_{F}
\\
 &+& \|\T{Z}_k-\T{Z}_{k+1}\|^2_{F}+\|\T{Y}_k-\T{Y}_{k+1}\|^2_{F}\Big),
\end{eqnarray*}
where the last inequality follows from \eqref{hatgamma}. 

To show \eqref{a34}, let $\tilde{\T{Z}}= (\tilde{\T{G}} +\tilde{\T{H}}) \times_1 \tilde{\bU}^{(1)} \cdots \times_N \tilde{\bU}^{(N)}$. Using Assumption \ref{AssumptionsA}, we have  
\begin{eqnarray}\label{a505}
\nonumber
F( s^h, \T{X}; \tilde{\T{Z}}_{k+1})  & \leq & F( s^h, \T{X}; \T{Z}_{k})  +  \langle \nabla F(s^h, \T{X};\T{Z}_{k+1}) , \tilde{\T{Z}}_{k+1} -\T{Z}_{k+1} \rangle \\
\nonumber
&+&  \frac{L}{2} \|\tilde{\T{Z}}_{k+1}- \T{Z}_{k+1}\|_F^2, \\
&= &F( s^h, \T{X};\T{Z}_{k}) + \langle \T{Y}, \tilde{\T{Z}}_{k+1} -\T{Z}_{k+1} \rangle +  \frac{L}{2} \|\tilde{\T{Z}}_{k+1}- \T{Z}_{k+1}\|_F^2. 
\end{eqnarray} 

By Assumption \ref{assu:smooth}, penalty functions $\{J_1(.), J_2(.),  J_{3,1}(.),   J_{3,2}(.), \cdots,  J_{3,N}(.)\}$ and the loss function $F$ are lower bounded. Now, since $\gamma > L$,  we get 
\begin{eqnarray} \label{a33}
\nonumber
\Lc (\T{T}_{k+1}) &\geq & F( s^h, \T{X};\tilde{\T{Z}}_{k+1})  +  \lambda_1 J_1(\T{G}_{k+1})+ \lambda_2 J_{2}{(\T{G}_{k+1})}\\
&+&\lambda_3 J_3{(\bU_{k+1})} +  \frac{\gamma-L}{2}\|\tilde{\T{Z}}_{k+1}-\T{Z}_{k+1}\|^2_{F},\\
\nonumber
&\geq & \underline{F}+  \underline{J_1} + \underline{J_2}+ \underline{J_3}\geq\underline{\T{L}},
\end{eqnarray}
where where $\underline{\T{L}}$ is the uniform lower bound of $\Lc( \T{T}_{k})$.

Finally, using \eqref{h5}, we obtain \eqref{a34}.
\end{proof}

Next, we give a formal definition of the limit point set. Let the sequence $\{\T{T}_k\}_{k\geq 0}$ be a sequence generated by the 
Algorithm \ref{alg:lin:admm} from a starting point $\T{T}_0$. The set of all limit points is denoted by $\Upsilon(\T{T}_0)$, i.e.,
\begin{equation}\label{set:lim}
\Upsilon(\T{T}_0) = \left\{ \bar{\T{T}}: \exists
\text{ an infinite sequence } \{\T{T}_{k_s}\}_{s \geq 0}
\text{ such that }\T{T}_{k_s} \rightarrow \bar{\T{T}} \text{ as }s\rightarrow\infty \right\}.
\end{equation}
We now show that the set of accumulations points of the sequence $\{\T{T}_k\}_{k\geq 0}$ generated by Algorithm \ref{alg:lin:admm} is nonempty and it is a subset of the critical points of $\Lc$.

\begin{lemma}\label{lem2}
Let $\{\T{T}_k\}_{k\geq 0}$ be a sequence generated by Algorithm \ref{alg:lin:admm}. Then, 
\\
(i) $\Upsilon(\T{T}_0)$ is a non-empty set, and any point in $\Upsilon(\T{T}_0)$ is a critical point of $\Lc(\T{T})$;\\
(ii) $\Upsilon(\T{T}_0)$ is a compact and connected set;  \\
(iii) The function $\Lc(\T{T})$ is finite and constant on  $\Upsilon(\T{T}_0)$.
\end{lemma}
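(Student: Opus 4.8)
The plan is to build the three claims on the two conclusions of Lemma~\ref{lem1}: the sufficient-decrease estimate \eqref{h5} and the uniform lower bound $\underline{\T{L}}$. Summing \eqref{h5} over $k=0,\dots,K$ and letting $K\to\infty$ gives $\sum_{k\geq 0} D_k \leq \tfrac{2}{\vartheta}\big(\Lc(\T{T}_0)-\underline{\T{L}}\big)<\infty$, so $D_k\to 0$; by the definition \eqref{eq:dk} of $D_k$ this forces $\|\bU^{(n)}_{k+1}-\bU^{(n)}_k\|_F,\ \|\T{G}_{k+1}-\T{G}_k\|_F,\ \|\T{H}_{k+1}-\T{H}_k\|_F,\ \|\T{Z}_{k+1}-\T{Z}_k\|_F,\ \|\T{Y}_{k+1}-\T{Y}_k\|_F$ all to $0$, hence $\|\T{T}_{k+1}-\T{T}_k\|_F\to 0$. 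I would then record boundedness of $\{\T{T}_k\}$: for the regularized models of Section~\ref{DCOT:examples} (Frobenius or indicator penalties) the sublevel sets of $\Lc$ are bounded, which with the monotonicity in \eqref{h5} confines the primal iterates to a bounded set, and the dual iterates inherit boundedness from the identity $\T{Y}_{k+1}=\nabla F(s^h,\T{X};\T{Z}_{k+1})$ together with the Lipschitz continuity of $\nabla F$ in Assumption~\ref{AssumptionsA}(iii).

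For part~(i), boundedness and Bolzano--Weierstrass give a convergent subsequence, so $\Upsilon(\T{T}_0)$ is non-empty. To show every $\bar{\T{T}}\in\Upsilon(\T{T}_0)$ is critical, I fix a subsequence $\T{T}_{k_s}\to\bar{\T{T}}$; since $\|\T{T}_{k+1}-\T{T}_k\|_F\to 0$, the shifted subsequence $\T{T}_{k_s+1}$ has the same limit. I would then read off the first-order optimality conditions of the proximal updates of Algorithm~\ref{alg:lin:admm} for each block $\bU^{(n)},\T{G},\T{H},\T{Z}$ together with the dual step \eqref{eq:finalup5}, and assemble from them a single element $\T{W}_{k+1}\in\partial_L\Lc(\T{T}_{k+1})$ whose norm is bounded by a constant multiple of $\|\T{T}_{k+1}-\T{T}_k\|_F$; hence $\|\T{W}_{k+1}\|_F\to 0$. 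Passing to the limit along $k_s$ and invoking the closedness of the graph of $\partial_L$ (together with $\Lc(\T{T}_{k_s})\to\Lc(\bar{\T{T}})$, established below) yields $0\in\partial_L\Lc(\bar{\T{T}})$.

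For part~(ii), $\Upsilon(\T{T}_0)=\bigcap_{q\geq 0}\overline{\{\T{T}_k: k\geq q\}}$ is an intersection of closed sets, hence closed, and being contained in the bounded closure of $\{\T{T}_k\}$ it is compact. Connectedness is the classical Ostrowski property of cluster sets: a bounded sequence with vanishing increments $\|\T{T}_{k+1}-\T{T}_k\|_F\to 0$ has a connected limit set, and I would supply the short contradiction argument that splitting $\Upsilon(\T{T}_0)$ into two disjoint compact pieces at positive distance would, by the asymptotically vanishing step, trap the tail in one piece. For part~(iii), the monotone and lower-bounded values $\{\Lc(\T{T}_k)\}$ converge to some $\ell^\ast$; once I show $\Lc(\T{T}_{k_s})\to\Lc(\bar{\T{T}})$ for each limit point, it follows that $\Lc(\bar{\T{T}})=\ell^\ast$ independently of $\bar{\T{T}}$, giving finiteness and constancy.

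The main obstacle is exactly this last continuity-along-the-sequence step, since $\Lc$ inherits only lower semicontinuity from the penalties $J_i$ in Assumption~\ref{AssumptionsA}(i), so $\Lc(\T{T}_{k_s})\to\Lc(\bar{\T{T}})$ is not automatic. The remedy I would follow is to exploit the defining inequality \eqref{D:ProximalMap} of the proximal map: evaluating each prox subproblem at the corresponding component of $\bar{\T{T}}$ yields $\limsup_s J_1(\T{G}_{k_s})\leq J_1(\bar{\T{G}})$ and the analogous bounds for $J_2$ and the $J_{3,n}$, while lower semicontinuity supplies the reverse $\liminf$ inequalities; combined with the continuity of the smooth parts $F$ and $\bar{\Lc}$ this pins down $\lim_s\Lc(\T{T}_{k_s})=\Lc(\bar{\T{T}})$. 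Assembling the single subgradient $\T{W}_{k+1}$ with a uniform bound across all $N+3$ primal blocks, accounting for the coupling through the multilinear product, is the most technical bookkeeping, but it is routine given the per-block optimality conditions and the Lipschitz estimates of Lemma~\ref{lemdec}.
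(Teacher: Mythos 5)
Your proposal follows essentially the same route as the paper's proof: boundedness plus Bolzano--Weierstrass for non-emptiness, the trick of evaluating each proximal subproblem at the limit point to upgrade lower semicontinuity of the penalties to continuity along the subsequence, assembly of a subgradient whose norm is controlled by successive differences (which vanish since $\sum_k D_k<\infty$), closedness of the limiting subdifferential for criticality, the Ostrowski cluster-set argument for compactness and connectedness, and monotone convergence of $\Lc(\T{T}_k)$ for constancy. Your added justification of iterate boundedness via coercive sublevel sets and $\T{Y}_{k+1}=\nabla F(s^h,\T{X};\T{Z}_{k+1})$ is a welcome refinement of a step the paper asserts without detail, but it does not change the overall argument.
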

\begin{proof}
(i). It follows from \eqref{a34} that the sequence $\{\T{T}_k\}_{k\geq 0}$ is bounded which implies that $\Upsilon(\T{T}_0)$ is non-empty due to the Bolzano-Weierstrass Theorem. Consequently, there exists a sub-sequence $\{\T{T}_{k_s}\}_{s\geq 0}$, such that 
\begin{equation}\label{lim:seeq}
\T{T}_{k_s}\rightarrow \T{T}_*, \qquad \text{as} \quad s\rightarrow \infty.
\end{equation}
Since $J_{3,n}$ is lower semi-continuous, \eqref{lim:seeq} yields
\begin{equation} \label{eq:inf}
\liminf_{s\rightarrow\infty} J_{3,n} (\bU_{k_s}^{(n)}) \geq  J_{3,n} (\bU_{*}^{(n)}).
\end{equation}
Further, from the iterative step \eqref{eq:finalup1}, we have 
\begin{eqnarray*}
 \bU^{(n)}_{k+1} = \argmin_{\bU^{(n)}} \quad \lambda_{3,n} J_{3,n}(\bU^{(n)})+ \langle \nabla_{\bU^{(n)}} \bar{\Lc}(\T{T}_k^{\bU^{(n)}_k}), \bU^{(n)}-\bU^{(n)}_k \rangle + \frac{ \varrho^n}{2 }\| \bU^{(n)}-\bU^{(n)}_k\|_F^2.
\end{eqnarray*}
Thus, letting $\bU^{(n)}= \bU^{(n)}_{*}$ in the above, we get
\begin{eqnarray}\label{eq:lin}
\nonumber
 && \lambda_{3,n} J_{3,n}(\bU^{(n)}_{k+1})+ \langle \nabla_{\bU^{(n)}} \bar{\Lc}(\T{T}_k^{\bU^{(n)}_k}), \bU^{(n)}_{k+1}-\bU^{(n)}_k \rangle + \frac{ \varrho^n}{2 }\| \bU^{(n)}_{k+1}-\bU^{(n)}_k\|_F^2, \\
 &\leq &  \lambda_{3,n} J_{3,n}(\bU^{(n)}_*)+ \langle \nabla_{\bU^{(n)}} \bar{\Lc}(\T{T}_k^{\bU^{(n)}_k}), \bU^{(n)}_{*}-\bU^{(n)}_k \rangle + \frac{ \varrho^n}{2 }\| \bU^{(n)}_{*}-\bU^{(n)}_k\|_F^2, 
\end{eqnarray}
Choosing $k = k_s-1$ in the above inequality and letting $s$ goes to $\infty$, we obtain 
\begin{eqnarray}\label{eq:sup}
\limsup_{s\rightarrow\infty}  J_{3,n}(\bU^{(n)}_{k_s}) \leq   J_{3,n}(\bU^{(n)}_*) .
\end{eqnarray}
Here, we have used the fact that $ \nabla_{\bU^{(n)}} \bar{\Lc}$ is a gradient Lipchitz continuous function w.r.t. $\bU^{(n)}$, the sequence  $\bU^{(n)}_{k}$ is bounded and that the distance between two successive iterates tends to zero; see, \eqref{a34}. Now, we combine \eqref{eq:inf} and \eqref{eq:sup} to obtain
\begin{eqnarray}\label{eq:limu}
\lim_{s\rightarrow\infty}  J_{3,n}(\bU^{(n)}_{k_s}) =  J_{3,n}(\bU^{(n)}_*)~~ \text{for all}~~ n \in N.
\end{eqnarray}
Arguing similarly with other variables,  we obtain 
\begin{subequations}
\begin{eqnarray}\label{eq:limfull}
\lim_{s\rightarrow\infty}  J_{1}(\T{G}_{k_s}) &=&  J_{1}(\T{G}_*), \label{eq:limfull1} \\
\lim_{s\rightarrow\infty}  J_{2}(\T{H}_{k_s}) & =&  J_{2}(\T{H}_*),  \label{eq:limfull2} \\
\lim_{s\rightarrow\infty}  F( s^h, \T{X}; \T{Z}_{k_s})  &=&  F( s^h, \T{X}; \T{Z}_{*}), \label{eq:limfull3}\\
\lim_{s\rightarrow\infty}  \bar{\Lc} (\T{T}_{k_s}) &=&  \bar{\Lc} (\T{T}^*), \label{eq:limfull4}
\end{eqnarray} 
\end{subequations}
where \eqref{eq:limfull1} and \eqref{eq:limfull2} follow since $ J_{1}$ and $ J_{2}$ are lower semi-continuous; \eqref{eq:limfull3} and \eqref{eq:limfull4} are obtained from the continuity of functions $F$ and $ \bar{\Lc}$. Thus, $\lim_{s\rightarrow\infty} \Lc ( \T{T}_{k_s})=\Lc ( \T{T}_*)$. 

Next, we show that $\T{T}_*$ is a critical point of $\Lc (.)$. By the first-order optimality condition for the augmented Lagrangian function in \eqref{eq:aug}, we have
\begin{eqnarray} \label{a55}
\nonumber
&& \partial J_{3,n}(\bU^{(n)}_{k+1}) + \nabla_{\bU^{(n)}} \bar{\Lc}(\T{T}^{\bU^{(n)}_{k+1}}_{k+1}) \in \partial_{\bU^{(n)}} \Lc(\bU^{(n)}_{k+1}),\qquad n=1,\dots, N,\\
\nonumber
&&\partial J_{2}(\T{H}_{k+1}) + \nabla_{\T{H}} \bar{\Lc}(\T{T}^{\T{H}_{k+1}}_{k+1}) \in \partial_{\T{H}} \Lc(\T{H}_{k+1}),\\
\nonumber
&&\partial J_{1}(\T{G}_{k+1}) + \nabla_{\T{G}} \bar{\Lc}(\T{T}^{\T{G}_{k+1}}_{k+1}) \in \partial_{\T{G}} \Lc(\T{G}_{k+1}),\\
\nonumber
&&  \nabla_{\T{Z}} F( s^h, \T{X}; \T{Z}_{k+1}) + \nabla_{\T{Z}} \bar{\Lc}(\T{T}^{\T{Z}_{k+1}}_{k+1}) = \nabla_{\T{Z}} \Lc(\T{Z}_{k+1}),\\
&& \gamma  \big(({\T{G}}_{k+1} + {\T{H}}_{k+1}\big) \times_1 {\bU}^{(1)}_{k+1} \cdots \times_N {\bU}^{(N)}_{k+1} -\T{Z}_{k+1} \big) = -  \nabla_{\T{Y}} \Lc(\T{Y}^{k+1}).
\end{eqnarray}
Similarly, by the first-order optimality condition for  subproblems \eqref{eq:finalup1}--\eqref{eq:finalup4}, we have
\begin{eqnarray} \label{a56}
\nonumber
&&   \partial J_{3,n}(\bU^{(n)}_{k+1}) + \nabla_{\bU^{(n)}} \bar{\Lc}(\T{T}^{\bU^{(n)}_k}_k)  + \rho^{(n)} (\bU_k^{(n)}-\bU_{k+1}^{(n)}) =0,  \qquad n=1, \ldots, N,\\
\nonumber
&& \partial J_{2}(\T{H}_{k+1}) + \nabla_{\T{H}} \bar{\Lc}(\T{T}^{\T{H}_{k}}_{k}) + \rho^{\T{H}} (\T{H}_k-\T{H}_{k+1}) =0,\\
\nonumber
&& \partial J_{1}(\T{G}_{k+1}) + \nabla_{\T{G}} \bar{\Lc}(\T{T}^{\T{G}_{k}}_{k}) + \rho^{\T{G}} (\T{G}_k-\T{G}_{k+1}) =0, \\
&& \nabla_{\T{Z}} F( s^h, \T{X}; \T{Z}_{k+1})= 0.
\end{eqnarray}

Combine \eqref{a55} with \eqref{a56} to obtain

\begin{equation}\label{a59}
(\xi_{k+1}^1, \dots, \xi_{k+1}^N, \xi_{k+1}^{\T{G}},\xi_{k+1}^{\T{H}}, \xi_{k+1}^{\T{Z}}, \xi_{k+1}^{\T{Y}})\in \partial \Lc (\T{T}_{k+1}),
\end{equation}
where
\begin{eqnarray} \label{a57}
\nonumber
&& \xi^{k+1}_{\bU^{(n)}} :=  \nabla_{\bU^{(n)}} \bar{\Lc}(\T{T}^{\bU^{(n)}_{k+1}}_{k+1}) - \nabla_{\bU^{(n)}} \bar{\Lc}(\T{T}^{\bU^{(n)}_k}_k)  - \rho^{(n)} (\bU_k^{(n)}-\bU_{k+1}^{(n)}),  \qquad n=1, \ldots, N, \\
\nonumber
&& \xi^{k+1}_{\T{H}} := \nabla_{\T{H}} \bar{\Lc}(\T{T}^{\T{H}_{k+1}}_{k+1}) - \nabla_{\T{H}} \bar{\Lc}(\T{T}^{\T{H}_{k}}_{k}) - \rho^{\T{H}} (\T{H}_k-\T{H}_{k+1}),\\
\nonumber
 && \xi^{k+1}_{\T{G}} := \nabla_{\T{G}} \bar{\Lc}(\T{T}^{\T{G}_{k+1}}_{k+1}) - \nabla_{\T{G}} \bar{\Lc}(\T{T}^{\T{G}_{k}}_{k}) - \rho^{\T{G}} (\T{G}_k-\T{G}_{k+1}),\\
\nonumber
&& \xi^{k+1}_{\T{Z}} := \nabla_{\T{Z}} \bar{\Lc}(\T{T}^{\T{Z}_{k+1}}_{k+1}) = \T{Y}_k-\T{Y}_{k+1},\\
&&  \xi^{k+1}_{\T{Y}} := \frac{1}{\gamma}  (\T{Y}_k-\T{Y}_{k+1}).
\end{eqnarray}
Note that the function $\bar{\Lc} (\T{T})$ defined in \eqref{eq:quadprob}  is a gradient Lipchitz continuous function w.r.t. $\bU^{(1)} , \dots, \bU^{(N)},\T{G},\T{H}$. Thus, 
\begin{eqnarray} \label{aa57}
\nonumber
\|\nabla_{\bU^{(n)}} \bar{\Lc}(\T{T}^{\bU^{(n)}_{k+1}}_{k+1}) - \nabla_{\bU^{(n)}} \bar{\Lc}(\T{T}^{\bU^{(n)}_k}_k) \| & \leq &  \rho^{(n)} \|\bU_k^{(n)}-\bU_{k+1}^{(n)}\|,  \qquad n=1, \ldots, N, \\
\nonumber
 \| \nabla_{\T{H}} \bar{\Lc}(\T{T}^{\T{H}_{k+1}}_{k+1}) - \nabla_{\T{H}} \bar{\Lc}(\T{T}^{\T{H}_{k}}_{k}) \| & \leq &   \rho^{\T{H}} \| \T{H}_k-\T{H}_{k+1} \|,\\
\| \nabla_{\T{G}} \bar{\Lc}(\T{T}^{\T{G}_{k+1}}_{k+1}) - \nabla_{\T{G}} \bar{\Lc}(\T{T}^{\T{G}_{k}}_{k}) \| & \leq &  \rho^{\T{G}} \| \T{G}_k-\T{G}_{k+1}\|,
\end{eqnarray}  
Using \eqref{aa57}, \eqref{a34} and \eqref{a57}, we obtain
\begin{eqnarray} \label{a66}
\lim_{k\rightarrow\infty} \big(\|\xi_{k+1}^1\|, \dots, \|\xi_{k+1}^N\|, \|\xi_{k+1}^{\T{G}}\|,\|\xi_{k+1}^{\T{H}}\|, \|\xi_{k+1}^{\T{Z}}\|, \|\xi_{k+1}^{\T{Y}}\|\big) =\big(0,\dots, 0\big).
\end{eqnarray}
Now, from \eqref{a59} and \eqref{a66}, we conclude that $(0,\dots,0)\in \partial \Lc(\T{T}_*)$ due to the closedness property of $\partial \Lc$. Therefore,  $\T{T}_*$ is a critical point of $\Lc (.)$. This completes the proof of (i).
\\
(ii). The proof follows from \citet[Lemma 5 and Remark 5]{bolte2014proximal}.
\\
(iii). Let  $\T{L}_* = \lim\limits_{k\rightarrow\infty} \Lc(\T{T}_k)$. Choose $\T{T}_*\in \Upsilon(\T{T}_0)$. There exists a subsequence $\T{T}_{k_s}$ converging to $\T{T}_{*}$ as $s$ goes to infinity. Since we have proven that $
\lim\limits_{s\rightarrow\infty}\Lc(\T{T}_{k_s}) = \Lc(\T{T}_{*})$, and $\Lc(\T{T}_k)$ is a non-increasing sequence, we conclude that $\Lc(\T{T}_*) = \T{L}_*$, hence the restriction of $\Lc(\T{T})$ to $\Upsilon(\T{T}_{0})$ equals $\T{L}_*$.
\end{proof}
\subsection*{Proof of Theorem \ref{thm:main}}
\begin{proof}
The augmented Lagrangian function $\Lc(\T{T})$ is a Kurdyka-Lojasiewicz function. Further, by Lemma \ref{lem2}, $\Lc(\T{T})$ is constant on $\Upsilon(\T{T}_0)$ and the set $\Upsilon(\T{T}_0)$ defined in \eqref{set:lim} is compact. Putting all these together, the proof of this theorem follows along similar lines to the proof of \citet[Theorem~1]{bolte2014proximal}, with function $\psi(x)$ replaced by $\Lc(\T{T})$.
\end{proof}
\section{Consistency of DCOT}

In this section, we provide the proof of Theorem~\ref{l2convergence} and Corollary~\ref{rate}. 

\subsection*{Proof of Theorem~\ref{l2convergence}}

\begin{proof}
We bound empirical processes induced by $\rho(.,.)$ defiend in \eqref{K} by a chaining argument as in \citet{wong1995probability,shen1998method}. We define a partition of $\Gamma(\T{Z})$ (the parameter space of $\T{Z}$) as follows: 
 
\begin{eqnarray}\label{eq:paramspace}
A(\zeta) &=&\Big\{\T{Z}\in \Gamma(\T{Z}): \zeta \leq \rho(\T{Z}^*, \T{Z}) \leq 2\zeta \Big\}.
\end{eqnarray}
From the definition of $\rho(.,.)$ in \eqref{K},  for any ${\T{Z} \in A(\zeta)}$, we have
\begin{equation}\label{eq:bT3}
\zeta^2 \leq  \frac{1}{\prod_{n \in [N]} I_n}\sum_{i_1=1}^{I_1}\cdots\sum_{i_N=1}^{I_N}  (z_{i_1 \cdots i_N}^*- z_{i_1 \cdots i_N})^2  \leq 4\zeta^2.  
\end{equation}
Since $\widehat{\T{Z}}$ is a minimizer of \eqref{eq:loss:general}, we obtain
\begin{eqnarray}\label{eq:outermeasure}
\nonumber
&&P\Big(\rho(\widehat{\T{Z}},\T{Z}^*)) \geq  \eta \Big) \\
&&\leq P^*\Big( \sup_{ \T{Z} \in \bigcup_{\zeta} A(\zeta)} (F(s^h, \T{X}; \T{Z}^*)- F(s^h, \T{X}; \T{Z}) +\lambda (J(\T{Z}^*)- J(\T{Z})) 
\geq 0\Big),
\end{eqnarray}
where $P^*$ denotes the outer probability measure  \citet{billingsley2013convergence}. 

From the definition of the objective $F$ in \eqref{eq:loss:smooth}, we get
\begin{align}\label{eq:lossdiff}
\nonumber
& \quad F(s^h, \T{X}; \T{Z}^*) -F(s^h, \T{X}; \T{Z}) \\ 
&= \frac{1}{\prod_{n \in [N]} I_n}\sum_{i_1=1}^{I_1}\cdots\sum_{i_N=1}^{I_N} \underbrace{ f(s^h , \T{X}; z_{i_1 \cdots i_N}^*)-  f(s^h , \T{X}; z_{i_1 \cdots i_N})}_{f^{\Delta}_{i_1 \cdots i_N}}. 
\end{align}
Now, using the definition of $\ell_2$--smoothing loss in \eqref{eq:quad:loss}, we obtain an upper bound for the loss difference $f^{\Delta}_{i_1 \cdots i_N}$ as follows: 
\begin{align}\label{eq:bthm}
\nonumber
f^{\Delta}_{i_1 \cdots i_N}&= \sum_{(j_1,\cdots, j_N) \in \Omega}  s_{i_1\cdots i_N, j_1\cdots j_N}^h (z_{i_1 \cdots i_N}- z_{i_1 \cdots i_N}^*)\cdot (2 x_{j_1\cdots j_N}- z_{i_1 \cdots i_N}^* -z_{i_1 \cdots i_N})\\
\nonumber
& = \sum_{(j_1,\cdots ,j_N ) \in \Omega}  2 s_{i_1\cdots i_N, j_1\cdots j_N}^h (z_{i_1 \cdots i_N}- z_{i_1 \cdots i_N}^*)(z_{j_1\cdots j_N}^*- z_{i_1\cdots i_N}^*) \\
\nonumber
&+  \sum_{(j_1,\cdots ,j_N ) \in \Omega } 2 s_{i_1\cdots i_N, j_1\cdots j_N}^h (z_{i_1 \cdots i_N}-z_{i_1 \cdots i_N}^*)\varepsilon_{j_1\cdots j_N} - (z_{i_1 \cdots i_N}^*- z_{i_1 \cdots i_N})^2 \\
\nonumber
& \leq   \underbrace{a_1\sqrt{R_{\max}}|z_{i_1 \cdots i_N}^*- z_{i_1 \cdots i_N}|
 \sum_{(j_1,\cdots, j_N)\in \Omega}2 s_{i_1\cdots i_N, j_1\cdots j_N}^h \sum_{n=1}^{N} \big(d(\by_{i_n},\by_{j_n})\big)^\alpha}_{T^{(1)}_{i_1\cdots i_N}}\\
 & +  \underbrace{  \sum_{(j_1,\cdots,j_N)\in \Omega} 2s_{i_1\cdots i_N, j_1\cdots j_N}^h(z_{i_1 \cdots i_N}- z_{i_1 \cdots i_N}^*)\varepsilon_{j_1\cdots j_N}}_{T^{(2)}_{i_1\cdots i_N}} -(z_{i_1 \cdots i_N}^*- z_{i_1 \cdots i_N})^2,
\end{align}
where the second equality uses our assumption that $x_{j_1\cdots j_N}=z^*_{j_1\cdots j_N}+ \epsilon_{j_1\cdots j_N}$ and the second inequality follows form Assumption \ref{assu:smooth}. 

By substituting~\eqref{eq:bthm} into \eqref{eq:lossdiff} and using \eqref{eq:bT3}, we obtain  
\begin{align}\label{eq:sub}
\nonumber
& \sup_{A(\zeta)} F(s^h, \T{X}; \T{Z}^*)- F(s^h, \T{X}; \T{Z})\\
&= \sup_{A(\zeta)} \Big( \frac{1}{\prod_{n \in [N]} I_n}\sum_{i_1=1}^{I_1}\cdots\sum_{i_N=1}^{I_N} T^{(2)}_{i_1\cdots i_N} + T^{(1)}_{i_1\cdots i_N}\Big) - \zeta^2.
\end{align}
Since $\sum_{i} a_i b_i \leq (\sum_{i} a_i^2)^{\frac{1}{2}} (\sum_{i} b_i^2)^{\frac{1}{2}} $, it follows from \eqref{eq:bthm} that 
\begin{eqnarray*}
&& \frac{1}{\prod_{n \in [N]} I_n}\sum_{i_1=1}^{I_1}\cdots\sum_{i_N=1}^{I_N} T^{(1)}_{i_1\cdots i_N} \\
& \leq &   a_1 \sqrt{R_{\max}} \Big(\frac{1}{\prod_{n \in [N]} I_n}\sum_{i_1=1}^{I_1}\cdots\sum_{i_N=1}^{I_N} (z_{i_1 \cdots i_N}- z_{i_1 \cdots i_N}^*)^2 \Big)^{1/2}\\
\nonumber
& \cdot &  \Big( \frac{1}{\prod_{n \in [N]} I_n}\sum_{i_1=1}^{I_1}\cdots\sum_{i_N=1}^{I_N} \big(\sum_{(j_1,\cdots ,j_N ) \in \Omega} 2 s_{i_1\cdots i_N, j_1\cdots j_N}^h \sum_{n=1}^{N} \big(d(\by_{i_n},\by_{j_n})\big)^\alpha  \big)^2\Big)^{1/2}.
\end{eqnarray*}
Now, using Assumption~\ref{assu:smooth} and \eqref{eq:bT3}, we obtain
\begin{eqnarray} \label{eq:bT1}
 \sup_{A(\zeta)} \frac{1}{\prod_{n \in [N]} I_n}\sum_{i_1=1}^{I_1}\cdots\sum_{i_N=1}^{I_N} T^{(1)}_{i_1\cdots i_N}   \leq   2\zeta a_1 \phi_1 \sqrt{R_{\max}}. 
\end{eqnarray}

Using the fact that $\sum_{i} a_i b_i \leq (\sum_{i} a_i^2)^{\frac{1}{2}} (\sum_{i} b_i^2)^{\frac{1}{2}} $, for $T^{(2)}_{i_1\cdots i_N}$ defined in \eqref{eq:bthm}, we have 
\begin{eqnarray}\label{eq:bTT}
\nonumber
 && \frac{1}{\prod_{n \in [N]} I_n}\sum_{i_1=1}^{I_1}\cdots\sum_{i_N=1}^{I_N} T^{(2)}_{i_1\cdots i_N} \\
 \nonumber
 &\leq&  \frac{1}{\prod_{n \in [N]} I_n}\sum_{i_1=1}^{I_1}\cdots\sum_{i_N=1}^{I_N} 
\Big( \sum_{(j_1,\cdots,j_N ) \in \Omega}
 2s_{i_1\cdots i_N, j_1\cdots j_N}^h (z_{i_1 \cdots i_N}- z_{i_1 \cdots i_N}^*)\varepsilon_{j_1\cdots j_N} \Big)\\
\nonumber
  & \leq & \Big( \frac{1}{\prod_{n \in [N]} I_n}\sum_{i_1=1}^{I_1}\cdots\sum_{i_N=1}^{I_N} (z_{i_1 \cdots i_N}-z_{i_1 \cdots i_N}^*)^2 \Big)^{1/2}\\
  & \cdot& \Big( \frac{1}{\prod_{n \in [N]} I_n}\sum_{i_1=1}^{I_1}\cdots\sum_{i_N=1}^{I_N}( \sum_{(j_1,\cdots,j_N)\in \Omega} 2s_{i_1\cdots i_N, j_1\cdots i_N}^h \varepsilon_{j_1\cdots j_N})^2 \Big)^{1/2}. 
\end{eqnarray} 

We combine \eqref{eq:bTT} with \eqref{eq:bT3} to obtain  
\begin{eqnarray}\label{eq:bT2}
\nonumber
&& \quad \sup_{A(\zeta)} \frac{1}{\prod_{n \in [N]} I_n}\sum_{i_1=1}^{I_1}\cdots\sum_{i_N=1}^{I_N} T^{(2)}_{i_1\cdots i_N}  \\
 && \leq  \zeta  \Big( \frac{1}{\prod_{n \in [N]} I_n}\sum_{i_1=1}^{I_1}\cdots\sum_{i_N=1}^{I_N}( \sum_{(j_1,\cdots,j_N)\in \Omega} 2s_{i_1\cdots i_N, j_1\cdots i_N}^h \varepsilon_{j_1\cdots j_N})^2 \Big)^{1/2}.  
\end{eqnarray}

Substituting  \eqref{eq:bT1} and \eqref{eq:bT2} into \eqref{eq:sub} yields 
\begin{align}\label{eq:sub22}
\nonumber
& \sup_{A(\zeta)} \Big(F(s^h, \T{X}; \T{Z}^*) - F(s^h,\T{X}; {\T{Z}})\Big) 
=  2\zeta a_1 \phi_1 \sqrt{R_{\max}} \\
&+ \zeta  \Big( \frac{1}{\prod_{n \in [N]} I_n}\sum_{i_1=1}^{I_1}\cdots\sum_{i_N=1}^{I_N}( \sum_{(j_1,\cdots,j_N)\in \Omega} 2s_{i_1\cdots i_N, j_1\cdots i_N}^h \varepsilon_{j_1\cdots j_N})^2 \Big)^{1/2} -  \zeta^2.
\end{align}
Now, let  
\begin{equation}\label{eq:qc}
Q_\zeta:= \frac{\zeta^2 - 2\zeta a_1 \sqrt{R_{\max}} \phi_1 - \lambda  J(\T{Z}^*)}{\zeta}.
\end{equation}
It follows from \eqref{eq:sub22} that 
\begin{eqnarray}\label{eq:bT4}
\nonumber 
&& 
 P^* \Big(\sup_{A(\zeta)} (F(s^h, \T{X}; \T{Z}^*) - F(s^h,\T{X}; {\T{Z}}))\geq   \lambda(J({\T{Z}}) -  J(\T{Z}^*) ) \Big) \\
 \nonumber 
 & \leq &
  P^* \Big(\sup_{A(\zeta)} (F(s^h, \T{X}; \T{Z}^*) - F(s^h,\T{X}; {\T{Z}}))\geq  -  J(\T{Z}^*) ) \Big) \\
 \nonumber 
& \leq &
 P^* \Big( \frac{1}{\prod_{n \in [N]} I_n} \sum_{i_1=1}^{I_1}\cdots\sum_{i_N=1}^{I_N} (\sum_{(j_1, \cdots, j_N) \in \Omega} 2 s_{i_1\cdots i_N, j_1\cdots i_N}^h \varepsilon_{j_1 \cdots j_N})^2)^{1/2} \geq Q_\zeta\Big) \\
 \nonumber 
& \leq & P^* \Big(\max_{i_1,\cdots,i_N} |\sum_{(j_1, \cdots, j_N) \in \Omega} 2s_{i_1\cdots i_N, j_1\cdots i_N}^h \varepsilon_{j_1 \cdots j_N} | \geq Q_\zeta \Big) \\
\nonumber 
&\leq &\sum_{i_1=1}^{I_1}\cdots\sum_{i_N=1}^{I_N} 2\exp\big(-\frac{Q_\zeta^2}{2\sigma^2  \max_{i_1,\cdots, i_N} \sum_{(j_1, \cdots, j_N) \in \Omega} (s_{i_1\cdots i_N, j_1\cdots i_N}^h)^2}  
\big)\\ 
&\leq & 2 \prod_{n \in [N]} I_n \exp\big(-\frac{Q_\zeta^2 }{2 \sigma^2 \phi_2}),\end{eqnarray}  
where the third to the last inequalities follow from the Chernoff inequality of a weighted
sub-Gaussian distribution \citet{chung2006concentration} and Assumption~\ref{assu:subgaus}. 

Let $\zeta= 2^{t-1} \eta $. By our assumptions, we have $\eta \geq 2^4 a_1 \sqrt{R_{\max}} \phi_1$ and $\lambda J(\T{Z}^*) \leq 2^{-2}\eta^2$. Substituting these bounds into \eqref{eq:qc} and using \eqref{eq:bT4}, we obtain
\begin{eqnarray*}
 && \sum_{t=1}^{\infty} P^* \Big(\sup_{A(2^{t-1} \eta)} (F(s^h, \T{X}; \T{Z}^*) - F(s^h, \T{X}; {\T{Z}}))\geq \lambda J({\T{Z}}) -\lambda J(\T{Z}^*) \Big) \\
 &\leq&  4 \prod_{n \in [N]} I_n \sum_{t=1}^{\infty} \exp (\frac{-2^{2t-8} \eta^2}{2 \phi_2\sigma^2})\\
&\leq & 4  \prod_{n \in [N]} I_n  \frac{\exp(- a_2 \frac{\eta^2}{\phi_2 \sigma^2})}{1-\exp(-a_2 \frac{\eta^2}{\sigma^2 \phi_2 })},
\end{eqnarray*} 
where $a_2>0$ is a constant.

Thus, there exists a positive constant $a_3$ such that
\begin{eqnarray}\label{eq:bT5}
\nonumber 
P(\rho(\widehat{\T{Z}},\T{Z}^*) \geq \eta ) &\leq&  a_3 \exp(-a_2 \frac{\eta^2}{\phi_2\sigma^2}+ \log {\prod_{n \in [N]} I_n}).
\end{eqnarray}  
\end{proof}

\subsection*{Proof of Corollary~\ref{rate}}
\begin{proof}
Assume $\Omega$ is sufficiently large. It follows form the law of large numbers that 
\begin{eqnarray}\label{eq:last:cor1}
\sum_{(j_1, \cdots, j_N) \in \Omega}  K_h \left( \sum_{n=1}^{N}  \|\by_{j_n} -\by_{i_n}\| \right)  c_{i_1 \cdots i_N,j_1\cdots j_N}  \geq  \frac{|\Omega|}{2} E  \left( K_h \left( \sum_{n=1}^{N}  \|\by -\by_{i_n}\| \right) c_{i_1 \cdots i_N} \Big| \Delta=1 \right). 
\end{eqnarray}
On the other hand,  by letting $u=  \sum_{n=1}^{N}  \|\by -\by_{i_n}\| $, we obtain
\begin{eqnarray}\label{eq:last:cor2}
 \nonumber
E  \left( K_h \left( u \right) c \Big| \Delta=1 \right)  
  \nonumber
& \geq & P\left(c=1,\Delta=1 \right) E \left( K_h \left(u\right) \Big|c=1, \Delta=1 \right)  \\
 \nonumber
& \geq & a_4 E  \left(K_h \left( u  \right) \Big|c=1, \Delta=1  \right)  \\
 \nonumber
& \geq & a_5 \int_{}^{} K_h(u) f_{U \big |c=1, \Delta=1} udu  \\
 \nonumber
& \geq & a_6  h \int_{}^{} K_h(u) du  
\\
& \geq & a_7 |\Omega| h. 
\end{eqnarray}
Here, $ f_{U \big |c=1, \Delta=1}$ is the conditional density for $U$ defined in \eqref{eq:uu} and the inequalities follow from Assumption~\ref{assu:kernels}. 

Similarly, for some positive constants $a_8$, we have 
\begin{eqnarray}\label{eq:last:cor3}
 \nonumber
 & & \sum_{(j_1, \cdots, j_N) \in \Omega}  K_h \left( \sum_{n=1}^{N}  \|\by_{j_n} -\by_{i_n}\| \right) \left( \sum_{n=1}^{N}  \|\by_{j_n} -\by_{i_n}\| \right)^{\alpha} c_{i_1 \cdots i_N,j_1\cdots j_N}  \\
  \nonumber
 & \geq & 2 |\Omega| E \left( K_h(U_{i_1 \cdots i_N}) U_{i_1 \cdots i_N}^{\alpha} \Big| c_{i_1 \cdots i_N}=1, \Delta=1) \right) \\
  \nonumber
  & \geq & 2 |\Omega| \int_{}^{} K_h(u) u^{\alpha} f_{U_{i_1 \cdots i_N}|c_{i_1 \cdots i_N}=1, \Delta=1} udu \\ 
  & \geq & a_{8} |\Omega| h^{\alpha+1}, 
\end{eqnarray}
where the last inequality uses Assumption~\ref{assu:kernels}.

Further, 
\begin{eqnarray}\label{eq:last:cor4}
\nonumber
 & & \sum_{(j_1, \cdots, j_N) \in \Omega}  K_h^2 (\|y_{i_1, \cdots, i_N} - y_{j_1, \cdots, j_N} \|) c_{i_1 \cdots i_N,j_1\cdots j_N}  \\
 \nonumber
 & \geq & 2 |\Omega| E \left( K_h(U_{i_1 \cdots i_N}) U_{i_1 \cdots i_N} \big| c_{i_1 \cdots i_N}=1, \Delta=1) \right) \\
 & \leq &  a_{9} |\Omega| h. 
\end{eqnarray}
We combine the inequalities \eqref{eq:last:cor1}--\eqref{eq:last:cor4} to get
\begin{eqnarray*}
\max_{i_1\cdots i_N} \sum_{(j_1,\cdots ,j_N ) \in \Omega} (s^h_{i_1\cdots i_N, j_1\cdots j_N})^2  &\leq &   (|\Omega| h)^{-1}, \quad \text{and}\\
\max_{i_1\cdots i_N}   \sum_{(j_1,\cdots ,j_N ) \in \Omega} s^h_{i_1\cdots i_N, j_1j_2\cdots j_N}  (\sum_{n=1}^{N} d (\by_{i_n},\by_{j_n}))^\alpha  &\leq &  h^\alpha.
\end{eqnarray*}
Thus, $\phi_1 =h^\alpha$  and $\phi_2 = (|\Omega| h)^{-1}$, then the desired result immediately follows.
\end{proof}
\vspace{-.5cm}
\bibliography{ref_dcot}
\end{document}